\newcommand{\tnlpt}{\hline&&\\[-10pt]}
\newcommand{\tnlpb}{\\[1pt]\hline}
\newcommand{\tnlp}{\\[1pt]\hline&&\\[-10pt]}
\newcommand{\selfmap}{\mathbin{\scalebox{.85}{%
\lefteqn{\scalebox{.5}{$\blacktriangleleft$}}\raisebox{.34ex}{$\supset$}}}}
\newcommand{\CondInd}[1]{\pperp_{#1}}
\newcommand{\ExpS}[2]{\mathbb{E}_{#1}\left[#2\right]}
\newcommand{\ExpC}[3]{\mathbb{E}_{#1}\left[#2\,|\,#3\right]}
\newcommand{\Per}[1]{\mathrm{Per}(#1)}
\newcommand{\APer}[1]{\mathrm{Aper}(#1)}
\newcommand{\SPer}[2]{\mathrm{Sync\text{-}Per}(#1,#2)}
\newcommand{\Pin}{\Pc_{\mathrm{in}}}
\newcommand{\Pstate}{\Pc_{\mathrm{state}}}
\newcommand{\Pout}{\Pc_{\mathrm{out}}}
\newcommand{\Pcausal}{P^{\mathrm{causal}}(\XS^- \times \US^-)}
\newcommand{\DetSol}{\Sc^{\mathrm{det}}}
\newcommand{\StochSol}{\Sc^{\mathrm{stoch}}}
\newcommand{\StochCausalSol}{\Sc^{\mathrm{c,stoch}}}
\newcommand{\DetOut}{\Oc^{\mathrm{det}}}
\newcommand{\StochOut}{\Oc^{\mathrm{stoch}}}
\newcommand{\StochCausalOut}{\Oc^{\mathrm{c,stoch}}}
\title{Stochastic dynamics learning with state-space systems}
\author{Juan-Pablo Ortega\footnote{Division of Mathematical Sciences, School of Physical and Mathematical Sciences, Nanyang Technological University, Singapore} \qquad Florian Rossmannek$^*$}
\begin{document}

\maketitle

\begin{abstract}
	This work advances the theoretical foundations of reservoir computing (RC) by providing a unified treatment of fading memory and the echo state property (ESP) in both deterministic and stochastic settings. We investigate state-space systems, a central model class in time series learning, and establish that fading memory and solution stability hold generically --- even in the absence of the ESP --- offering a robust explanation for the empirical success of RC models without strict contractivity conditions. In the stochastic case, we critically assess stochastic echo states, proposing a novel distributional perspective rooted in attractor dynamics on the space of probability distributions, which leads to a rich and coherent theory. Our results extend and generalize previous work on non-autonomous dynamical systems, offering new insights into causality, stability, and memory in RC models. This lays the groundwork for reliable generative modeling of temporal data in both deterministic and stochastic regimes.
\end{abstract}

\section{Introduction}

Many tasks arising in engineering and the sciences involve modeling time series data, that is, data points acquired at different points in time over a certain period.
Such datasets could be gathered, for example, from observations of a biological or physical system, measurements of patients' clinical data over time, or prices of stocks, among many others.
This classical task has been treated comprehensively over the last decades, initiated with the introduction of the modern computer.
Lately, those classical methods have been replaced by machine learning methods in many applications \cite{LeCun2015,Schmidhuber2015}.

Broadly speaking, there are two main tasks one may be interested in with a given temporal dataset.
The first one is prediction \cite{HewamalageEtal2021,LimZohren2021}:
can we build a model that processes the temporal data and then predicts future data points (a weather forecast \cite{ArcomanoEtal2022,WiknerEtal2020Chaos}, prognosis of a patient's condition \cite{KourouEtal2015}, next day's stock price \cite{FrancqZakoian2019}, next quarter's GDP of a country \cite{BallarinEtal2024}, etc.\ \cite{LangkvistEtal2014})?
Forecasting requires a model to extract precise features of the temporal dataset to understand how the evolution is governed.
The second task is generation \cite{Graves2014,LuEtal2025}:
can our model generate new temporal data that behaves the same way as the original dataset without being an exact replica?

Data generation has various use cases.
If we can generate reliable data, we can conduct simulations.
There, we are not interested in the specific trajectory seen in the original dataset.
Instead, we intend to recover (physical or stochastic) features of the true underlying system.
With those, we are then able to simulate, say, different weather and climate scenarios \cite{ArcomanoEtal2022,WiknerEtal2020Chaos}.
Scenario simulation enables us to infer factors and confounders of the system.
Concretely, tweaking variables of the model (representing actual variables of the application) and conducting simulations with each constellation uncovers the influence of those variables on the output.
For example, running simulations on a patient's prognosis with varying parameters representing the prescribed dose of a medication yields insights on the implications of raising or lowering it on the probability of recovery \cite{AllamEtal2021}.

Another use case of data generation is to enrich sparse datasets.
Many numerical methods, most notably some machine learning paradigms, are data-hungry:
they require to be trained on large datasets to achieve good performance.
If data is available scarcely, one may use data generation to simulate new synthetic data that inherits the same structure and features and can complement the original dataset.
Subsequently, the data-hungry methods can be run on the new dataset, comprised of the original and the synthetic data \cite{TaylorNitschke2018,WenEtal2021}.
Consider, for example, data gathered in a financial context.
Although historical data may be available in abundance, market conditions, say, 100 years ago do not reflect today's conditions.
This renders old historical data less useful, and the data-subset of more recent, useful data becomes small.
It has been identified as an important task in financial applications to enrich the resulting small dataset for training advanced models \cite{BuehlerEtal2020, TakahashiEtal2019}.

Mathematically, many of these tasks fall in the category of (stochastic) filtering and control problems.
An early model used in that context was that of a Kalman filter \cite{Kalman1960}.
The Kalman filter is a simple yet powerful model, which represents a special linear case of the more general family of models called state-space systems \cite{Kalman1960,Sontag1990}.
These state-space systems form the basis for a range of different models, including recent machine learning ones \cite{GuDao2023,JiangLiLiWang2023JML} for long sequence modeling.
A machine learning paradigm that has proved itself successful in a range of applications is reservoir computing (RC) \cite{JaegerHaas2004, VerstraetenEtal2007, Jaeger2010, Maass2011, MaassNatschMarkram2002}.
This paradigm is based on a feedback loop, which takes a new input at each time step and transforms it together with its previous state into a new state.
Then, one extracts outputs by an additional transformation of the new state.
RC boasts several advantages over competing models.
The reservoir performing the transformation can be implemented by a physical system, outsourcing large parts of the computational costs from a classical computer.
Various physical systems have been used successfully to this end, from ripples on the surface of water \cite{FernandoSojakka2003}, to optoelectronic circuits \cite{AppeltantEtAl2011,LargerEtAl2012}, to mechanical bodies \cite{Nakajima2020}.
The additional transformation that yields the output is usually taken to be linear.
This makes it computationally cheap to train and to execute as well as makes the training of the model scale well as the size of the dataset and the dimensionality of the problem grows \cite{JaegerHaas2004, Jaeger2010}.
These two features of RC become particularly desirable in face of the rapidly growing energy demands of other modern machine learning methods \cite{BogmansEtal2025,JiangSonneEtal2024}.

In addition to its practical benefits, RC enjoys a range of theoretical guarantees.
In modern day machine learning, theoretical guarantees have become ever more desirable due to the recent focus on reliability and explainability in artificial intelligence models.
In the context of RC specifically, two main aspects build the foundation for the supporting theory.
The first, like for classical neural network theory, are universal approximation results, which ensure that the learning task is indeed accurately solvable with an RC model \cite{RC6, RC7, RC12, RC20}.
The second aspect is a reservoir's ability to create reliable state responses to its inputs, commonly known as steady states or echo states \cite{ChuaGreen1976,BoydChua1985,Jaeger2010}.
The theory of echo states of a system is intimately linked to the model's behavior in how it handles its memory of the information it processed.
Referred to as `fading memory', the model is required to forget inputs from the distant past in order to create those echo states \cite{BoydChua1985}.
Folklore has linked the fading of the memory to several related notions describing how both past states' and past inputs' impact on the present state fade over time.
A detailed discussion of the relationship between those various notions can be found in \cite{RC32}.
In this work, we focus on echo states and on fading memory.

Even the notion of fading memory is not defined consistently throughout the literature \cite{BoydChua1985,Manjunath2020ProcA,RC9}.
All commonly accepted definitions are phrased in terms of continuity of the echo states as a function of the input sequence.
But speaking of continuity requires fixing a topology on the space of sequences, which does not admit a canonical choice.
Different choices lead to different notions of fading memory \cite{RC30}.
In this work, we pose a very general definition of fading memory, which captures most other definitions as special cases, including the original one introduced in the seminal work \cite{BoydChua1985}.

Over the last decades, a lot of effort has been invested in finding necessary and sufficient conditions that guarantee the existence of uniqueness of echo states --- the so-called echo state property (ESP) --- for various families of functions, such as echo state networks (ESNs) and state-affine systems \cite{BuehnerYoung2006,YildizJaegerKiebel2012,RC9,QRC1}.
These conditions typically boil down to contractivity requirements.
If the model design of, for example, an ESN takes those findings into consideration, then the mathematical theory gives guarantees for fading memory and, hence, on its successful implementation.
In this paper, we show that \textit{successful implementation can be expected} even if the model design does \textit{not} incorporate guarantees for the ESP.
Namely, we prove that \textit{state-space systems enjoy fading memory and stability in the number of solutions for generic inputs}, independent of the specific model design.
This marks a breakthrough in the mathematical theory for the learning of dynamics.
Systems with more than one solution have previously been studied with the notion of an echo index introduced in \cite{CeniEtal2020PhysD} but the results therein also relied on contractivity assumptions.
Our new result that fading memory holds generically explains in particular that reservoirs do not generate intrinsic chaotic dynamics.
\textit{The chaos exhibited by the reservoir system is solely injected through the chaos of the input dynamics} --- a property that ought to be expected to enable successful learning since intrinsic chaos would pose a serious obstruction.

The theory for RC is well established in the context of deterministic learning tasks.
However, many applications naturally generate stochastic data, which requires the theory to be adapted accordingly, in particular for our understanding of generative models.
Stochastic counterparts of established results have begun to appear recently \cite{RC28} but are far from complete.
This work aims to largely close the gap between the deterministic and the stochastic theory.

Our findings will reveal many similarities but also striking differences between the deterministic and the stochastic settings.
Deterministically, there is a natural way to define echo states as solutions to a state equation of the form $x_t = f(x_{t-1},u_t)$, where $f$ is the state map, $x_t$ are the states, and $u_t$ are the inputs.
These solutions have been linked to the global attractor of the dynamical system encoding the dynamics of iterating the state map \cite{ManjunathJaeger2013,Manjunath2020ProcA}.
In the deterministic case, these two points of view lead to the same mathematical object.

To define stochastic solutions, one considers stochastic processes that satisfy $(X_t,U_t)_t\!=\!(f(X_{t-1},U_t),U_t)_t$.
Here, it is crucial to consider the joint stochastic process since probabilistic dependence structures between inputs and states are one of the aspects that make the stochastic theory richer than the deterministic one.
Contrary to the deterministic case, we now face a choice: requiring the equality to hold in law or almost surely.
The former choice had been made in \cite{RC27,RC28}, whereas the latter choice is prevalent in time series analysis \cite{FrancqZakoian2019}.
In this work, we make a strong case that the latter choice is more natural.
We do so by generalizing the link between solutions and the global attractor of the dynamical system that now encodes the stochastic version of the state map (on the space of probability distributions).
We find that the dynamic point of view \textit{naturally leads to solutions defined by almost sure equality}.

The newly established link between solutions and attractors in the stochastic context enable us to leverage tools from dynamical systems theory.
If the system has the deterministic ESP and fading memory, then stochastic solutions are shown to be induced as push-forwards of the inputs under the deterministic solution map, generalizing a result of \cite{RC28}.
However, the stochastic theory becomes richer in that it admits solutions that elude a functional representation.
We prove a stochastic generalization of the fundamental deterministic result that the ESP implies fading memory on compact state spaces.
Then, refining the study of stochastic solutions, we discuss in detail the notion of causality, which captures the physical intuition that future inputs cannot directly affect present states.
For such causal solutions, we prove that fading memory ensures stability in the number of solutions even in the absence of the ESP.
Our proofs will be based on results about abstract dynamical systems.
From this, it will become clear exactly which are the key properties of state-space systems that enable the interplay of the ESP and fading memory.

We treat the dynamics of state-space systems similarly as in \cite{ManjunathJaeger2013, ManjunathJaeger2014}, greatly expanding and generalizing several of the results established therein.
In particular, \cite{ManjunathJaeger2013} was one of the first works to study state-space systems through the lens of abstract non-autonomous dynamical systems theory.
The subsequent work \cite{ManjunathJaeger2014} by the same authors as \cite{ManjunathJaeger2013} extends their framework to include stochastic inputs.
In this work, we promote a concurrent treatment of the stochastic case, focusing on a distributional point of view that enables clean proofs relying on the same dynamical systems results.
In this context, such a distributional point of view was first considered in \cite{RC27,RC28} and is improved upon in this work.

Finally, the analysis of state-space systems in stochastic filtering and control tasks has previously led to the consideration of fading memory \cite{KantasEtal2015, LindstenEtal2012}.
We relate our new finding on generic fading memory back to the classical filtering task as well as explore the meaning of causality of solutions in this context.

This paper is arranged as follows.
In \cref{sec_det_SSS,sec_stoch_SSS}, we present our findings on deterministic and stochastic state-space systems, respectively.
The proofs of the results in these sections are postponed to \cref{sec_proofs}.
\cref{sec_abstract} develops results on abstract dynamical systems, on which the proofs in \cref{sec_proofs} are based.
\cref{sec_conclusion} concludes.
\cref{app_sec_technical} contains additional technical lemmas, and \cref{sec_app_inference} discusses links to classical stochastic filtering.

\subsection*{Conventions}

We denote by $\Z$ the set of integers, by $\N$ the set of strictly positive integers, $\N_0 = \N \cup \{0\}$, and $\Z_- = \Z \backslash \N$.
We understand the size $\# A$ of a set $A$ as a number in $\N_0 \cup \{ \infty \}$, that is, `$\# A$' does not distinguish between countable infinite and uncountable infinite cardinality.
Given any Cartesian product $A \times B$, the letter $\pi$ with the subscript $A$ (or $B$) denotes the natural projection $\pi_A \colon A \times B \rightarrow A$ (or $\pi_B \colon A \times B \rightarrow B$).
Subsets of topological spaces and products of topological spaces are endowed with the subspace topology and the product topology, respectively, unless explicitly stated otherwise.
All topological spaces are endowed with their Borel sigma-algebra.
For any Hausdorff space $X$, we denote by $P(X)$ the set of all Radon probability measures.
In particular, if $X$ is a Radon space (e.g.\ Polish), then $P(X)$ coincides with the set of all Borel probability measures.
The following table provides an overview of our notation.

\newpage
\centerline{Table 1. Overview of notation.}
\begin{longtable}{lcl}
\tnlpt
Function & \phantom{spacing} Letter \phantom{spacing} & Domain and codomain
\tnlpb\tnlpt
State map & $f$ & $\Xc \times \Uc \rightarrow \Xc$
\tnlp
Readout & $h$ & $\Xc \rightarrow \Yc$
\tnlp
Extended state map & $\Fc$ & $\XS^- \times \US^- \rightarrow \Xc^{\Z_-} \times \Uc^{\Z_-}$
\tnlp
Extended readout & $H$ & $\XS^- \times \US^- \rightarrow \YS^- \times \US^-$
\tnlp
Dynamical system & $\varphi$ & $\XS^- \times \US \rightarrow \XS^- \times \US$
\tnlp
Right-shift operator & $T$ & $\US^- \selfmap$ or $\XS^- \selfmap$ or $\XS^- \times \US^- \selfmap$ or $\XS^- \times \US \selfmap$
\tnlp
Left-shift operator & $\sigma$ & $\US \rightarrow \US$
\tnlp
Truncation & $\tau$ & $\US \rightarrow \US^-$ or $\XS^- \times \US \rightarrow \XS^- \times \US^-$
\tnlp
Right-inv.\ of truncation & $j^+$ & $\XS^- \times \US^- \rightarrow \XS^- \times \US$
\tnlp
Inclusion & $\iota$ & $\XS^- \times \US^- \rightarrow \Xc^{\Z_-} \times \Uc^{\Z_-}$
\tnlp
Projection & $\pi_{\XS^-}$ & $\XS^- \times \US^- \rightarrow \XS^-$ or $\XS^- \times \US \rightarrow \XS^-$
\tnlp
Projection & $\pi_{\US^-}$ & $\XS^- \times \US^- \rightarrow \US^-$ or $\YS^- \times \US^- \rightarrow \US^-$
\tnlp
Projection & $\pi_{\US}$ & $\XS^- \times \US \rightarrow \US$
\tnlp
Deterministic attractor & $\Sc_{\varphi}$ & $\US \rightarrow 2^{\XS^- \times \US}$
\tnlp
Deterministic solutions & $\DetSol$ & $\US^- \rightarrow 2^{\XS^- \times \US^-}$
\tnlp
Deterministic outputs & $\DetOut$ & $\US^- \rightarrow 2^{\YS^- \times \US^-}$
\tnlp
Stoch.\ dynamical system & $\varphi_*$ & $\Pstate \rightarrow \Pstate$
\tnlp
Stoch.\ left-shift operator & $\sigma_*$ & $\Pin \rightarrow \Pin$
\tnlp
Stoch.\ right-shift operator & $T_*$ & $\Pin^- \selfmap$ or $\Pstate^- \selfmap$ or $\Pstate \selfmap$ or $P(\XS^- \times \US) \selfmap$
\tnlp
Stoch.\ projection & $(\pi_{\US^-})_*$ & $\Pstate^- \rightarrow \Pin^-$ or $\Pout^- \rightarrow \Pin^-$
\tnlp
Stoch.\ projection & $(\pi_{\US})_*$ & $\Pstate \rightarrow \Pin$
\tnlp
Stoch. attractor & $\Sc_{\varphi_*}$ & $\Pin \rightarrow 2^{\Pstate}$
\tnlp
(Causal) stoch.\ solutions & $\StochSol$, $\StochCausalSol$ & $\Pin^- \rightarrow 2^{\Pstate^-}$
\tnlp
(Causal) stoch.\ outputs & $\StochOut$, $\StochCausalOut$ & $\Pin^- \rightarrow 2^{\Pout^-}$
\tnlpb
\end{longtable}

\vspace*{0mm}

\section{Deterministic state-space systems}
\label{sec_det_SSS}

Throughout this section, let $\Uc$, $\Xc$, and $\Yc$ be Hausdorff spaces.
Sequences are denoted as underlined letters, e.g.\ $\Seq{u} = (\seq{u}{t})_{t \in \Z_-} \in \Uc^{\Z_-}$.
We denote the right-shift operator $(\seq{u}{t})_t \mapsto (\seq{u}{t-1})_t$ on all left- and bi-infinite sequence spaces by the letter $T$ and the left-shift operator $(\seq{u}{t})_t \mapsto (\seq{u}{t+1})_t$ on bi-infinite sequence spaces by $\sigma$.
Fix backwards shift-invariant subsets $\US^- \subseteq \Uc^{\Z_-}$, $\XS^- \subseteq \Xc^{\Z_-}$, and $\YS^- \subseteq \Yc^{\Z_-}$, that is, $T^{-1}(\US^-) = \US^-$ and likewise for $\XS^-$ and $\YS^-$, and fix a shift-invariant subset $\US \subseteq \Uc^{\Z}$ that is mapped surjectively onto $\US^-$ by the truncation, that is, $\sigma(\US) = \US$ and $\tau(\US) = \US^-$, where $\tau \colon \US \rightarrow \US^-$, $\Seq{u} \mapsto (\seq{u}{t})_{t \in \Z_-}$.
We will use the same letter $\tau$ to also denote the truncation $\XS^- \times \US \rightarrow \XS^- \times \US^-$, $(\Seq{x},\Seq{u}) \mapsto (\Seq{x},\tau(\Seq{u}))$ on the domain $\XS^- \times \US$.
The topologies on these sequence spaces are subject to the following conditions.

\begin{assumption}
\label{assmpt_det_SSS}
	The sets $\US^-$, $\XS^-$, $\YS^-$, and $\US$ are endowed with topologies that are at least as fine as the product topologies and that satisfy the following properties.
\begin{enumerate}[\upshape (i)]\itemsep=0em
\item
The right-shift operators on $\US^-$, $\XS^-$, and $\YS^-$ are continuous, and the left-shift operator on $\US$ is a homeomorphism.

\item
The truncation map $\US \rightarrow \US^-$ is continuous and admits a continuous right-inverse $\US^- \rightarrow \US$.

\item
The concatenation map $\XS^- \times \Xc \rightarrow \XS^-$, $(\Seq{x},x) \mapsto (\dots,\seq{x}{-1},\seq{x}{0},x)$ is continuous, as is the analogous map $\US^- \times \Uc \rightarrow \US^-$.

\item
For any $\Seq{u},\Seq{u}' \in \US^-$, the sequence $\Seq{u}^n := (\dots,\seq{u}{-1}',\seq{u}{0}',\seq{u}{-n},\dots,\seq{u}{0}) \in \US^-$ converges to $\Seq{u}$ as $n \rightarrow \infty$.
\end{enumerate}
\end{assumption}

\begin{example}
	Admissible topologies for $\US^-$ and $\US$ include the product topology and the topologies induced by weighted $\ell^p$-norms, $p \in [1,\infty]$.
The same topologies are admissible for $\XS^-$ as well as the topologies induced by unweighted $\ell^p$-norms, $p \in [1,\infty]$.
The topology induced by a weighted $\ell^{\infty}$-norm is the one used by Boyd and Chua in their seminal work on fading memory \cite{BoydChua1985}.
For further details on these particular choices of topology, we refer to \cite{RC9,RC30}.
\end{example}

We remark that all sequence spaces are guaranteed to be Hausdorff since the base spaces are Hausdorff and the topologies on the sequence spaces are, by hypothesis, at least as fine as the product topologies.

\subsection{Dynamics on sequence spaces}
\label{subsec_dyn_sequence}

Consider a continuous state map $f \colon \Xc \times \Uc \rightarrow \Xc$.
A {\bfi solution of the state equation} is a pair of sequences $(\Seq{x},\Seq{u}) \in \XS^- \times \US^-$ that satisfies $\seq{x}{t} = f(\seq{x}{t-1},\seq{u}{t})$ for all $t \in \Z_-$.
In this case, we also call $\Seq{x}$ a {\bfi solution for the input} $\Seq{u}$.
Denote the set of all solutions by (`det' for deterministic)
\begin{equation*}
	\DetSol
	:= \{ (\Seq{x},\Seq{u}) \in \XS^- \times \US^- \colon \seq{x}{t} = f(\seq{x}{t-1},\seq{u}{t}) \text{ for all } t \in \Z_- \}.
\end{equation*}
Also consider the dynamical system $\varphi \colon \XS^- \times \US \rightarrow \XS^- \times \US$ given by $\varphi(\Seq{x},\Seq{u}) = ((\Seq{x},f(\seq{x}{0},\seq{u}{1})),\sigma(\Seq{u}))$, which is an extension to the input sequence space of the reservoir flow considered in \cite{RC9}.
Note that $\varphi$ is continuous by \cref{assmpt_det_SSS}.
In an abstract dynamical systems sense, a {\bfi solution of} $\varphi$ is a point through which a bi-infinite orbit of $\varphi$ passes \cite{KloedenRasmussen2011}.
Since the right-shift operator $T$ is a left-inverse of $\varphi$, the set $\Sc_{\varphi}$ of solutions of $\varphi$ can be characterized as follows;
\begin{equation*}
	\Sc_{\varphi}
	= \{ (\Seq{x},\Seq{u}) \in \XS^- \times \US \colon \varphi^n(T^n(\Seq{x},\Seq{u})) = (\Seq{x},\Seq{u}) \text{ for all } n \in \N \}.
\end{equation*}
In fact, $\Sc_{\varphi} = \bigcap_{n \in \N_0} \varphi^n(\XS^- \times \US)$ is the global attractor of $\varphi$.
We will review attractors and solutions of abstract dynamical systems in more detail later in \cref{sec_abstract}.
For the state equation, we make the following important observation, which tells us that to understand the solutions of the state equation we may study the attractor of $\varphi$.
This paves the way to leveraging tools from \textit{autonomous} dynamical systems theory.

\begin{proposition}
\label{prop_det_sol_attractor}
	The set of solutions of the state equation satisfies $\DetSol = \tau(\Sc_{\varphi})$ and $\Sc_{\varphi} = \tau^{-1}(\DetSol)$.
\end{proposition}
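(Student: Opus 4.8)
The plan is to identify $\Sc_{\varphi}$ explicitly and then read off both identities by bookkeeping with the truncation. Concretely, I would show that
\begin{equation*}
	\Sc_{\varphi} = E := \{ (\Seq{x},\Seq{u}) \in \XS^- \times \US \colon \seq{x}{t} = f(\seq{x}{t-1},\seq{u}{t}) \text{ for all } t \in \Z_- \},
\end{equation*}
using only the fixed-point characterization of $\Sc_{\varphi}$ given above and the identity $T \circ \varphi = \mathrm{id}$. Once this is done, the two claimed equalities follow formally. I would prove $\Sc_{\varphi} = E$ by two inclusions, both driven by shift-invariance arguments rather than by a heavy explicit computation of $\varphi^n(T^n(\cdot))$.

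For $\Sc_{\varphi} \subseteq E$, the starting point is a single-step computation: unravelling the concatenation and the left shift $\sigma$ hidden in $\varphi$ shows that for any $(\Seq{a},\Seq{b})$ the pair $\varphi(T(\Seq{a},\Seq{b}))$ has input component $\Seq{b}$ and state component equal to $\Seq{a}$ at all indices $t \le -1$, with $0$-th entry $f(\seq{a}{-1},\seq{b}{0})$; hence $\varphi(T(\Seq{a},\Seq{b})) = (\Seq{a},\Seq{b})$ is equivalent to the single equation $\seq{a}{0} = f(\seq{a}{-1},\seq{b}{0})$. Next I would observe that $\Sc_{\varphi}$ is invariant under $T$: for $(\Seq{x},\Seq{u}) \in \Sc_{\varphi}$, applying $T$ to the defining identity $\varphi^{n+1}(T^{n+1}(\Seq{x},\Seq{u})) = (\Seq{x},\Seq{u})$ and using $T\varphi^{n+1} = \varphi^n$ yields $\varphi^n(T^n(T(\Seq{x},\Seq{u}))) = T(\Seq{x},\Seq{u})$ for all $n$, so $T^k(\Seq{x},\Seq{u}) \in \Sc_{\varphi}$ for every $k \in \N_0$. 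Applying the $n=1$ instance of the defining condition to each shifted point $T^k(\Seq{x},\Seq{u})$, whose entries are $\seq{x}{t-k}$ and $\seq{u}{t-k}$, produces exactly $\seq{x}{-k} = f(\seq{x}{-k-1},\seq{u}{-k})$, the state equation at time $-k$; letting $k$ range over $\N_0$ gives $(\Seq{x},\Seq{u}) \in E$.

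For the reverse inclusion $E \subseteq \Sc_{\varphi}$, I would first note that $E$ is forward-invariant under $T$, since the state equation for $T(\Seq{x},\Seq{u})$ at time $t$ is precisely the one for $(\Seq{x},\Seq{u})$ at time $t-1$, and that every point of $E$ is a fixed point of $\varphi \circ T$ by the single-step computation. Then for $(\Seq{x},\Seq{u}) \in E$ I would telescope: writing $\varphi^n T^n(\Seq{x},\Seq{u}) = \varphi^{n-1}(\varphi T)(T^{n-1}(\Seq{x},\Seq{u}))$ and using that $T^{n-1}(\Seq{x},\Seq{u}) \in E$ is fixed by $\varphi T$, the expression collapses to $\varphi^{n-1} T^{n-1}(\Seq{x},\Seq{u})$, and iterating down to $n = 0$ gives $\varphi^n T^n(\Seq{x},\Seq{u}) = (\Seq{x},\Seq{u})$ for all $n$, so $(\Seq{x},\Seq{u}) \in \Sc_{\varphi}$.

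Finally I would deduce the stated identities. Since membership in $E = \Sc_{\varphi}$ depends on $\Seq{u}$ only through its negative-time entries, i.e.\ through $\tau(\Seq{u})$, the condition $(\Seq{x},\Seq{u}) \in \Sc_{\varphi}$ is equivalent to $(\Seq{x},\tau(\Seq{u})) \in \DetSol$, which is exactly $\Sc_{\varphi} = \tau^{-1}(\DetSol)$. Applying $\tau$ and invoking that $\tau \colon \XS^- \times \US \to \XS^- \times \US^-$ is surjective (which follows from the standing hypothesis $\tau(\US) = \US^-$), so that $\tau(\tau^{-1}(\DetSol)) = \DetSol$, then yields $\DetSol = \tau(\Sc_{\varphi})$. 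The main obstacle throughout is purely the index bookkeeping, namely correctly tracking the interplay of the right shift $T$, the bi-infinite left shift $\sigma$ inside $\varphi$, the concatenation convention, and the truncation $\tau$; there is no deep dynamical content, and once the single-step computation and the two shift-invariance observations are secured, the rest is formal.
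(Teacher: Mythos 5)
Your proposal is correct and follows essentially the same route as the paper: the same single-step observation that $\varphi\circ T$ fixes a point exactly when the state equation holds at time $0$, the same forward-invariance of the two sets under $T$ to propagate this to all negative times, and the same telescoping/induction to show $\varphi^n\circ T^n$ is the identity on $\tau^{-1}(\DetSol)$. Packaging the argument as the single set identity $\Sc_{\varphi}=\tau^{-1}(\DetSol)$ and then reading off $\DetSol=\tau(\Sc_{\varphi})$ from surjectivity of $\tau$ is only a cosmetic reorganization of the paper's proof.
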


The dynamical system $\varphi$ processes bi-infinite input sequences in its second argument.
The reason to work with full bi-infinite input sequences is that we do not prescribe a rule by which the inputs are generated.
Allowing $\varphi$ to access the next input $\seq{u}{1}$ in the input sequence $\Seq{u}$ covers the general case of arbitrary inputs.
Let us spell out an important special case.
More precisely, suppose (only for the remainder of this subsection) that the inputs are generated by an invertible dynamical system $\phi \colon \Mc \rightarrow \Mc$, masked by an observation function $\omega \colon \Mc \rightarrow \Uc$, which is a customary setup in the literature \cite{BerryDas2023, RC18, RC21}.
Then, one considers the autonomous dynamical system $\psi \colon \Xc \times \Mc \rightarrow \Xc \times \Mc$ given by $\psi(x,p) = (f(x,\omega(\phi(p))),\phi(p))$, which evolves on the base spaces instead of the sequence spaces.
To recover this special case, we take $\XS^- = \Xc^{\Z_-}$ to be the entire sequence space and $\US = \overline{\omega}(\Mc)$ to be the image of the map $\overline{\omega} \colon \Mc \rightarrow \Uc^{\Z}$, $p \mapsto (\omega(\phi^t(p)))_{t \in \Z}$.
With these choices, we ensure that $\Sc_{\psi} = \{ (\seq{x}{0},p) \colon (\Seq{x},\overline{\omega}(p)) \in \Sc_{\varphi} \}$, where $\Sc_{\psi}$ is the set of solutions of $\psi$, which we recall are points through which a bi-infinite orbit of $\psi$ passes.
Conversely, $\Sc_{\varphi} = \{ (\Seq{x},\Seq{u}) \colon (\seq{x}{t},p_t) \in \Sc_{\psi} \text{ for some } p_t \in \overline{\omega}^{-1}(\sigma^t(\Seq{u})) \text{ for all } t \in \Z_- \}$.
Furthermore, if $\zeta \colon \US \rightarrow \XS^-$ is a generalized synchronization (GS, see \cref{subsec_skewproducts,subsec_bundlesystems} and \cite{AmigoEtal2024, LuHuntOtt2018Chaos, KocarevParlitz1996}) for $\varphi$, meaning that we have the semi-conjugacy $\varphi \circ (\zeta \times \mathrm{id}_{\US}) = (\zeta \times \mathrm{id}_{\US}) \circ \sigma$, then $\zeta_0 \colon \Mc \rightarrow \Xc$, $p \mapsto (\zeta \circ \overline{\omega}(p))_0$ is a GS for $\psi$, meaning that we have the semi-conjugacy $\psi \circ (\zeta_0 \times \mathrm{id}_{\Mc}) = (\zeta_0 \times \mathrm{id}_{\Mc}) \circ \phi$.
Conversely, if $\zeta_0 \colon \Mc \rightarrow \Xc$ is a GS for $\psi$, then the set $\zeta_0(\overline{\omega}^{-1}(\Seq{u}))$ is a singleton for all $\Seq{u} \in \US$ even if $\overline{\omega}$ is not injective, and a GS for $\varphi$ can be recovered by $\zeta \colon \US \rightarrow \XS^-$, $\Seq{u} \mapsto (\zeta_0(\overline{\omega}^{-1}(\sigma^t(\Seq{u}))))_{t \in \Z_-}$.
We point out that $\zeta$ synchronizes the state-space dynamics with the `dynamics' of the observations, whereas $\zeta_0$ synchronizes with the true dynamics of the underlying system $\phi$.
In particular, notice that $\zeta$ is injective if $\zeta_0$ is injective.
In this case, if $\eta_0 \colon \zeta_0(\Mc) \rightarrow \Mc$ is the left-inverse of $\zeta_0$, then $\eta \colon \zeta(\US) \rightarrow \US$ given by $\eta(\Seq{x}) = (\omega \circ \eta_0(\seq{x}{t}))_{t \in \Z}$ is the left-inverse of $\zeta$.
However, injectivity of $\zeta$ does not necessarily transfer to $\zeta_0$.
This injectivity is commonly regarded as `learnability' since the existence of the inverse means that we can learn a readout that returns the next observation from the current state of the state-space system.

If the dynamical system $\phi$ generating the inputs satisfies the hypothesis of Takens' embedding theorem \cite{Takens1981} and $\omega$ is a generic observation function, then the map $\Seq{\omega} \colon \Mc^{\Z} \rightarrow \US$, $\Seq{p} \mapsto (\omega(\seq{p}{t}))_{t \in \Z}$ is injective on the set $\MS := \{ (\phi^t(p))_{t \in \Z} \colon p \in \Mc \}$.
In this case, we can regard $\varphi$ as a bundle map over the left-shift on $\MS$ on the bundle $\pi' := \Seq{\omega}^{-1} \circ \pi_{\US} \colon \XS^- \times \US \rightarrow \MS$ (see \cref{subsec_bundlesystems}), and a GS for $\varphi$ synchronizes the state-space dynamics also with the true dynamics of the underlying system instead of the observations.
The various maps and semi-conjugacies are summarized in the commutative diagrams in \cref{fig_GS_sss}.

\begin{figure}
\begin{tikzcd}[row sep = 4em, column sep = 4em, every label/.append style = {font = \normalsize}]
	\zeta(\US) \arrow[bend right = 50]{dd}{\eta} \arrow[dashed]{r} \pgfmatrixnextcell \arrow[swap, bend left = 50]{dd}{\eta} \zeta(\US)
	\\
	\arrow[swap]{u}{\pi_{\XS^-}} \Sc_{\varphi} \arrow[swap, shift right=1]{d}{\pi_{\US}} \arrow{r}{\varphi} \pgfmatrixnextcell \Sc_{\varphi} \arrow[shift left=1]{d}{\pi_{\US}} \arrow{u}{\pi_{\XS^-}}
	\\
	\US \arrow[swap, shift right=1]{u}{\zeta \!\times\! \mathrm{id}_{\US}} \arrow{r}{\sigma} \pgfmatrixnextcell \US \arrow[shift left=1]{u}{\zeta \!\times\! \mathrm{id}_{\US}}
	\\
	\Mc \arrow{u}{\overline{\omega}} \arrow{r}{\phi} \pgfmatrixnextcell \Mc \arrow{u}{\overline{\omega}}
\end{tikzcd}
\hfill%
\begin{tikzcd}[row sep = 4em, column sep = 4em, every label/.append style = {font = \normalsize}]
	\phantom{\US} \pgfmatrixnextcell \phantom{\US}
	\\
	\Mc \arrow[dashed]{u} \arrow[shift left=1]{d}{\zeta_0 \!\times\! \mathrm{id}_{\Mc}} \arrow{r}{\phi} \pgfmatrixnextcell \Mc \arrow[dashed]{u} \arrow[swap, shift right=1, near start]{d}{\zeta_0 \!\times\! \mathrm{id}_{\Mc}}
	\\
	\arrow{d}{\pi_{\Xc}} \Sc_{\psi} \arrow[shift left=1]{u}{\pi_{\Mc}} \arrow{r}{\psi} \pgfmatrixnextcell \Sc_{\psi} \arrow[swap, shift right=1]{u}{\pi_{\Mc}} \arrow[swap]{d}{\pi_{\Xc}}
	\\
	\zeta_0(\Mc) \arrow[swap, bend left = 50]{uu}{\eta_0} \arrow[dashed]{r} \pgfmatrixnextcell \arrow[bend right = 50]{uu}{\eta_0} \zeta_0(\Mc)
\end{tikzcd}
\hfill%
\begin{tikzcd}[row sep = 4em, column sep = 4em, every label/.append style = {font = \normalsize}]
	\zeta'(\US) \arrow[bend right = 50]{dd}{\eta'} \arrow[dashed]{r} \pgfmatrixnextcell \arrow[swap, bend left = 50]{dd}{\eta'} \zeta'(\US)
	\\
	\arrow[swap]{u}{\pi_{\XS^-}} \Sc_{\varphi} \arrow[swap, shift right=1]{d}{\pi'} \arrow{r}{\varphi} \pgfmatrixnextcell \Sc_{\varphi} \arrow[shift left=1]{d}{\pi'} \arrow{u}{\pi_{\XS^-}}
	\\
	\MS \arrow[swap, shift right=1]{u}{\zeta' \!\times\! \Seq{\omega}} \arrow{r}{\sigma} \pgfmatrixnextcell \MS \arrow[shift left=1]{u}{\zeta' \!\times\! \Seq{\omega}}
	\\
	\Mc \arrow{u}{\mathrm{Orb}_{\phi}} \arrow{r}{\phi} \pgfmatrixnextcell \Mc \arrow{u}{\mathrm{Orb}_{\phi}}
\end{tikzcd}%
\caption{
Commutative diagrams depicting the relations in \cref{subsec_dyn_sequence}.
The dashed horizontal arrows depict maps induced by $\eta$, $\eta_0$, and $\eta'$ by a diagram chase.
The dashed vertical arrows indicate that the middle diagram could be added at the bottom of either the left or right diagram to create larger commutative diagrams.}
\label{fig_GS_sss}
\end{figure}
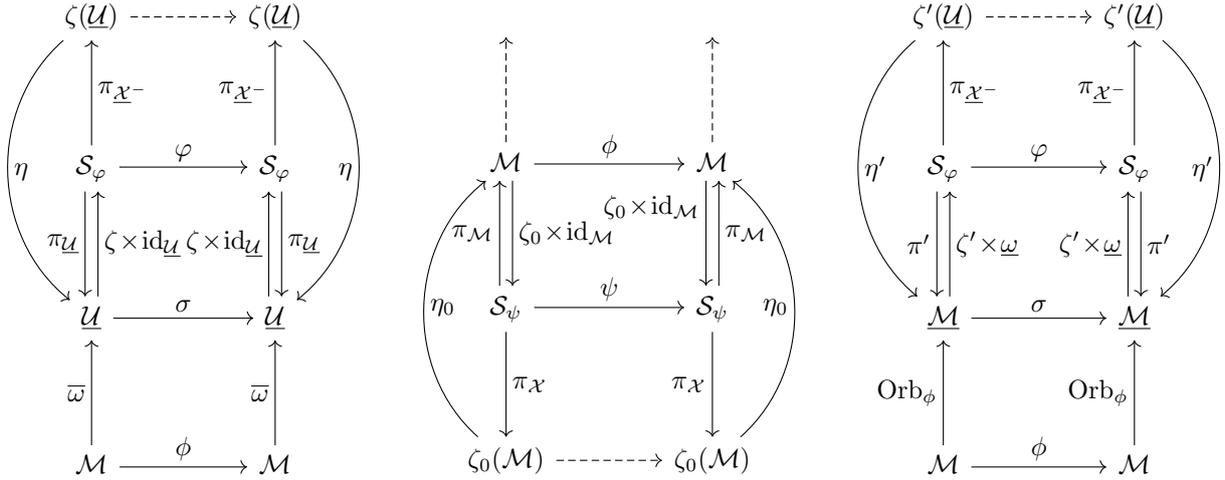

\subsection{Echo states}

In addition to the state map $f$, consider a readout $h \colon \Xc \rightarrow \Yc$ for which $\Seq{x} \mapsto (h(\seq{x}{t}))_{t \in \Z_-}$ poses a well-defined and continuous map $\XS^- \rightarrow \YS^-$.
An {\bfi output} of the state-space system is a pair of sequences $(\Seq{y},\Seq{u}) \in \YS^- \times \US^-$ for which there exists a solution $\Seq{x} \in \XS^-$ for the input $\Seq{u}$ that satisfies $h(\seq{x}{t}) = \seq{y}{t}$ for all $t \in \Z_-$.
Naturally, we also call $\Seq{y}$ an output for the input $\Seq{u}$.

Solutions of state-space systems can be seen as special cases of outputs of state-space systems simply by taking the identity as readout.
However, it is instructive to treat solutions and outputs separately to emphasize the connection of solutions with the attractor of the dynamical system exhibited in the previous section.

\begin{definition}
	We say that the state-space system has the {\bfi echo state property (ESP)} if there exists a unique solution $\Seq{x} \in \XS^-$ for any given input $\Seq{u} \in \US^-$;
and we say that it has the {\bfi ESP in the outputs} if there exists a unique output $\Seq{y} \in \YS^-$ for any given input $\Seq{u} \in \US^-$.
\end{definition}

Clearly, if the state-space system has the ESP, then it also has the ESP in the outputs.
We had introduced the set $\DetSol$ of solutions in the previous section.
Analogously, denote the set of all outputs by
\begin{equation*}
	\DetOut
	:= \{ H(\Seq{x},\Seq{u}) \in \YS^- \times \US^- \colon (\Seq{x},\Seq{u}) \in \DetSol \},
\end{equation*}
where $H \colon \XS^- \times \US^- \rightarrow \YS^- \times \US^-$ denotes the extended readout $(\Seq{x},\Seq{u}) \mapsto ((h(\seq{x}{t}))_{t \in \Z_-},\Seq{u})$.
In the presence of the ESP, we obtain well-defined maps $\US^- \rightarrow \XS^-$ and $\US^- \rightarrow \YS^-$ that associate to any given input its unique solution and output, respectively.
Without the ESP, we instead consider the set-valued map $\Seq{u} \mapsto \{ (\Seq{x},\Seq{u}) \colon (\Seq{x},\Seq{u}) \in \DetSol \}$.
By a slight abuse of notation, we denote this set-valued map by $\Seq{u} \mapsto \DetSol(\Seq{u})$ since it is exactly the fiber map $\Seq{u} \mapsto \DetSol \cap \pi_{\US^-}^{-1}(\Seq{u})$ of the set $\DetSol$.
We similarly regard $\DetOut$ as a set-valued fiber map.

As motivated in the introduction, the ESP is closely linked to the notion of fading memory \cite{BoydChua1985,RC30}, which we introduce next.
The concept of fading memory is, in fact, formalized by \cref{assmpt_det_SSS}.(iv).
Indeed, fading memory is typically realized as a property of the topology on the domain of a filter, with respect to which continuity is required to hold.
Here, we possibly work with set-valued maps, with which we can state the most general definition, following \cite{Manjunath2020ProcA}.
The notion of hemi-continuity generalizes the concept of continuity to set-valued maps.
A set-valued map $S \colon \Vc \rightarrow 2^{\Wc}$, where $\Vc$ and $\Wc$ are topological spaces, is hemi-continuous at a point $v \in \Vc$ if for any two open sets $W,W' \subseteq \Wc$ with $W \cap S(v) \neq \emptyset$ and $S(v) \subseteq W'$ there exists an open neighborhood $V \subseteq \Vc$ of $v$ such that $W \cap S(v') \neq \emptyset$ and $S(v') \subseteq W'$ for all $v' \in V$.

\begin{definition}
	We say that the state-space system has the {\bfi fading memory property (FMP)} at a given input $\Seq{u} \in \US^-$ if $\DetSol$ is hemi-continuous at $\Seq{u}$.
We denote the set of all points at which the state-space system has the FMP by $D(\DetSol)$.
We say that the state-space system has the FMP if $D(\DetSol) = \US^-$.
The {\bfi FMP in the outputs} and the set $D(\DetOut)$ are defined analogously.
\end{definition}

That the ESP implies the FMP has been claimed and proved before \cite{Jaeger2010, Manjunath2020ProcA}.
However, \cite{Jaeger2010} crucially assumed compactness not just of the state space but also of the input space.
The most general result thus far has been \cite[Corollary 3.3]{Manjunath2020ProcA}.
Our next result is a slight generalization thereof.

\begin{theorem}
\label{thrm_det_ESP_FMP}
	Suppose $\XS^-$ is compact.
If the state-space system has the ESP (in the outputs), then it has the FMP (in the outputs).
\end{theorem}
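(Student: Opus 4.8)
The plan is to reduce both claims to the continuity of a single-valued selection and then invoke a closed-graph argument powered by the compactness of $\XS^-$. I first treat the case of solutions. Under the ESP the set $\DetSol$ is precisely the graph (with coordinates swapped) of the solution map $\US^- \rightarrow \XS^-$, $\Seq{u} \mapsto \Seq{x}$, that assigns to each input its unique solution; equivalently, the projection $\pi_{\US^-} \colon \DetSol \rightarrow \US^-$ is a continuous bijection (surjective by existence, injective by uniqueness). The first step is to record that $\DetSol$ is closed in $\XS^- \times \US^-$: since the topologies are at least as fine as the product topologies, each coordinate evaluation is continuous, so $(\Seq{x},\Seq{u}) \mapsto (\seq{x}{t}, f(\seq{x}{t-1},\seq{u}{t}))$ is continuous into $\Xc \times \Xc$, and the constraint at time $t$ is the preimage of the diagonal, which is closed because $\Xc$ is Hausdorff; intersecting over $t \in \Z_-$ gives closedness.

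The second, and central, step is to upgrade the continuous bijection to a homeomorphism. Because $\XS^-$ is compact, the projection $\pi_{\US^-} \colon \XS^- \times \US^- \rightarrow \US^-$ is a closed map (projection along a compact factor), hence so is its restriction to the closed set $\DetSol$. A closed continuous bijection is a homeomorphism, so $(\pi_{\US^-}|_{\DetSol})^{-1} \colon \US^- \rightarrow \DetSol$ is continuous; this map is exactly $\Seq{u} \mapsto (\Seq{x},\Seq{u})$ with $\Seq{x}$ the unique solution. Finally, for a singleton-valued fiber map, hemi-continuity at $\Seq{u}$ is equivalent to continuity of this selection: given open $W,W'$ that meet and contain the singleton $\DetSol(\Seq{u})$ respectively, the neighborhood pulled back from $W \cap W'$ does the job. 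Thus $\DetSol$ is hemi-continuous everywhere, i.e.\ the FMP holds.

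For the FMP in the outputs the mechanism is identical, with one extra wrinkle caused by the fact that $\YS^-$ need not be compact. Here the relevant set is $\DetOut$, and under the ESP in the outputs the projection $\pi_{\US^-} \colon \DetOut \rightarrow \US^-$ is again a continuous bijection onto $\US^-$. The step I expect to require the most care is showing that $\DetOut$ is closed, since it is the continuous image $H(\DetSol)$ and continuous images of closed sets need not be closed. I would obtain this by writing $\DetOut$ as the projection onto $\YS^- \times \US^-$ of the set $\{(\Seq{x},\Seq{y},\Seq{u}) \colon (\Seq{x},\Seq{u}) \in \DetSol \text{ and } \seq{y}{t} = h(\seq{x}{t}) \text{ for all } t\}$, which is closed, being the intersection of the preimage of the closed set $\DetSol$ with the closed graph of the continuous map $\hat h \colon \XS^- \rightarrow \YS^-$, $\Seq{x} \mapsto (h(\seq{x}{t}))_{t \in \Z_-}$ (closed because $\YS^-$ is Hausdorff), and then projecting out the compact factor $\XS^-$, which is a closed operation. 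Since $\DetOut \subseteq \hat h(\XS^-) \times \US^-$ with $\hat h(\XS^-)$ compact, I restrict the codomain to this compact factor and repeat the closed-projection/homeomorphism step verbatim, concluding that the output selection is continuous and hence $\DetOut$ is hemi-continuous. The statement for solutions is in fact the special case $h = \mathrm{id}_{\Xc}$, so both assertions flow from the same argument, whose only genuine input is the compactness of $\XS^-$.
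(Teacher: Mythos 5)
Your proof is correct, and it reaches the conclusion by a more elementary route than the paper's. The paper first proves, \emph{without} any ESP assumption, that $\DetSol$ is upper hemi-continuous: it notes that $\DetSol$ is closed and that compactness of $\XS^-$ makes $\pi_{\US^-}$ proper, and then invokes the general set-valued statement of \cref{prop_closed_upper_hemi} (itself proved by a net/cluster-point argument via \cref{lemma_proper}); the output case is then dispatched by observing that $\DetOut = H(\DetSol)$ is the composition of an upper hemi-continuous map with a continuous map, so closedness of $\DetOut$ is never needed. You instead exploit single-valuedness from the outset: the ESP turns $\pi_{\US^-}|_{\DetSol}$ into a continuous closed bijection (closedness coming from the closed-projection theorem along the compact factor $\XS^-$), hence a homeomorphism, and continuity of the resulting selection is equivalent to hemi-continuity for singleton-valued fiber maps. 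For the outputs you must then establish that $\DetOut$ is itself closed, which you do correctly by realizing it as the projection, along the compact factor $\XS^-$, of the closed set cut out by $\DetSol$ and the graph of $\hat h$, and by replacing $\YS^-$ with the compact image $\hat h(\XS^-)$ before repeating the homeomorphism step. What your route buys is self-containedness and a structural byproduct the paper does not record (closedness of $\DetOut$, and that the solution and output selections are homeomorphisms onto their graphs): you need only two textbook facts and none of the set-valued machinery. What it gives up is generality: the paper's intermediate result -- unconditional upper hemi-continuity of $\DetSol$ and $\DetOut$ -- is precisely what is recycled in the proof of \cref{thrm_det_generic_FMP}, where no ESP is available, so your argument proves the present theorem but could not be reused there.
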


\begin{example}
\label{ex_linear_SSS}
	In general, the assumption that $\XS^-$ be compact cannot be dropped.
Let $\US^- = \ell^{\infty}(\R^d)$ and $\XS^- = \ell^{\infty}(\R^n)$ with the product topologies.
Consider a matrix $A \in \R^{n \times n}$ with a spectral radius strictly smaller than 1 and a matrix $B \in \R^{n \times d}$.
The linear state-space system driven by $f(x,u) = Ax + Bu$ has the ESP;
indeed, $\pi_{\XS^-}(\DetSol(\Seq{u})) = \{ ( \sum_{s \leq 0} A^{-s}B\seq{u}{t+s})_{t \leq 0} \}$.
However, since $\US^-$ and $\XS^-$ are endowed with the product topologies, the map $\pi_{\XS^-} \circ \DetSol \colon \US^- \rightarrow \XS^-$ is known to be continuous if and only if $A$ is nilpotent \cite[Proposition 11]{RC30}.
\end{example}

Noteworthy in \cref{thrm_det_ESP_FMP} is that neither compactness of the state sequence space nor the ESP depend on the choice of topology on the input sequence space;
yet, together they imply the FMP, which explicitly depends on that choice.
In particular, \cref{thrm_det_ESP_FMP} holds for the coarsest topology that we admit for defining the FMP, namely the product topology.
We take note of this consequence in the corollary below.

\begin{corollary}
	Let $A \colon \US^- \rightarrow \YS^-$ be a function, and suppose there exist a continuous state map and a continuous readout with the ESP in the outputs such that $A(\Seq{u})$ is the output for the input $\Seq{u}$, for all $\Seq{u} \in \US^-$.
If the state sequence space on which the state-space system evolves can be taken compact, then $A$ is continuous with respect to the product topology on $\US^-$.
\end{corollary}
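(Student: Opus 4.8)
The plan is to deduce the corollary from \cref{thrm_det_ESP_FMP} by using the freedom we have in the choice of topology on the input sequence space. First I would endow $\US$, $\US^-$, and $\YS^-$ with their product topologies, which are admissible in the sense of \cref{assmpt_det_SSS} (as recorded in the example following that assumption) and may be imposed while retaining on $\XS^-$ whichever topology makes it compact, since the conditions in \cref{assmpt_det_SSS} decouple across the state and input/output sequence spaces. The key observation---already highlighted in the remark preceding the corollary---is that neither the hypothesis that $\XS^-$ can be taken compact nor the ESP in the outputs refers to the topology on $\US^-$: the former concerns $\XS^-$ alone, and the latter is the purely set-theoretic statement that each input admits exactly one output. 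Hence both hypotheses persist after the switch to the product topology, and \cref{thrm_det_ESP_FMP} applies with this choice, yielding that the system has the FMP in the outputs relative to the product topology; that is, the fiber map $\DetOut \colon \US^- \rightarrow 2^{\YS^- \times \US^-}$ is hemi-continuous at every $\Seq{u} \in \US^-$.

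Second, I would upgrade this hemi-continuity to genuine continuity of $A$ by exploiting that $\DetOut$ is singleton-valued. Indeed, the ESP in the outputs forces $\DetOut(\Seq{u})$ to be a single point, which by hypothesis equals $(A(\Seq{u}),\Seq{u})$; writing $g \colon \Seq{u} \mapsto (A(\Seq{u}),\Seq{u})$, we thus have $\DetOut(\Seq{u}) = \{ g(\Seq{u}) \}$ for all $\Seq{u}$. To see that $g$ is continuous at an arbitrary $\Seq{u}$, take any open $W' \subseteq \YS^- \times \US^-$ with $g(\Seq{u}) \in W'$, so that $\DetOut(\Seq{u}) \subseteq W'$, and invoke the definition of hemi-continuity with this $W'$ together with $W = \YS^- \times \US^-$, for which $W \cap \DetOut(\Seq{u}) \neq \emptyset$ holds trivially. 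This provides an open neighborhood $V$ of $\Seq{u}$ such that $\{ g(\Seq{u}') \} = \DetOut(\Seq{u}') \subseteq W'$, i.e.\ $g(\Seq{u}') \in W'$, for all $\Seq{u}' \in V$. Thus $g$ is continuous, and composing with the projection $\pi_{\YS^-}$ gives continuity of $A = \pi_{\YS^-} \circ g$ with respect to the product topology on $\US^-$.

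The main obstacle is conceptual and lies in the first step: one must argue carefully that passing to the product topology is legitimate, namely that it satisfies \cref{assmpt_det_SSS} and that compactness of $\XS^-$ and the ESP in the outputs are truly independent of the topology on $\US^-$. Once this is in place, \cref{thrm_det_ESP_FMP} carries the analytic content, and the passage from hemi-continuity of a singleton-valued map to continuity of the associated single-valued map is routine. A minor point to check along the way is that the readout-induced map $\XS^- \rightarrow \YS^-$ stays continuous when $\YS^-$ carries the product topology; this holds because each coordinate $\Seq{x} \mapsto h(\seq{x}{t})$ is continuous, by continuity of $h$ and of the coordinate projections on $\XS^-$ (the latter being continuous since the topology on $\XS^-$ is at least as fine as the product topology).
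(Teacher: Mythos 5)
Your proof is correct and follows essentially the paper's own route: the corollary is derived directly from the remark preceding it, namely that neither compactness of the state sequence space nor the ESP in the outputs depends on the choice of topology on the input sequence space, so \cref{thrm_det_ESP_FMP} may be invoked with the product topology on $\US^-$ (and $\US$), after which hemi-continuity of the singleton-valued map $\DetOut$ is just ordinary continuity and projecting onto $\YS^-$ finishes the argument. One minor observation: switching $\YS^-$ to the product topology is unnecessary and slightly weakens the conclusion when the given topology on $\YS^-$ is strictly finer -- the continuity of the readout-induced map $\XS^- \rightarrow \YS^-$, and hence of $H$, is unaffected by changing only the input topology, so you may keep $\YS^-$ with its original topology and conclude continuity of $A$ into that (possibly finer) topology.
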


It is known that linear time-invariant filters $\ell^{\infty}(\R) \rightarrow \ell^{\infty}(\R)$ that are continuous with respect to a weighted sup-norm can be realized by a linear state-space system with the ESP in the outputs \cite{RC16}.
However, the state space on which this state-space system lives is a potentially complicated object.
Conditions that guarantee compactness in this context are unknown, as are generalizations to the non-linear case.

\subsection{Generic fading memory}

Embodying the notion of continuity of set-valued maps, we can show that the FMP implies stability of the number of solutions as a function of the input.

\begin{proposition}
\label{prop_det_FMP_mESP_sol}
	The map $D(\DetSol) \rightarrow \N_0 \cup \{\infty\}$, $\Seq{u} \mapsto \#\DetSol(\Seq{u})$ is constant.
Furthermore, if we denote this constant by $M$, then $\#\DetSol(\Seq{u}) \geq M$ for any $\Seq{u} \in \US^-$.
In particular, if the state-space system has the FMP, then $\#\DetSol(\Seq{u})$ is constant on $\US^-$, and if it has the FMP and $\#\DetSol(\Seq{u}) = 1$  for some $\Seq{u} \in \US^-$, then it has the ESP.
\end{proposition}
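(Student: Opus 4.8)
The plan is to reduce the whole statement to the \emph{lower} hemi-continuity half of the FMP, combined with a single combinatorial identity: that the approximating sequences $\Seq{u}^n$ from \cref{assmpt_det_SSS}.(iv) have solution fibers that are in cardinality-preserving bijection with those of the tail input $\Seq{u}'$. I would first record that hemi-continuity of $\DetSol$ at $\Seq{u}$ yields, upon taking the outer open set $W'$ to be the whole space in the definition, the following lower hemi-continuity property: for every open $W$ meeting $\DetSol(\Seq{u})$ there is a neighborhood of $\Seq{u}$ on which $\DetSol$ still meets $W$. Only this half of the definition will be used.

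Next I would establish the key identity $\#\DetSol(\Seq{u}^n) = \#\DetSol(\Seq{u}')$. Fix $\Seq{u} \in D(\DetSol)$ and an arbitrary $\Seq{u}' \in \US^-$, and form $\Seq{u}^n$ as in \cref{assmpt_det_SSS}.(iv), so that $\Seq{u}^n$ agrees with $\Seq{u}'$ (shifted) on the indices $\le -n-1$ and with $\Seq{u}$ on the indices $-n,\dots,0$. I claim the right-shift $T^{n+1}$ restricts to a bijection $\DetSol(\Seq{u}^n) \to \DetSol(\Seq{u}')$. Indeed, one checks directly that $T^{n+1}$ carries the input $\Seq{u}^n$ to $\Seq{u}'$, and that on the indices $\le -n-1$, where the two inputs coincide after the shift, the state equation for a solution over $\Seq{u}^n$ becomes the state equation over $\Seq{u}'$; backward shift-invariance of $\XS^-$ keeps the image in the state sequence space. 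For the inverse I would take any solution $\Seq{x}'$ for $\Seq{u}'$, realize it on the distant indices, and propagate forward through the finitely many recent inputs $\seq{u}{-n},\dots,\seq{u}{0}$ via the recursion $\seq{x}{t} = f(\seq{x}{t-1},\seq{u}{t})$; the concatenation property \cref{assmpt_det_SSS}.(iii), applied finitely often, keeps the result in $\XS^-$, and the recursion makes this extension unique, so the shift is indeed bijective.

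With this identity in hand, I would combine it with lower hemi-continuity. Given any finite $k \le \#\DetSol(\Seq{u})$, pick $k$ distinct solutions over $\Seq{u}$ and separate them by disjoint open sets $W_1,\dots,W_k$, possible since the sequence spaces are Hausdorff. Lower hemi-continuity furnishes a neighborhood of $\Seq{u}$ on which $\DetSol$ meets every $W_i$ and hence has at least $k$ elements; since $\Seq{u}^n \to \Seq{u}$, this forces $\#\DetSol(\Seq{u}^n) \ge k$ for large $n$, whence $\#\DetSol(\Seq{u}') = \#\DetSol(\Seq{u}^n) \ge k$. Letting $k$ range over all admissible values gives $\#\DetSol(\Seq{u}') \ge \#\DetSol(\Seq{u})$ for \emph{every} $\Seq{u}' \in \US^-$ whenever $\Seq{u} \in D(\DetSol)$. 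Applying this with two points of $D(\DetSol)$ placed in both roles yields equality, so $\Seq{u} \mapsto \#\DetSol(\Seq{u})$ is constant on $D(\DetSol)$; calling the value $M$, the same inequality reads $\#\DetSol(\Seq{u}') \ge M$ on all of $\US^-$. The two final assertions are then immediate: the FMP means $D(\DetSol) = \US^-$, so the count is globally constant, and if that constant is $1$ then every input has exactly one solution, which is the ESP.

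I expect the main obstacle to be the careful verification of the identity $\#\DetSol(\Seq{u}^n) = \#\DetSol(\Seq{u}')$—specifically, confirming that forward propagation across the junction at index $-n$ keeps the sequence inside $\XS^-$ via \cref{assmpt_det_SSS}.(iii), and that no spurious solution over $\Seq{u}^n$ can arise whose distant tail fails to be a genuine solution over $\Seq{u}'$. Once this identity is secured, the reduction to lower hemi-continuity and the separation-of-points argument are soft.
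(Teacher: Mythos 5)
Your proof is correct and is essentially the paper's own argument: the paper proves this proposition by specializing its abstract \cref{prop_const_card} (through \cref{prop_det_FMP_mESP_out} with identity readout), whose proof consists of exactly your two ingredients --- the Hausdorff-separation consequence of lower hemi-continuity (the paper's \cref{lemma_lower_hemi}) and the comparison of $\DetSol(\Seq{u}^n)$ with $\DetSol(\Seq{u}')$ via the right-shift $T^{n+1}$ applied to the concatenated inputs of \cref{assmpt_det_SSS}.(iv), with injectivity coming from deterministic forward propagation of the state equation. The only differences are presentational: you work directly on the truncated sequence spaces rather than lifting to the bi-infinite system $\varphi$ and its attractor $\Sc_{\varphi}$, and you establish that the shift is a bijection, whereas the paper only needs (and only proves) the injection giving $\#\DetSol(\Seq{u}^n) \leq \#\DetSol(\Seq{u}')$.
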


We had pointed out that \cref{thrm_det_ESP_FMP} holds for any choice of topology on $\US^-$ admissible under \cref{assmpt_det_SSS} so that, in particular, we could have chosen the product topology to obtain the strongest notion of fading memory in \cref{thrm_det_ESP_FMP}.
The same applies to \cref{prop_det_FMP_mESP_sol}, except that this time the FMP acts as a hypothesis.
In this case, we could have chosen an arbitrarily fine topology on $\US^-$ as long as it satisfies \cref{assmpt_det_SSS}.

\cref{prop_det_FMP_mESP_sol} generalizes a result of \cite{Manjunath2020ProcA, Manjunath2022Nonlin}.
In those works, it is assumed that $\#\DetSol(\Seq{u}) = 1$ for some $\Seq{u} \in \US^-$, in which case $\#\DetSol(\Seq{u})$ being constant becomes the ESP.
More importantly, \cref{prop_det_FMP_mESP_sol} poses no additional assumptions on the state or input space such as compactness.
To extend this result to the outputs, we adapt a few notions from control theory to suit our context \cite{RC16, Sontag1990}.

\begin{definition}
	Two states $x,x' \in \Xc$ are called {\bfi simultaneously reachable} if there exist two solutions $(\Seq{x},\Seq{u}),(\Seq{x}',\Seq{u}) \in \DetSol$ with $\seq{x}{0} = x$ and $\seq{x}{0}' = x'$.
Two states $x,x' \in \Xc$ are called {\bfi distinguishable} if there exists some $u \in \Uc$ such that $h(f(x,u)) \neq h(f(x',u))$.
We say that the state-space system {\bfi distinguishes reachable states}\footnote{
Our notion of distinguishing reachable states is closely related but not identical to the control-theoretic notion of observability.
The difference is, first, that our notion considers only \textit{simultaneously} reachable states, that is, states that can be reached with the same input/control sequence;
and, second, our notion asks two such states to be instantaneously distinguishable whenever they are distinguishable in time 1, in the language of \cite{Sontag1990}.} if $h(x) \neq h(x')$ for any two distinguishable simultaneously reachable states $x,x' \in \Xc$.
\end{definition}

It is clear that a state-space system with the ESP distinguishes reachable states since then there are no distinct simultaneously reachable states.

\begin{remark}
\label{rem_weakly_disting}
	The state-space system distinguishes reachable states if and only if $h(\seq{x}{0}) = h(\seq{x}{0}')$ for any $(\Seq{x},\Seq{u}),(\Seq{x}',\Seq{u}) \in \DetSol$ with $h(\seq{x}{-1}) = h(\seq{x}{-1}')$.
\end{remark}

\begin{example}
	Consider the linear state-space system $f(x,u) = Ax + Bu$ and $h(x) = Wx$ with matrices $A \in \R^{n \times n}$, $B \in \R^{n \times d}$, and $W \in \R^{m \times n}$.
Then, the system distinguishes reachable states if $A$ maps the kernel of $W$ into the kernel of $W$.
\end{example}

\begin{proposition}
\label{prop_det_FMP_mESP_out}
	If the state-space system distinguishes reachable states, then the analog statement of \cref{prop_det_FMP_mESP_sol} holds with $\DetOut$ in place of $\DetSol$.
\end{proposition}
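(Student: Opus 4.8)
The plan is to reduce the statement to \cref{prop_det_FMP_mESP_sol} by showing that the hypothesis of distinguishing reachable states endows $\DetOut$ with the one structural feature that $\DetSol$ possesses automatically and on which \cref{prop_det_FMP_mESP_sol} rests, namely \emph{forward determinism}: along solutions sharing a common input, agreement of the readouts at one time forces agreement at all later times. Concretely, I would first prove that for any two solutions $(\Seq{x},\Seq{u}),(\Seq{x}',\Seq{u}) \in \DetSol$ and any $s \in \Z_-$,
\[
  h(\seq{x}{s}) = h(\seq{x}{s}') \;\Longrightarrow\; h(\seq{x}{t}) = h(\seq{x}{t}') \quad\text{for all } s \leq t \leq 0 .
\]
Since the right-shifts $T^{k}(\Seq{x},\Seq{u})$ and $T^{k}(\Seq{x}',\Seq{u})$ again lie in $\DetSol$ (the state equation holds at every past time), applying the characterization of \cref{rem_weakly_disting} to these shifted pairs yields the one-step implication $h(\seq{x}{s}) = h(\seq{x}{s}') \Rightarrow h(\seq{x}{s+1}) = h(\seq{x}{s+1}')$ at every $s \leq -1$; chaining these implications gives the displayed statement. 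Equivalently, $\seq{x}{s}$ and $\seq{x}{s}'$ are simultaneously reachable as the time-$0$ values of $T^{-s}(\Seq{x},\Seq{u})$ and $T^{-s}(\Seq{x}',\Seq{u})$, so equal readouts there make them indistinguishable, whence $h(f(\seq{x}{s},u)) = h(f(\seq{x}{s}',u))$ for all $u \in \Uc$ and in particular for $u = \seq{u}{s+1}$.

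With forward determinism established, I would transplant the proof of \cref{prop_det_FMP_mESP_sol} step by step to $\DetOut$. Recall that $\DetOut = H(\DetSol)$ for the continuous, input-preserving extended readout $H$, so $\DetOut$ is again a set-valued fiber map over $\US^-$, and the displayed implication says that along realizable trajectories the readout at each time is a deterministic function of the readout at the previous time and the next input -- precisely what the state equation $\seq{x}{t} = f(\seq{x}{t-1},\seq{u}{t})$ supplies for $\DetSol$ free of charge. The lower-semicontinuity of $\Seq{u} \mapsto \#\DetOut(\Seq{u})$ on $D(\DetOut)$ is purely topological (separate finitely many outputs by disjoint open sets and invoke hemi-continuity), and the matching global lower bound, the constancy on $D(\DetOut)$, and the two concluding assertions (the FMP in the outputs forcing $\#\DetOut$ constant, and a single output at one input upgrading to the ESP in the outputs) follow by the same transport-and-approximation argument along the inputs $\Seq{u}^n$ from \cref{assmpt_det_SSS}.(iv), with forward determinism guaranteeing that two distinct output sequences already differ at all sufficiently early times and hence remain distinct after transport. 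The only genuinely new ingredient relative to \cref{prop_det_FMP_mESP_sol} is the forward determinism of Step~1; everything else is verbatim.

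The main obstacle is organizational rather than computational: one must confirm that the proof of \cref{prop_det_FMP_mESP_sol} invokes only the abstract fibered set-valued structure, the hemi-continuity machinery, \cref{assmpt_det_SSS}, and forward determinism -- and no finer analytic property of $f$ or of $\XS^-$. The tempting shortcut of building a genuine induced state-space system on the readout space, with state map $g(h(x),u) := h(f(x,u))$, is obstructed here: although distinguishing reachable states makes $g$ well defined on simultaneously reachable representatives, convergence in the readout image need not lift to convergence of the underlying states, so $g$ need not be continuous, and moreover $\YS^-$ is not assumed to satisfy the concatenation property \cref{assmpt_det_SSS}.(iii) or the approximation property \cref{assmpt_det_SSS}.(iv). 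Hence \cref{prop_det_FMP_mESP_sol} cannot simply be quoted for such an induced system, and I would instead transplant its argument directly, treating forward determinism as the sole substitute for the deterministic structure of the state equation.
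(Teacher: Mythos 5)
Your proposal is correct and follows essentially the same route as the paper: the paper proves this proposition by verifying the hypotheses of the abstract \cref{prop_const_card}, whose hypothesis (i) is exactly your one-step forward determinism obtained from \cref{rem_weakly_disting} applied to right-shifted solution pairs, and whose proof consists precisely of your lower hemi-continuity cardinality bound together with the shift-transport injectivity argument along the concatenated inputs of \cref{assmpt_det_SSS}.(iv). The only organizational difference is that the paper runs this argument once at the abstract bundle level and then obtains \cref{prop_det_FMP_mESP_sol} as the special case $h = \mathrm{id}$ (rather than transplanting a solution-level proof to the outputs, as you propose), but the mathematical content is identical.
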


\begin{example}
	In general, the assumption that the state-space system distinguishes reachable states cannot be dropped.
Consider the linear state-space system from \cref{ex_linear_SSS}.
Extend the system to the one-point compactification $\Xc := \R^n \cup \{ \infty \} \cong \Sb^n$ by setting $f' = f$ on $\R^n \times \R^d$ and $f' = \infty$ on $\{ \infty \} \times \R^d$.
This extension $f'$ is continuous if the matrix $A$ is injective.
As state sequence space, take $\XS^- = \ell^{\infty}(\R^n) \cup \{ \Seq{x} \in \Xc^{\Z_-} \colon \inf_{t \in \Z_-} \norm{\seq{x}{t}} > 0 \}$.
The extended system has exactly two solutions for any input $\Seq{u} \in \US^-$, namely the original solution and the sequence $\Seq{\infty}$ that is constantly $\infty$.
Suppose $B$ is injective so that $(\Seq{0},\Seq{u}) \notin \DetSol(\Seq{u})$ for any $\Seq{u} \neq \Seq{0}$.
Take a continuous readout $h \colon \Xc \rightarrow \R$ that maps the poles and only the poles to zero, that is, $h^{-1}(0) = \{0,\infty\}$.
Then, $\DetOut_{f'}(\Seq{0}) = \{ (\Seq{0},\Seq{u}) \}$ and $\#\DetOut_{f'}(\Seq{u}) = 2$ for any $\Seq{u} \neq \Seq{0}$.
The new system does not distinguish reachable states because $0$ and $\infty$ are two distinguishable simultaneously reachable states.
The state-space system has the FMP in the outputs if $\US^-$ is endowed with the topology induced by a weighted supremums norm \cite[Theorem 10]{RC30}.
\end{example}

The next result says that if the underlying spaces exhibit more regularity, then the FMP holds generically and bifurcations of solutions (and outputs) can occur only within a meager set.
We recall that a subset of a topological space is residual if it can be written as a countable intersection of subsets each of which has a dense interior;
a subset is meager if its complement is residual;
and a property holds `generically' if it holds on a residual subset.

\begin{theorem}
\label{thrm_det_generic_FMP}
	Suppose that $\US^-$, $\XS^-$, and $\YS^-$ are Polish and that $\XS^-$ is compact.
Then, $D(\DetSol)$ and $D(\DetOut)$ contain a residual set.
In particular, $\#\DetSol(\Seq{u})$ is generically constant, and so is $\#\DetOut(\Seq{u})$ if the state-space system distinguishes reachable states.
\end{theorem}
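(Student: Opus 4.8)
\section*{Proof proposal}

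The plan is to recognize the fading memory property here as an instance of the classical fact that a semicontinuous, compact-valued multifunction on a Baire space is generically continuous, and then to feed the state-space structure into that machinery. First I would pass from $\DetSol$ to the projected fiber map $S \colon \US^- \rightarrow 2^{\XS^-}$, $S(\Seq{u}) := \pi_{\XS^-}(\DetSol(\Seq{u})) = \{ \Seq{x} \in \XS^- \colon (\Seq{x},\Seq{u}) \in \DetSol \}$. Since $f$ is continuous and all topologies are at least as fine as the product topologies, the state equation exhibits $\DetSol$ as a closed subset of $\XS^- \times \US^-$; hence the graph of $S$ is closed in $\US^- \times \XS^-$ and each value $S(\Seq{u})$ is a closed, hence compact, subset of $\XS^-$. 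A closed-graph multifunction into a compact Hausdorff space is automatically upper hemi-continuous with compact values, so $S$ is upper hemi-continuous and compact-valued. Working with the projection onto the compact $\XS^-$, rather than in the possibly non-locally-compact $\XS^- \times \US^-$, is precisely what makes this closed-graph criterion applicable. Compactness of $\XS^-$ furthermore forces every fiber to be nonempty, by a routine compactness argument extracting a bi-infinite solution from finite-horizon ones (the finite-horizon solution sets, built using \cref{assmpt_det_SSS}.(iii) and the surjectivity of $\tau$, are nonempty, closed, and nested, so their intersection $S(\Seq{u})$ is nonempty in the compact $\XS^-$).

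With $S$ upper hemi-continuous and nonempty-compact-valued, I would invoke Fort's theorem on points of continuity of semicontinuous multifunctions: such a map from a Baire space into a metric space is lower hemi-continuous on a residual subset. Here $\US^-$ is Polish, hence Baire, and $\XS^-$ is a compact metric space. Combined with upper hemi-continuity everywhere, this yields a residual set $R \subseteq \US^-$ on which $S$ is continuous (i.e.\ hemi-continuous in the sense of the paper). It then remains to transfer continuity of $S$ to hemi-continuity of $\DetSol$ itself. This is a tube-lemma argument: given open $W' \supseteq \DetSol(\Seq{u}) = S(\Seq{u}) \times \{\Seq{u}\}$, compactness of $S(\Seq{u})$ produces a product neighborhood $U_X \times V_1 \subseteq W'$ with $S(\Seq{u}) \subseteq U_X$, and upper hemi-continuity of $S$ shrinks $V_1$ so that $S(\Seq{u}') \subseteq U_X$, whence $\DetSol(\Seq{u}') \subseteq W'$; the lower condition is handled symmetrically from lower hemi-continuity. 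Thus $R \subseteq D(\DetSol)$, so $D(\DetSol)$ contains a residual set.

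For the outputs, the induced readout $h_* \colon \XS^- \rightarrow \YS^-$, $\Seq{x} \mapsto (h(\seq{x}{t}))_{t \in \Z_-}$, is continuous by hypothesis, and the output fiber map factors as $\Seq{u} \mapsto h_*(S(\Seq{u}))$, whose values are compact as continuous images of compacta. Continuity of a multifunction is preserved under post-composition with a continuous single-valued map (checking the upper and lower halves separately via the preimages $h_*^{-1}(W')$ and $h_*^{-1}(W)$), so the \emph{same} residual set $R$ consists of continuity points of $\Seq{u} \mapsto h_*(S(\Seq{u}))$. Since $\YS^-$ is Polish, the identical tube-lemma argument gives $R \subseteq D(\DetOut)$. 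The two genericity conclusions are then immediate: by \cref{prop_det_FMP_mESP_sol}, $\Seq{u} \mapsto \#\DetSol(\Seq{u})$ is constant on $D(\DetSol) \supseteq R$ and hence generically constant; and if the system distinguishes reachable states, \cref{prop_det_FMP_mESP_out} makes $\Seq{u} \mapsto \#\DetOut(\Seq{u})$ constant on $D(\DetOut) \supseteq R$, hence generically constant.

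The main obstacle I anticipate is not a single computation but assembling all the hypotheses of Fort's theorem simultaneously from the state-space data: nonemptiness of the fibers (exactly where compactness of $\XS^-$ is used, and which requires the compactness/diagonal extraction above), upper hemi-continuity through the closed-graph criterion (which needs the range, not merely each fiber, to be compact, solved by the projection to $\XS^-$), and the precise equivalence between continuity of the projected map $S$ and hemi-continuity of $\DetSol$ and $\DetOut$. Once these are pinned down, Fort's theorem supplies the essential Baire-category input, and the output case and the cardinality statements follow formally. I would expect the cleanest writeup to package the generic-continuity step as an abstract statement about the attractor $\Sc_{\varphi}$, so that the same argument can be reused verbatim in the stochastic setting.
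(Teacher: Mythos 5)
Your proposal is correct and takes essentially the same route as the paper: the paper likewise derives upper hemi-continuity of $\DetSol$ from its closedness together with compactness of $\XS^-$ (phrased there as properness of the projection $\pi_{\US^-}$ in \cref{prop_closed_upper_hemi}), invokes the same classical generic-continuity theorem for upper hemi-continuous set-valued maps (citing Aubin--Frankowska Theorem 1.4.13, i.e.\ Fort's theorem in the form you use), treats $\DetOut$ by post-composing with the continuous extended readout, and concludes via \cref{prop_det_FMP_mESP_sol,prop_det_FMP_mESP_out}. Your projection onto the compact factor $\XS^-$ with the closed-graph criterion and the tube-lemma transfer back to $\DetSol$ is only a repackaging of the paper's choice to work directly with fiber maps into $\XS^- \times \US^-$ over a proper projection.
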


We have seen in \cref{prop_det_FMP_mESP_sol} that the number of solutions is constant in the input if the state-space system has the FMP.
We conclude this section with the conjecture that the converse also holds.

\begin{conjecture}
	If $\#\DetSol(\Seq{u})$ is constant on $\US^-$ and this constant is finite, then the state-space system has the FMP.
\end{conjecture}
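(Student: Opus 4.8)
The plan is to work in the regularity setting of \cref{thrm_det_generic_FMP}, with $\XS^-$ compact and the sequence spaces Polish; compactness of $\XS^-$ cannot be dropped, since the linear example preceding \cref{thrm_det_ESP_FMP} already has constant count $\#\DetSol(\Seq{u}) \equiv 1$ (the ESP) on $\ell^\infty$ with the product topology and yet fails the FMP when $A$ is not nilpotent, so I read the conjecture in this compact setting. First I would record that $\DetSol$ is closed in $\XS^- \times \US^-$: if $(\Seq{x}^k,\Seq{u}^k) \to (\Seq{x},\Seq{u})$ with each term a solution, then coordinatewise convergence (the topologies refine the product topology) together with continuity of $f$ preserves the identities $\seq{x}{t} = f(\seq{x}{t-1},\seq{u}{t})$. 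Because $\XS^-$ is compact, $\pi_{\US^-}$ is a closed map, and its restriction to the closed set $\DetSol$ is a closed continuous surjection onto $\US^-$. A closed map yields upper hemi-continuity for free: given open $W' \supseteq \DetSol(\Seq{u}_0)$, the set $\pi_{\US^-}(\DetSol \setminus W')$ is closed and misses $\Seq{u}_0$, so its complement is the required neighborhood. Hence the FMP is equivalent to \emph{lower} hemi-continuity of $\DetSol$, and this is where the hypothesis of a constant finite count $N$ must enter.

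Next I would reduce lower hemi-continuity to the statement that solution branches do not merge. Fix $\Seq{u}_0$, write $\DetSol(\Seq{u}_0) = \{\Seq{x}^{(1)},\dots,\Seq{x}^{(N)}\}$, and choose disjoint open neighborhoods $W_i \ni \Seq{x}^{(i)}$ (possible since the spaces are Hausdorff). Upper hemi-continuity applied to $W' = \bigcup_i W_i$ gives a neighborhood $V \ni \Seq{u}_0$ on which all $N$ solutions lie in $\bigsqcup_i W_i$; since the total count is exactly $N$, lower hemi-continuity at $\Seq{u}_0$ is equivalent to each $W_i$ meeting $\DetSol(\Seq{u}')$ for all nearby $\Seq{u}'$. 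Concretely, a failure means there is a sequence $\Seq{u}^k \to \Seq{u}_0$ whose $N$ solutions avoid some $W_{i_0}$; extracting convergent branches by compactness of $\XS^-$ and using closedness of $\DetSol$, their limits lie in $\DetSol(\Seq{u}_0) \setminus \{\Seq{x}^{(i_0)}\}$, so $N$ branches converge into an $(N-1)$-point set and, by pigeonhole, two of them coincide. Thus the conjecture is equivalent to excluding this simultaneous merging-and-splitting of branches.

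To populate the branches I would use a persistence construction. If $\Seq{u}'$ differs from $\Seq{u}_0$ only on a finite recent window $[-m+1,0]$, then retaining the tail $(\seq{x}{t}^{(i)})_{t \le -m}$ of a solution $\Seq{x}^{(i)}$ for $\Seq{u}_0$ and iterating $f$ forward on $[-m+1,0]$ produces a genuine solution for $\Seq{u}'$ that is close to $\Seq{x}^{(i)}$; such $\Seq{u}'$ are close to $\Seq{u}_0$ in any admissible topology, so each branch is a limit of solutions and is in particular not isolated. This already yields lower hemi-continuity along the dense family of finite-window perturbations, and \cref{thrm_det_generic_FMP} furnishes a dense residual set of inputs at which the FMP holds outright and near which, by \cref{prop_det_FMP_mESP_sol}, $\DetSol$ splits into exactly $N$ continuous sheets.

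The main obstacle is upgrading lower hemi-continuity from this dense family to all directions: the uncovered inputs are precisely those differing from $\Seq{u}_0$ arbitrarily deep in the past, which \cref{assmpt_det_SSS}.(iv) certifies are still close to $\Seq{u}_0$ and yet for which the tail of $\Seq{x}^{(i)}$ is no longer a partial solution, so the persistence construction breaks down there. This cannot be settled by soft topology alone: a closed surjection with constant finite fibres need not be open, as the closed set $\{(x,\pm\sqrt{|x|}) : x \in \R\} \cup \{(0,1)\} \subseteq \R^2$ shows, whose vertical fibres all have cardinality two while its projection fails to be open, hence lower hemi-continuous, at $(0,1)$. Ruling out the analogous branch-merging must therefore exploit the state-space structure itself: the identity $T \circ \varphi = \mathrm{id}$, which on the invariant attractor $\Sc_{\varphi}$ (\cref{prop_det_sol_attractor}) makes $\varphi$ a fibrewise bijection over the shift with inverse $T$, together with forward determinism of the state equation. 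The route I would pursue is to show that a merging of two branches forces, after applying the shift and the past-surgery above, an $(N+1)$-th solution for some input, contradicting the constant count $N$; phrased abstractly, this is the assertion that a finite, constant-fibre global attractor of such a skew-product is lower hemi-continuous, most naturally established within the framework of \cref{sec_abstract}. Closing this last step is exactly the difficulty that leaves the statement a conjecture.
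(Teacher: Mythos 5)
You should first be clear about the status of this statement: it is posed in the paper as an open conjecture, with no proof offered, so there is no paper proof to compare your argument against --- and your proposal, by its own closing sentence, does not supply a proof either. What you have written is a problem analysis, and most of it is sound. Your observation that the literal statement fails without compactness is correct and important: the paper's own linear example (with $\US^- = \ell^{\infty}(\R^d)$, $\XS^- = \ell^{\infty}(\R^n)$, product topologies, and $A$ non-nilpotent with spectral radius less than one) has the ESP, hence constant finite count, yet fails the FMP, so the conjecture must implicitly carry the compactness hypothesis of the surrounding results. In that setting your reduction coincides with the paper's machinery: $\DetSol$ is closed, the restriction of $\pi_{\US^-}$ to it is proper, so upper hemi-continuity is automatic (\cref{prop_closed_upper_hemi}, as used in the proofs of \cref{thrm_det_ESP_FMP,thrm_det_generic_FMP}), and the FMP is equivalent to lower hemi-continuity; your compactness-plus-pigeonhole argument correctly identifies its failure with the merging of two solution branches; and your planar example of a closed constant-fiber set whose projection is not lower hemi-continuous usefully certifies that no purely topological argument can finish the job.

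The gap is that the one step carrying all the content of the conjecture --- excluding the merging scenario --- is only gestured at. Your persistence construction handles perturbations supported on a finite recent window $[-m+1,0]$, but with no control uniform in $m$; the nets along which merging could occur are exactly those modifying the input arbitrarily deep in the past (the situation that \cref{assmpt_det_SSS}.(iv) is designed to capture), where the retained tail of $\Seq{x}^{(i)}$ is no longer a partial solution and your construction produces nothing near $\Seq{x}^{(i)}$. The appeal to \cref{thrm_det_generic_FMP,prop_det_FMP_mESP_sol} does not help at the bad input either: a residual set need not contain a neighborhood of $\Seq{u}_0$, so knowing that $\DetSol$ splits into $N$ continuous sheets generically says nothing about lower hemi-continuity at $\Seq{u}_0$ itself. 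The proposed contradiction --- that a merger forces an $(N+1)$-st solution for some input, via $T \circ \varphi = \mathrm{id}$ on $\Sc_{\varphi}$ and forward determinism --- is precisely the assertion that needs to be proved, and it is left unproved. In short: your reductions are correct and consistent with the paper, but the statement remains exactly as open after your proposal as before it.
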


\subsection{On a prominent example}

The dynamical system induced by the state map $f(x,u) = ux/(1+\abs{x})$ had been introduced in \cite{KloedenEtal2012} to illustrate subtleties of pullback and forward attraction in non-autonomous systems.
This example had been revisited in the context of echo states both in \cite{Manjunath2022Nonlin} and, with an additional hyperbolic tangent transformation, in \cite{CeniEtal2020PhysD}.
Suppose we feed the state map with the input sequence $\Seq{u}$ that is constantly $u^- \in (0,1)$ in negative time and then switches to being constantly $u^+ \in (1,2)$ in positive time.
The only solution for the input $\Seq{u}$ is the state sequence $\Seq{x} = \Seq{0}$ that is constantly zero.
Since we have seen that solutions form the global attractor of the dynamical system, it is therefore tempting to think that trajectories of the state-space system are attracted to the constant zero sequence.
However, this is not the case as noted in \cite{CeniEtal2020PhysD,KloedenEtal2012};
even more so, the constant zero sequence is a repeller.
It had been argued in \cite{CeniEtal2020PhysD} that the issue is due to pullback versus forward attraction.
On the contrary, the results in this work reveal that we cannot expect trajectories to converge to the solution for the input $\Seq{u}$.
Indeed, guided by the idea of fading memory (which we proved to hold generically), after a while the dynamical system can hardly distinguish between the input sequence $\Seq{u}$ and the input sequence $\Seq{u}^+$ that is constantly $u^+$.
For the input sequence $\Seq{u}^+$, there are three solutions $\Seq{0},\Seq{x}^+,\Seq{x}^-$, namely the state sequences that are constantly one of the fixed points $0,x^+,x^-$ of the map $x \mapsto u^+x/(1+\abs{x})$.
Thus, trajectories of the state-space system with input $\Seq{u}$ are expected not to be attracted to the single sequence $\Seq{0}$ but instead to the set $\{\Seq{0},\Seq{x}^+,\Seq{x}^-\}$, which is indeed the behavior that had been observed in the numerical illustrations in \cite{CeniEtal2020PhysD,KloedenEtal2012}.
In particular, the (fiber of the) global pullback attractor $\{\Seq{0},\Seq{x}^+,\Seq{x}^-\}$ acts as a forward attractor as well in this case.
The relationship between echo states and forward attraction has been investigated in more detail in \cite{RC32}.
We have proved in this work that the number of solutions is constant on a residual set.
In particular, in the example above, the input sequence is taken from a meager set and represents an unlikely scenario for actual deployment of the model.
At the same time, this means that fading memory does not hold at the input $\Seq{u}$.
However, what breaks down at $\Seq{u}$ is lower hemi-continuity whereas upper hemi-continuity still provably holds --- and it is upper hemi-continuity that warrants the intuition for converging to the solution set of $\Seq{u}^+$ as opposed to the solution of $\Seq{u}$.

\section{Stochastic state-space systems}
\label{sec_stoch_SSS}

We adopt the notation from the previous section, continue to pose \cref{assmpt_det_SSS}, and assume additionally that $\US^-$, $\XS^-$, $\YS^-$, and $\US$ are completely regular.
Let $\Pstate^- \subseteq P(\XS^- \times \US^-)$ and $\Pin^- \subseteq P(\US^-)$ be backwards shift-invariant, that is, $(T_*)^{-1}(\Pstate^-) = \Pstate^-$ and $(T_*)^{-1}(\Pin^-) = \Pin^-$, respectively, and satisfy $(\pi_{\US^-})_*(\Pstate^-) = \Pin^-$.
Also, let $\Pout^- := (\pi_{\US^-})_*^{-1}(\Pin^-) \subseteq P(\YS^- \times \US^-)$.
In the deterministic case, we studied the global attractor of the dynamical system $\varphi$.
In the stochastic case, we replace $\varphi$ by its push-forward, whose domain we pick as follows.
Let $\Pstate := (\tau_*)^{-1}(\Pstate^-) \subseteq P(\XS^- \times \US)$ and $\Pin := (\tau_*)^{-1}(\Pin^-) \subseteq P(\US)$.
We point out that $\Pstate$ and $\Pin$ are backwards shift-invariant and satisfy $(\pi_{\US})_*(\Pstate) \subseteq \Pin$.
The reverse inclusion $\Pin \subseteq (\pi_{\US})_*(\Pstate)$ holds if $\US$ is Polish (this follows from \cref{lem_causal_at_zero} further below).

\begin{assumption}
	The sets $\Pin^-$, $\Pin$, $\Pstate^-$, $\Pstate$, and $\Pout^-$ are endowed with topologies that are at least as fine as the weak topologies and such that the following push-forward maps are all continuous:
$(\pi_{\US^-})_* \colon \Pstate^- \rightarrow \Pin^-$,
$(\pi_{\US})_* \colon \Pstate \rightarrow \Pin$,
$T_* \colon \Pstate \rightarrow \Pstate$,
$\sigma_* \colon \Pin \rightarrow \Pin$,
$\varphi_* \colon \Pstate \rightarrow \Pstate$,
$H_* \colon \Pstate^- \rightarrow \Pout^-$.
\end{assumption}

\begin{example}
\label{ex_weak_top}
	The weak topologies on $\Pin^-$, $\Pin$, $\Pstate^-$, $\Pstate$, and $\Pout^-$ are admissible.
Indeed, push-forward maps induced by continuous maps are continuous on the spaces of Radon probability measures with respect to the weak topologies if the underlying spaces are completely regular \cite[Proposition 7.2.2 and Theorem 8.4.1]{Bogachev2007}.
Furthermore, we point out that complete regularity of the underlying spaces makes the weak topologies Hausdorff \cite[Theorem 7.10.6 and Corollary 8.2.5]{Bogachev2007}.
\end{example}

\subsection{Stochastic echo states}
\label{sec_stoch_echo_states}

Given an element $\mu \in P(\XS^- \times \US^-)$, a realization of $\mu$ is a pair $(\Seq{X},\Seq{U})$ of an $\XS^-$-valued and a $\US^-$-valued random variable\footnote{
An $\XS^-$-valued random variable defines a stochastic process on $\Xc$ but not every stochastic process on $\Xc$ defines a random variable on $\XS^-$ if the topology on $\XS^-$ is finer than the product topology.} that are defined on some common probability space and whose joint law is $\mu$.
We call $\mu \in \Pstate^-$ a {\bfi stochastic solution} of the state-space system if some realization $(\Seq{X},\Seq{U})$ of $\mu$ is a solution of the state equation almost surely, that is, $\seq{X}{t} = f(\seq{X}{t-1},\seq{U}{t})$ holds almost surely for all $t \in \Z_-$.
As such, a stochastic solution not only captures the law of the states but also encodes the input law in its $\US^-$-marginal and the dependence structure between the states and the inputs.
Given an input law $\Xi \in \Pin^-$, an element $\mu \in \StochSol$ is called a {\bfi stochastic solution for the input law} $\Xi$ if $(\pi_{\US^-})_*\mu = \Xi$.
The set of all stochastic solutions will be denoted $\StochSol \subseteq \Pstate^-$.

In the previous section, we observed that deterministic solutions are (truncations of) elements of the global attractor of the dynamical system $\varphi$.
In the stochastic case, the push-forward $\varphi_*$ acts as an autonomous dynamical system on $\Pstate$.
The global attractor of $\varphi_*$ is
\begin{equation*}
	\Sc_{\varphi_*}
	= \bigcap_{n \in \N} \varphi_*^n(\Pstate)
	= \{ \mu \in \Pstate \colon \varphi_*^nT_*^n\mu = \mu \text{ for all } n \in \N \}.
\end{equation*}
Although stochastic solutions are defined through almost sure equality of the state equation and $\varphi_*$ only sees laws, we recover the analog of the deterministic result from \cref{prop_det_sol_attractor}.

\begin{proposition}
\label{prop_stoch_sol}
	The set of stochastic solutions satisfies $\StochSol = \tau_*(\Sc_{\varphi_*})$.
Furthermore, for any $\mu \in \Pstate^-$, the following are equivalent.
\begin{enumerate}[\upshape (i)]\itemsep=0em
\item
Some realization of $\mu$ is a solution of the state equation almost surely.
\item
Every realization of $\mu$ is a solution of the state equation almost surely.
\item
The support of $\mu$ is contained in $\DetSol$.
\end{enumerate}
\end{proposition}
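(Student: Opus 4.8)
The plan is to prove the equivalence (i)--(iii) first, since it identifies $\StochSol$ with $\{\mu \in \Pstate^- : \mu(\DetSol) = 1\}$, and this characterization is exactly what drives the identity $\StochSol = \tau_*(\Sc_{\varphi_*})$. The cornerstone of everything is that $\DetSol$ is a \emph{closed} subset of $\XS^- \times \US^-$. Indeed, for each $t \in \Z_-$ the evaluation $\Seq{x} \mapsto \seq{x}{t}$ and the map $(\Seq{x},\Seq{u}) \mapsto f(\seq{x}{t-1},\seq{u}{t})$ are continuous (the topologies are at least as fine as the product topologies, and $f$ and the concatenation are continuous by \cref{assmpt_det_SSS}), so the locus where they agree is the preimage of the diagonal of the Hausdorff space $\Xc \times \Xc$ and hence closed; then $\DetSol$ is the countable intersection over $t \in \Z_-$ of these closed sets.

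For the equivalence, I would observe that every realization $(\Seq{X},\Seq{U})$ of $\mu$ has law $\mu$, so the probability that $(\Seq{X},\Seq{U})$ lies in $\DetSol$ equals $\mu(\DetSol)$ irrespective of the chosen realization; moreover, since $\Z_-$ is countable, the event that the state equation holds for every $t$ agrees up to a null set with $\{(\Seq{X},\Seq{U}) \in \DetSol\}$. Thus both (i) and (ii) are equivalent to $\mu(\DetSol)=1$. To close the loop with (iii), I use the standard fact \cite{Bogachev2007} that a Radon probability measure is concentrated on its (closed) support, so that for a closed set $C$ one has $\mu(C)=1$ if and only if $\mathrm{supp}(\mu) \subseteq C$; applying this with $C = \DetSol$ gives (iii) $\Leftrightarrow \mu(\DetSol)=1$.

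For the set identity, I would first record a measure-level analogue of \cref{prop_det_sol_attractor}, namely $\Sc_{\varphi_*} = \{\nu \in \Pstate : \nu(\Sc_\varphi) = 1\}$. The input is that $T$ is a left-inverse of $\varphi$, so $T^n \circ \varphi^n = \mathrm{id}$ globally, while $\varphi^n \circ T^n$ is the identity exactly on the closed set $E_n := \varphi^n(\XS^- \times \US) = \{z : \varphi^n(T^n(z)) = z\}$. At the level of measures this yields, for each $n$, the equivalence $\varphi_*^n T_*^n \nu = \nu \Leftrightarrow \nu(E_n)=1$: forward, $\nu = \varphi_*^n(T_*^n\nu)$ is carried by the image $\varphi^n(\XS^- \times \US) = E_n$; reverse, $\varphi^n \circ T^n$ acts as the identity on the full-measure set $E_n$, whence $(\varphi^n T^n)_*\nu = \nu$. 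Intersecting over the countably many $n$ converts ``$\nu(E_n)=1$ for all $n$'' into $\nu(\Sc_\varphi)=1$ via $\Sc_\varphi = \bigcap_n E_n$. The two inclusions are then short chases using $\Sc_\varphi = \tau^{-1}(\DetSol)$ from \cref{prop_det_sol_attractor} and the continuous right-inverse $j^+$ of the truncation with $\tau \circ j^+ = \mathrm{id}$. For ``$\subseteq$'', any $\nu \in \Sc_{\varphi_*}$ gives $(\tau_*\nu)(\DetSol) = \nu(\tau^{-1}(\DetSol)) = \nu(\Sc_\varphi) = 1$ and $\tau_*\nu \in \Pstate^-$ (as $\Pstate = (\tau_*)^{-1}(\Pstate^-)$), so $\tau_*\nu \in \StochSol$. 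For ``$\supseteq$'', any $\mu \in \StochSol$ yields $\nu := (j^+)_*\mu$ with $\tau_*\nu = (\tau \circ j^+)_*\mu = \mu$, forcing $\nu \in \Pstate$, and $\nu(\Sc_\varphi) = \mu\bigl((\tau \circ j^+)^{-1}(\DetSol)\bigr) = \mu(\DetSol) = 1$, so $\nu \in \Sc_{\varphi_*}$ with $\tau_*\nu = \mu$.

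I expect the main obstacle to be the measure-level equivalence $\varphi_*^n T_*^n \nu = \nu \Leftrightarrow \nu(E_n)=1$, precisely the step reconciling the law-level dynamics of $\varphi_*$ with the almost-sure notion of solution (as the surrounding discussion emphasizes). The delicate point is that $\varphi^n \circ T^n$ equals the identity only on $E_n$ and not globally, so one must first argue that $\nu$ charges $E_n$ fully before concluding that the pushforward collapses to $\nu$. Closedness of $E_n$ -- again an equalizer of continuous maps into a Hausdorff space -- together with the concentration of Radon measures on closed full-measure sets is what makes this rigorous.
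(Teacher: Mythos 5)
Your proof is correct and follows essentially the same route as the paper: the paper reduces the set identity to its abstract \cref{prop_abstr_support_sol} (the characterization of $\Sc_{\varphi_*}$ as those measures supported in $\Sc_{\varphi}$), whose proof is exactly your equivalence $\varphi_*^nT_*^n\nu=\nu \Leftrightarrow \nu(E_n)=1$ phrased in support language, and then concludes with the same ingredients you use, namely $\Sc_{\varphi}=\tau^{-1}(\DetSol)$ from \cref{prop_det_sol_attractor} and the continuous right-inverse $j^+$. The only cosmetic difference is bookkeeping: you track full-measure closed sets directly, while the paper works with supports of Radon measures (equivalent for closed sets) and cites the closedness of $\varphi^n(\XS^-\times\US)$ via the continuous left-inverse rather than via your equalizer description $E_n=\{z\colon \varphi^n(T^n(z))=z\}$.
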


In the introduction, we discussed that the stochastic context requires making a choice when defining solutions of the state-space system:
almost sure equality of the state equation versus equality in law.
More formally, consider the continuous map $\Fc \colon \XS^- \times \US^- \rightarrow \Xc^{\Z_-} \times \Uc^{\Z_-}$, $(\Seq{x},\Seq{u}) \mapsto (f(\seq{x}{t-1},\seq{u}{t}),\seq{u}{t})_{t \in \Z_-}$ and the inclusion $\iota \colon \XS^- \times \US^- \rightarrow \Xc^{\Z_-} \times \Uc^{\Z_-}$.
In the deterministic context, it is clear that solutions to the state equation are precisely the fixed points of $\Fc$.
We call $\mu \in \Pstate^-$ a {\bfi stochastic fixed-point solution} of the state equation if $\Fc_*\mu = \iota_*\mu$.
Since almost sure equality of the state equation implies equality in law, it is clear that any stochastic solution of the state-space system is also a stochastic fixed-point solution.
The converse is --- unsurprisingly --- not true.
The link between stochastic solutions and the attractor of $\varphi_*$ exhibited in \cref{prop_stoch_sol} poses a strong argument in favor of the definition of a stochastic solution through almost sure equality of the state equation.

\begin{example}
	Let us demonstrate that stochastic fixed-point solutions that are not stochastic solutions exist even in trivial cases.
Suppose the state map $f(x,u) = x$ does not depend on the input.
Let $\nu \in P(\XS^-)$ be shift-invariant, that is, $T_*\nu = \nu$, and let $\mu \in P(\XS^- \times \US^-)$ be the product measure of $\nu$ and any given input law $\Xi \in P(\US^-)$.
Then, $\mu$ is a stochastic fixed-point solution but we know from \cref{prop_stoch_sol} that $\mu \in \StochSol$ if only if $\nu$ is supported on the diagonal in $\XS^-$.
\end{example}

For the remainder of the paper, we will use being an element of $\tau_*(\Sc_{\varphi_*})$ as a working definition for a stochastic solution.
{\bfi Stochastic outputs} are elements of $\StochOut := H_*(\StochSol)$.

\begin{definition}
	We say that the state-space system has the {\bfi stochastic ESP} if there exists a unique stochastic solution for any given input law;
and we say that it has the {\bfi stochastic ESP in the outputs} if there exists a unique stochastic output for any given input law.
\end{definition}

As in the deterministic framework, we regard the sets $\StochSol$ and $\StochOut$ as their set-valued fiber maps $\Xi \mapsto \StochSol \cap (\pi_{\US^-})_*^{-1}(\Xi)$ and $\Xi \mapsto \StochOut \cap (\pi_{\US^-})_*^{-1}(\Xi)$, respectively.

\begin{definition}
	We say that the state-space system has the {\bfi stochastic FMP} at a given input law $\Xi \in \Pin^-$ if $\StochSol$ is hemi-continuous at $\Xi$.
We denote the set of all points at which the state-space system has the stochastic FMP by $D(\StochSol)$.
We say that the state-space system has the stochastic FMP if $D(\StochSol) = \Pin^-$.
The {\bfi stochastic FMP in the outputs} and the set $D(\StochOut)$ are defined analogously.
\end{definition}

In the presence of the ESP and the FMP, $\DetSol$ becomes a continuous map $\US^- \rightarrow \XS^- \times \US^-$ and its push-forward is well-defined.
It was shown in \cite[Remark 3.11]{RC28} that if the ESP, the FMP, and the stochastic ESP hold, then the unique stochastic solution is given by the push-forward of the input law under $\DetSol$.
The proof therein presupposed the stochastic ESP, but we would expect that the stochastic ESP is, in fact, a consequence of the ESP and the FMP.
Here, we supplement this generalization.

\begin{proposition}
\label{prop_ESP_stoch_ESP}
	Suppose the state-space system has the ESP and the FMP.
Then, $\DetSol_*(\Pin^-) \cap \Pstate^- = \StochSol$.
In particular, if $\DetSol_*(\Pin^-) \subseteq \Pstate^-$, then the state-space system also has the stochastic ESP.
\end{proposition}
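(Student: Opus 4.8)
The plan is to prove the set equality $\DetSol_*(\Pin^-) \cap \Pstate^- = \StochSol$ by establishing the two inclusions separately, and then to read off the stochastic ESP as a formal consequence. Two structural facts drive everything. First, as recalled just before the statement, under the ESP and the FMP the solution set is single-valued and $\DetSol \colon \US^- \rightarrow \XS^- \times \US^-$ is a continuous map, so its push-forward $\DetSol_*$ is well-defined on Radon measures and $\DetSol(\US^-)$ is exactly the graph of the solution map, on which $\pi_{\US^-}$ restricts to a bijection with inverse $\DetSol$. Second, $\DetSol$ is a closed subset of $\XS^- \times \US^-$: since $f$ is continuous, the topologies are at least as fine as the product topologies, and $\Xc$ is Hausdorff, each constraint $\seq{x}{t} = f(\seq{x}{t-1},\seq{u}{t})$ cuts out a closed set and $\DetSol$ is their intersection. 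Closedness is what lets me pass freely between support conditions and the graph description.

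For the inclusion $\subseteq$, I would take $\mu = \DetSol_* \Xi$ with $\Xi \in \Pin^-$ and $\mu \in \Pstate^-$. The push-forward of any Radon measure under $\DetSol$ is supported in the closure of the image $\DetSol(\US^-)$, which equals the closed set $\DetSol$; hence $\mathrm{supp}(\mu) \subseteq \DetSol$. Since $\mu \in \Pstate^-$ by hypothesis, the implication (iii)$\Rightarrow$(i) of \cref{prop_stoch_sol} yields $\mu \in \StochSol$ at once.

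For the reverse inclusion $\supseteq$, I would take $\mu \in \StochSol \subseteq \Pstate^-$ and set $\Xi := (\pi_{\US^-})_* \mu$, which lies in $\Pin^-$ because $(\pi_{\US^-})_*(\Pstate^-) = \Pin^-$. By \cref{prop_stoch_sol}, $\mathrm{supp}(\mu) \subseteq \DetSol$, so $\mu(\DetSol) = 1$. Writing $\Gamma := \DetSol(\US^-) = \DetSol$ for the graph, the key identity is that $\DetSol \circ \pi_{\US^-}$ is the identity on $\Gamma$; consequently $(\DetSol \circ \pi_{\US^-})^{-1}(A) \cap \Gamma = A \cap \Gamma$ for every Borel set $A$. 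Combining this with $\mu(\Gamma) = 1$ gives, for every Borel $A$,
\[
\DetSol_* \Xi (A) = \mu\bigl((\DetSol \circ \pi_{\US^-})^{-1}(A)\bigr) = \mu(A),
\]
so that $\mu = \DetSol_* \Xi \in \DetSol_*(\Pin^-)$, completing the equality. I expect this graph-push-forward step to be the main obstacle — not because it is deep, but because it demands care: one must know that the Radon support carries full measure so that $\mu(\Gamma) = 1$, that $\Gamma = \DetSol$ is measurable (guaranteed by its closedness), and that all the push-forwards in the displayed chain are legitimate.

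Finally, for the ``in particular'' claim, I would assume $\DetSol_*(\Pin^-) \subseteq \Pstate^-$, so that the main equality collapses to $\StochSol = \DetSol_*(\Pin^-)$. Because $\pi_{\US^-} \circ \DetSol = \mathrm{id}_{\US^-}$, the push-forward $\DetSol_*$ admits $(\pi_{\US^-})_*$ as a left-inverse, so $(\pi_{\US^-})_*(\DetSol_* \Xi) = \Xi$ for every $\Xi \in \Pin^-$. This gives existence of a stochastic solution $\DetSol_* \Xi$ over each input law $\Xi$. For uniqueness, any $\mu \in \StochSol$ with $(\pi_{\US^-})_* \mu = \Xi$ equals $\DetSol_* \Xi'$ for some $\Xi' \in \Pin^-$ by the main equality, whence $\Xi = (\pi_{\US^-})_* \mu = \Xi'$ forces $\mu = \DetSol_* \Xi$. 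Thus each fiber is a singleton, which is precisely the stochastic ESP.
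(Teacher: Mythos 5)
Your proof is correct, but it takes a different route from the paper's. The paper deduces the statement from its abstract bundle-map result, \cref{prop_abstr_stoch_USP}, applied to the push-forward system $\varphi_*$ on the bi-infinite space, which gives $(\Sc_{\varphi})_*(\Pin) \cap \Pstate = \Sc_{\varphi_*}$; the claim then follows by truncating, via $\StochSol = \tau_*(\Sc_{\varphi_*})$ and the identity $\tau \circ \Sc_{\varphi} = \DetSol \circ \tau$, which yields $(\tau \circ \Sc_{\varphi})_*(\Pin) = \DetSol_*(\Pin^-)$. You instead work entirely on the one-sided spaces and in effect reprove the abstract lemma in this concrete setting: your chain $\DetSol_*\Xi(A) = \mu\bigl((\DetSol \circ \pi_{\US^-})^{-1}(A)\bigr) = \mu(A)$, resting on $\mu(\DetSol) = 1$ and the graph identity $(\DetSol \circ \pi_{\US^-})^{-1}(A) \cap \DetSol = A \cap \DetSol$, is precisely the computation in the paper's proof of \cref{prop_abstr_stoch_USP}, with \cref{prop_stoch_sol}(iii) playing the role of \cref{prop_abstr_support_sol}. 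What your route buys is self-containedness: you avoid the bi-infinite extension and all truncation bookkeeping, at the cost of re-verifying by hand the hygiene points that the abstract framework supplies (closedness of the set $\DetSol$, continuity of the solution map under the ESP and FMP, full measure of the Radon support), all of which you handle correctly; you also spell out the ``in particular'' fiber argument that the paper leaves implicit. A small simplification is available in your first inclusion: given $\Xi \in \Pin^-$, realize it by a random variable $\Seq{U}$ and observe that $\DetSol(\Seq{U})$ is a realization of $\DetSol_*\Xi$ satisfying the state equation surely, so no support argument is needed for that direction.
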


In the next result, we generalize \cref{thrm_det_ESP_FMP} and the first part of \cref{thrm_det_generic_FMP} to the stochastic case.

\begin{theorem}
\label{thrm_stoch_ESP_FMP}
	Suppose $(\pi_{\US^-})_* \colon \Pstate^- \rightarrow \Pin^-$ is proper.\footnote{
A map is proper if it is closed and has compact fibers.
Equivalent characterizations are derived in \cref{lem_proper}.}
If the state-space system has the stochastic ESP (in the outputs), then it has the stochastic FMP (in the outputs).
Furthermore, if $\Pin^-$ and $\Pstate^-$ are Polish, then the stochastic FMP holds generically even without the stochastic ESP.
\end{theorem}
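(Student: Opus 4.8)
The plan is to reduce both assertions to a single topological fact about fiber maps of closed sets under proper projections, and then to upgrade that fact using single-valuedness (for the first assertion) or a genericity theorem for semicontinuous set-valued maps (for the second). First I would record that $\StochSol$ is a closed subset of $\Pstate^-$. By \cref{prop_stoch_sol}, $\mu \in \StochSol$ precisely when $\mu$ is supported on $\DetSol$, and $\DetSol$ is closed in $\XS^- \times \US^-$ because $f$ is continuous and the topologies are at least as fine as the product topologies: each equation $\seq{x}{t} = f(\seq{x}{t-1},\seq{u}{t})$ cuts out a closed set and $\DetSol$ is their countable intersection. Since $\DetSol$ is closed, $\StochSol = \{ \mu \in \Pstate^- \colon \mu(\DetSol) = 1 \}$, and $\mu \mapsto \mu(\DetSol)$ is upper semicontinuous for the weak topology; hence $\StochSol$ is weakly closed and therefore closed in the possibly finer topology of $\Pstate^-$. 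Write $p := (\pi_{\US^-})_* \colon \Pstate^- \rightarrow \Pin^-$ and regard $\StochSol$ through its fiber map $\Xi \mapsto \StochSol(\Xi) = \StochSol \cap p^{-1}(\Xi)$.

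The key lemma is that this fiber map is upper hemi-continuous. Given an open set $W' \subseteq \Pstate^-$ with $\StochSol(\Xi) \subseteq W'$, the set $A := \StochSol \setminus W'$ is closed; since $p$ is proper it is in particular a closed map, so $p(A)$ is closed in $\Pin^-$. The inclusion $\StochSol(\Xi) \subseteq W'$ means $\Xi \notin p(A)$, so $\Pin^- \setminus p(A)$ is an open neighborhood of $\Xi$ on which the fiber map stays inside $W'$, which is upper hemi-continuity. For the first assertion, the stochastic ESP makes this fiber map single-valued, and a single-valued upper hemi-continuous map is continuous, hence hemi-continuous in the sense of the definition; this is the stochastic FMP. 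For the outputs, I would use that $H$ preserves the input coordinate, so $p \circ H_* = p$, whence the output fiber $\StochOut(\Xi)$ equals $H_*(\StochSol(\Xi))$; since $H_*$ is continuous, composing it with the upper hemi-continuous solution fiber map preserves upper hemi-continuity (pull back the open target set through $H_*$), and the stochastic ESP in the outputs again forces single-valuedness and hence continuity.

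For the second assertion I would drop single-valuedness and instead exploit genericity. Properness of $p$ gives compact fibers $p^{-1}(\Xi)$, so $\StochSol(\Xi)$ is a closed subset of a compact set and therefore compact; thus the solution fiber map is upper hemi-continuous with compact values. With $\Pin^-$ Polish (hence a Baire space) as domain and $\Pstate^-$ Polish (hence metrizable) as codomain, a classical theorem of Fort on upper semicontinuous compact-valued set-valued maps guarantees that such a map is also lower hemi-continuous, and thus continuous, at every point of a residual set. That residual set is contained in $D(\StochSol)$, which is the claim; the outputs case follows by the same argument applied to the compact-valued upper hemi-continuous map $\Xi \mapsto H_*(\StochSol(\Xi))$ once $\Pout^-$ is metrizable.

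The main obstacle I anticipate is not the upper hemi-continuity lemma, which is elementary, but the interface between the paper's combined upper-and-lower notion of hemi-continuity and the upper/lower semicontinuity used in Fort's theorem: I must check that compactness of the fibers (supplied by properness) is exactly what the genericity theorem needs and that the resulting dense $G_\delta$ of continuity points is residual in the paper's sense. A secondary technical point is verifying closedness of $\StochSol$ and compactness of $\StochSol(\Xi)$ in the given, possibly finer than weak, topology rather than merely in the weak topology, which is precisely where the hypothesis that $p$ be proper for the given topologies does the work.
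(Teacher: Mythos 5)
Your proposal is correct and follows the same skeleton as the paper's proof: closedness of $\StochSol$ in $\Pstate^-$ combined with properness of $(\pi_{\US^-})_*$ yields upper hemi-continuity of the fiber map, outputs are treated by composing with the continuous map $H_*$, single-valuedness under the stochastic ESP upgrades upper hemi-continuity to continuity, and a Fort-type genericity theorem handles the Polish case (the paper cites \cite[Theorem 1.4.13]{AubinFrankowska2009}, which is the result you attribute to Fort). You differ only in how the two intermediate lemmas are proved, and both of your variants are valid. For closedness of $\StochSol$, the paper's \cref{lem_stoch_sol_closed} transfers weak closedness of the attractor $\Sc_{\varphi_*}$ through $j^+_*$ and $\tau_*$, whereas you combine the support characterization of \cref{prop_stoch_sol} with portmanteau upper semicontinuity of $\mu \mapsto \mu(\DetSol)$; note that on a completely regular, possibly non-metrizable space this portmanteau inequality for nets of Radon measures is not free -- it requires inner regularity together with separation of a compact set from a disjoint closed set by a continuous function -- so it deserves a line of justification. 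For upper hemi-continuity, the paper's \cref{prop_closed_upper_hemi} runs a net/cluster-point argument using properness, while your closed-map argument ($A = \StochSol \setminus W'$ is closed, hence $p(A)$ is closed, and its complement is the desired neighborhood) is shorter and isolates the correct hypothesis: closedness of the projection alone gives upper hemi-continuity, and the compact fibers are spent only on the genericity step, exactly where you use them. One clarification: the theorem's final claim concerns only the stochastic FMP (i.e.\ $\StochSol$), so your additional output-genericity argument -- which would require metrizability of $\Pout^-$, a hypothesis not available from the theorem's assumptions -- is not needed, and indeed the paper does not attempt it.
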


\begin{example}
	Suppose $\US^-$ and $\XS^-$ are Polish and metrized by $d_{\US^-}$ and $d_{\XS^-}$, respectively.
The product topology on $\XS^- \times \US^-$ is metrized by $d_{\XS^- \times \US^-} = d_{\XS^-} \circ \pi_{\XS^-} + d_{\US^-} \circ \pi_{\US^-}$.
Suppose $\Pstate^-$ and $\Pin^-$ are closed subsets of the Wasserstein-$p$ spaces defined on $(\XS^- \times \US^-,d_{\XS^- \times \US^-})$ and $(\US^-,d_{\US^-})$, respectively, for some $p \in [1,\infty)$.
If $\XS^-$ is compact, then $(\pi_{\US^-})_* \colon \Pstate^- \rightarrow \Pin^-$ is proper.
We emphasize that compactness of $\XS^-$ is sufficient but not necessary.
\end{example}

\subsection{Causal solutions}
\label{subsec_causal_sol}

\begin{assumption}
\label{assmpt_prod_top}
	For the remainder of this section, suppose that the topologies on $\US$ and $\US^-$ are the product topologies and that $\US$, $\US^-$, and $\XS^-$ are Polish.\footnote{
The product topology on $\US$ ($\US^-$) is Polish if and only if $\US$ ($\US^-$) can be written as a countable intersection of open subsets of $\Uc^{\Z}$ ($\Uc^{\Z_-}$) \cite{Engelking1989}.}
\end{assumption}

The stochastic analog of \cref{prop_det_FMP_mESP_sol} (generic fading memory) is not as straightforward as the deterministic result.
It requires us to consider only solutions with a special probabilistic dependence structure.
In a deterministic setting, it is clear that past states are not influenced by future inputs.
This is no longer the case in the stochastic setting.
Past inputs, which determine past states, are in general correlated to future inputs, thus creating correlation between past states and future inputs.
However, it is natural to assume that there is no `direct' correlation between past states and future inputs but only such correlation that arises through past inputs.
We will call this causality.
The notion of conditional independence of sigma-algebras captures this idea mathematically.
The reader unfamiliar with this notion may momentarily jump to \cref{sec_cond_ind} further below, where the definition is stated, and consult \cite[Chapter 8]{Kallenberg2021} and \cite[Chapter 7.3]{ChowTeicher1997} for detailed introductions.

\begin{definition}
\label{def_causal}
	For any $t \in \Z_-$, let $\Sigma_{\Xc,t}$ and $\Sigma_{\Uc,t}$ be the sigma-algebras on $\XS^- \times \US^-$ generated by the maps $T^{-t} \circ \pi_{\XS^-}$ and $T^{-t} \circ \pi_{\US^-}$, respectively.
We call a measure $\mu \in P(\XS^- \times \US^-)$ {\bfi causal} if $\Sigma_{\Xc,t}$ and $\Sigma_{\Uc,0}$ are conditionally independent given $\Sigma_{\Uc,t}$ with respect to $\mu$ for all $t \in \Z_-$.
The set of all causal measures is denoted $\Pcausal$.
\end{definition}

\begin{example}
(i)
Suppose $V \colon \US^- \rightarrow \XS^-$ is measurable and time-invariant, that is, $V \circ T = T \circ V$.
Then, $(V \times \mathrm{id}_{\US^-})_*\Xi$ is causal for any $\Xi \in P(\US^-)$.
(ii)
Suppose $\Zc$ is another completely regular Hausdorff space, $V \colon \Zc^{Z_-} \rightarrow \US^-$ is measurable and time-invariant, $\Seq{Z}$ is a stochastic process on $\Zc$ with mutually independent marginals, and $\Seq{X}$ is an $\XS^-$-valued random variable such that $(\seq{X}{s},\seq{Z}{s})_{s \leq t}$ is independent of $(\seq{Z}{t+1},\dots,\seq{Z}{0})$ for any $t \leq -1$.
Then, the joint law of $(\Seq{X},V(\Seq{Z}))$ is causal.
Causal measures of this type appeared in the related work \cite{RC28}.
\end{example}

Let us denote the sets of causal stochastic solutions and outputs by $\StochCausalSol = \StochSol \cap \Pcausal$ and $\StochCausalOut = H_*(\StochCausalSol)$.
The {\bfi causal stochastic ESP} and the {\bfi causal stochastic FMP} are defined in the obvious analogous ways.
The following result is known in the deterministic case \cite[Proposition 3]{ManjunathJaeger2013}.
We prove that it extends to the stochastic case.

\begin{proposition}
\label{prop_stoch_per_sol}
	Suppose $\US = \tau^{-1}(\US^-)$ and that the state-space system has the causal stochastic ESP.
Then, the causal solution to a shift-periodic input is also shift-periodic with the same minimal period.
\end{proposition}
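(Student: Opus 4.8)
The plan is to exploit the uniqueness granted by the causal stochastic ESP together with the equivariance of the solution set under the right-shift. Fix a shift-periodic input law $\Xi \in \Pin^-$ with minimal period $p$, so that $T_*^p\Xi = \Xi$, and let $\mu$ be the unique causal stochastic solution for $\Xi$, whose existence is guaranteed by the causal stochastic ESP. The goal is to establish $T_*^p\mu = \mu$ and that $p$ is the minimal period of $\mu$. The whole strategy is to verify that $T_*^p\mu$ is again a causal stochastic solution for the same input law $\Xi$; uniqueness then forces $T_*^p\mu = \mu$.

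First I would record that $T_*^p$ preserves stochastic solutions and marginals. Since $\Pstate^-$ is backwards shift-invariant, $T_*^p\mu \in \Pstate^-$. The right-shift $T$ on $\XS^- \times \US^-$ is continuous by \cref{assmpt_det_SSS}.(i), and a direct check shows that it maps $\DetSol$ into $\DetSol$: if $\seq{x}{t} = f(\seq{x}{t-1},\seq{u}{t})$ for all $t$, then the analogous relation holds for the shifted pair. As $\DetSol$ is closed (being the intersection over $t \in \Z_-$ of equalizers of continuous maps into the Hausdorff space $\Xc$), and as $\mathrm{supp}(\mu) \subseteq \DetSol$ by the equivalence of statements (i) and (iii) in \cref{prop_stoch_sol}, we obtain $\mathrm{supp}(T_*^p\mu) \subseteq \overline{T^p(\mathrm{supp}(\mu))} \subseteq \DetSol$, whence $T_*^p\mu \in \StochSol$ by the same equivalence. (Alternatively, one may invoke $\StochSol = \tau_*(\Sc_{\varphi_*})$ together with the commutation $\tau \circ T = T \circ \tau$ and the $T_*$-invariance of the attractor.) Moreover, since $\pi_{\US^-} \circ T = T \circ \pi_{\US^-}$, the marginal is preserved: $(\pi_{\US^-})_*(T_*^p\mu) = T_*^p(\pi_{\US^-})_*\mu = T_*^p\Xi = \Xi$.

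The crux, which I expect to be the main obstacle, is to show that causality is preserved under $T_*^p$. Conditional independence of sigma-algebras with respect to a push-forward $T_*^p\mu$ is equivalent to conditional independence of their $T^p$-preimages with respect to $\mu$. Using $\pi_{\XS^-}\circ T^p = T^p \circ \pi_{\XS^-}$ and $\pi_{\US^-}\circ T^p = T^p\circ\pi_{\US^-}$, one computes $(T^p)^{-1}\Sigma_{\Xc,t} = \Sigma_{\Xc,t-p}$, $(T^p)^{-1}\Sigma_{\Uc,t} = \Sigma_{\Uc,t-p}$, and $(T^p)^{-1}\Sigma_{\Uc,0} = \Sigma_{\Uc,-p}$, where all indices $t-p$ and $-p$ lie in $\Z_-$ because $t \le 0 < p$. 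Hence $T_*^p\mu$ is causal if and only if, for every $t \in \Z_-$, the sigma-algebras $\Sigma_{\Xc,t-p}$ and $\Sigma_{\Uc,-p}$ are conditionally independent given $\Sigma_{\Uc,t-p}$ with respect to $\mu$. Since $\Sigma_{\Uc,-p} \subseteq \Sigma_{\Uc,0}$, this follows from the causality of $\mu$ at the index $t-p$ (which provides conditional independence of $\Sigma_{\Xc,t-p}$ and $\Sigma_{\Uc,0}$ given $\Sigma_{\Uc,t-p}$) by the standard fact that conditional independence is inherited upon shrinking one of the two sides to a sub-sigma-algebra.

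Combining the three previous points, $T_*^p\mu$ is a causal stochastic solution for $\Xi$, so the causal stochastic ESP yields $T_*^p\mu = \mu$. It remains to identify the minimal period. Let $q$ denote the minimal period of $\mu$; then $q \mid p$ by the identity just established. Projecting $T_*^q\mu = \mu$ under $(\pi_{\US^-})_*$ and using $(\pi_{\US^-})_* T_*^q = T_*^q (\pi_{\US^-})_*$ gives $T_*^q\Xi = \Xi$, so $q$ is a period of $\Xi$ and therefore $p \mid q$. Hence $p = q$, as claimed. I expect the hypothesis $\US = \tau^{-1}(\US^-)$ to enter only when relating one-sided periodicity to the underlying bi-infinite shift dynamics and in guaranteeing the relevant shift-invariance of the input spaces; it plays no role in the core uniqueness argument above.
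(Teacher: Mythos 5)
Your proof is correct, and it takes a genuinely different route from the paper. You argue directly on the one-sided spaces: you show that $T_*^p$ maps causal stochastic solutions for $\Xi$ to causal stochastic solutions for $T_*^p\Xi = \Xi$ (via the support characterization in \cref{prop_stoch_sol} and closedness of $\DetSol$ for the solution property, the commutation $\pi_{\US^-} \circ T = T \circ \pi_{\US^-}$ for the marginal, and the pullback identities $(T^p)^{-1}(\Sigma_{\Xc,t}) = \Sigma_{\Xc,t-p}$, $(T^p)^{-1}(\Sigma_{\Uc,0}) = \Sigma_{\Uc,-p} \subseteq \Sigma_{\Uc,0}$ together with monotonicity of conditional independence for causality), and then let the causal stochastic ESP force $T_*^p\mu = \mu$; the divisibility argument for the minimal period is standard and correct. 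The paper instead embeds the statement into its abstract bundle framework: it proves that the restriction of $\varphi_*$ to the causal measures $\X_{\infty}$ has the USP (via \cref{lem_causal_at_zero} and \cref{lem_CondInd_shift}), invokes \cref{cor_USP_driver_comp} and \cref{prop_driver_comp_sol} on synchronized periodic points, and lifts the one-sided periodic input law to a $\sigma_*$-periodic law on $P(\US)$ by a Kolmogorov extension argument (\cref{lem_Kolmogorov}) -- which is precisely where the hypothesis $\US = \tau^{-1}(\US^-)$ is used. Your observation at the end is sharper than you suggest: your argument never touches the bi-infinite spaces, so the hypothesis $\US = \tau^{-1}(\US^-)$ is not needed at all, i.e.\ you prove a slightly stronger statement. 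What the paper's longer route buys is coherence with its general theory (the proof exhibits the proposition as an instance of the synchronization result generalizing \cite[Proposition 3]{ManjunathJaeger2013}) and, as a by-product of independent interest, the USP of the causal push-forward system on $\X_{\infty}$; what your route buys is brevity, self-containedness, and the removal of a superfluous hypothesis.
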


Under a compactness assumption, causal solutions always exist for any given input law.
In particular, unique solutions are automatically causal.

\begin{proposition}
\label{prop_stoch_causal_ESP}
	Suppose $(\pi_{\US^-})_* \colon \Pstate^- \rightarrow \Pin^-$ has compact fibers.
Then, $\StochCausalSol$ has non-empty fibers.
In particular, if the state-space system has the stochastic ESP (in the outputs), then it also has the causal stochastic ESP (in the outputs).
\end{proposition}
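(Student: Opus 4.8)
The plan is to establish non-emptiness of the fibers of $\StochCausalSol$ by an explicit approximation-and-compactness argument and then read off the echo-state consequence. Fix an input law $\Xi \in \Pin^-$ and choose a lift $\widehat{\Xi} \in \Pin$ with $\tau_*\widehat{\Xi} = \Xi$ (which exists since $\tau(\US) = \US^-$ surjectively and, under \cref{assmpt_prod_top}, admits a measurable section). First I would take a causal measure $\mu_0 \in \Pstate$ with $(\pi_{\US})_*\mu_0 = \widehat{\Xi}$; such a causal lift is precisely what the causal lifting lemma invoked earlier (\cref{lem_causal_at_zero}) supplies. I then propagate the state equation forward by setting $\mu_n := \varphi_*^n T_*^n \mu_0$. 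Unwinding $\varphi^n T^n$, this map leaves the input coordinate untouched and recomputes the top $n$ states of a realization by iterating $f$ from $\seq{X}{-n}$, so that $(\pi_{\US})_*\mu_n = \widehat{\Xi}$ and $\mu_n \in \Pstate$ (using that $\varphi_*$ and $T_*$ preserve $\Pstate$). Each $\mu_n$ is causal: its untouched tail states (times $\leq -n$) inherit the causality of $\mu_0$, while the recomputed states at times $-n < t \leq 0$ are measurable functions of $\seq{X}{-n}$ and of $\seq{U}{-n+1}, \dots, \seq{U}{t}$ alone, so past states depend on strictly future inputs only through the past inputs.

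Next I would pass to the limit using the hypothesis. Because $\varphi^n T^n$ fixes the input coordinate, $(\pi_{\US^-})_*\tau_*\mu_n = \tau_*\widehat{\Xi} = \Xi$, so the sequence $\tau_*\mu_n$ lies in the fiber $(\pi_{\US^-})_*^{-1}(\Xi) \cap \Pstate^-$, which is compact by hypothesis; since the spaces are Polish the weak topology is metrizable, so I extract a convergent subsequence $\tau_*\mu_{n_k} \to \nu$ with $(\pi_{\US^-})_*\nu = \Xi$. To see that $\nu \in \StochSol$ I would invoke the support characterization in \cref{prop_stoch_sol}: for each fixed $t \in \Z_-$ the set $\{(\Seq{x},\Seq{u}) : \seq{x}{t} = f(\seq{x}{t-1},\seq{u}{t})\}$ is closed (continuity of $f$ and topologies at least as fine as the product topologies), and $\tau_*\mu_n$ charges it fully once $n > -t$; the portmanteau theorem gives $\nu$ full mass on it, and intersecting over the countably many $t \in \Z_-$ yields $\nu(\DetSol) = 1$, hence $\nu \in \StochSol$ with input marginal $\Xi$.

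The crux is showing that $\nu$ is causal, since conditional independence is not preserved under weak limits in general. Here I would exploit that the state equation holds $\nu$-almost surely: for any $t' < t$ the states $\seq{X}{t'+1}, \dots, \seq{X}{t}$ are $\nu$-a.s.\ determined by $\seq{X}{t'}$ and $\seq{U}{t'+1}, \dots, \seq{U}{t}$, so $\Sigma_{\Xc,t} \subseteq \Sigma_{\Xc,t'} \vee \Sigma_{\Uc,t}$. Letting $t' \to -\infty$ reduces causality of $\nu$ to the conditional independence of the state tail $\bigcap_{t'} \Sigma_{\Xc,t'}$ from $\Sigma_{\Uc,0}$ given $\Sigma_{\Uc,t}$. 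In each approximant $\mu_n$ this tail is carried by $\mu_0$, whose causality makes it conditionally independent of the future inputs given the past; I expect this asymptotic, input-independent tail condition to be exactly the one that survives the limit, and making its survival rigorous (again via \cref{lem_causal_at_zero} and the forward structure of $\varphi$) is the main technical obstacle. Granting it, $\nu \in \StochCausalSol$ lies over $\Xi$, so every fiber of $\StochCausalSol$ is non-empty.

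Finally, the echo-state consequence is immediate. Under the stochastic ESP the fiber of $\StochSol$ over each $\Xi$ is a singleton; since that fiber also contains the causal solution just constructed, the unique stochastic solution is causal, which is the causal stochastic ESP. Pushing forward under the continuous map $H_*$ transports the same argument to $\StochCausalOut = H_*(\StochCausalSol)$, giving the statement for outputs.
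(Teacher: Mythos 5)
Your construction (propagate a lift forward by $\mu_n = \varphi_*^n T_*^n \mu_0$, extract a limit from the compact fiber, verify the state equation by portmanteau, then read off the ESP consequence) is the same architecture as the paper's proof, which runs it abstractly through \cref{prop_USP_struct_sol}. But the proof has a genuine gap exactly where you flag it: causality of the limit $\nu$. You reduce it to conditional independence of the tail sigma-algebra $\bigcap_{t'} \Sigma_{\Xc,t'}$ from $\Sigma_{\Uc,0}$ given $\Sigma_{\Uc,t}$ and then write ``I expect this \ldots to be exactly the one that survives the limit'' and ``Granting it'' --- that is an admission, not an argument, and the tail route is unlikely to be repairable: weak convergence gives no control whatsoever over tail sigma-algebras, which are highly discontinuous functionals of the law, so there is no mechanism by which a tail condition passes to the limit. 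A second, related slip: \cref{lem_causal_at_zero}.(iii) does \emph{not} supply a fully causal lift $\mu_0$, only one causal \emph{at time zero} (an element of $\X_0$); the existence of a fully causal measure in $\Pstate$ over a prescribed input law is essentially the statement you are trying to prove, so you cannot take it as your starting point. (This does not hurt the construction itself: with $\mu_0 \in \X_0$, forward-invariance gives that $\mu_n$ is causal at depths $0,-1,\dots,-n$, which is all one needs.)

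The paper closes the crux by a finite-depth mechanism that you could substitute for your tail argument. The key fact is \cref{lem_push_forw_cont}: conditional independence $\Sigma_1 \CondInd{\Sigma_2} \Sigma_3$ \emph{is} preserved under weak limits provided the joint law of the $(\Sigma_2,\Sigma_3)$-variables is held fixed along the sequence and only the $\Sigma_1$-part varies. That is exactly your situation: every $\tau_*\mu_n$ has the same input marginal $\Xi$, so for each fixed $t \in \Z_-$ the joint law of the conditioning variables (inputs up to time $t$) and the ``future'' variables (inputs up to time $0$) is constant in $n$, while causality at depth $t$ holds for all $n \geq -t$. Applying \cref{lem_push_forw_cont} depth by depth and intersecting over the countably many $t \in \Z_-$ gives $\nu \in \Pcausal$ with no tail argument. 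The paper packages precisely this as: the fibers of $\X_0$ are weakly closed (\cref{lem_causal_at_zero}.(ii), proved via \cref{lem_push_forw_cont}), hence so are the fibers of $(T_*^k)^{-1}(\X_0)$; the pullback sequence of \cref{prop_USP_struct_sol} therefore has its limit in $\Sc_{\varphi_*} \cap \X_{\infty}$; and \cref{lem_causal_trunc} truncates this to an element of $\StochCausalSol$ over $\Xi$. With that replacement your proposal becomes a correct proof; as written, it is incomplete.
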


\begin{example}
	Let us demonstrate that not all stochastic solutions are causal.
Suppose the state map $f(x,u) = x$ does not depend on the input.
Let $(\Seq{X},\Seq{U})$ be an $\XS^- \times \US^-$-valued random variable such that $\seq{U}{0}$ is not $T(\Seq{U})$-measurable and $\seq{X}{t} = \seq{U}{0}$ almost surely for all $t \in \Z_-$.
Then, their joint law belongs to $\StochSol$ and is not causal.
\end{example}

It is an open problem whether the causal analog of \cref{thrm_stoch_ESP_FMP} holds, that is, whether the causal stochastic ESP implies the causal stochastic FMP.
However, using causality, we are able to recover the generic fading memory in the stochastic case.
For the remaining two results in this subsection, we add the following assumption.

\begin{assumption}
\label{assmpt_concat_conv}
	For any $\Xi,\Xi' \in \Pin^-$, suppose that $\gamma^n_*(\Xi' \otimes \Xi)$ converges to $\Xi$ as $n \rightarrow \infty$ in the topology of $\Pin^-$, where $\gamma^n \colon \US^- \times \US^- \rightarrow \US^-$ denotes the map $\gamma^n(\Seq{u}',\Seq{u}) = (\dots,\seq{u}{-1}',\seq{u}{0}',\seq{u}{-n},\dots,\seq{u}{0})$.
\end{assumption}

\begin{remark}
	The map $\gamma^n$ is continuous by \cref{assmpt_det_SSS}.(iii).
Furthermore, we know from \cref{assmpt_det_SSS}.(iv) that $\gamma^n(\Seq{u}',\Seq{u})$ converges point-wise to $\Seq{u}$ as $n \rightarrow \infty$.
Thus, if the topology on $\Pin^-$ is the weak topology, then \cref{assmpt_concat_conv} is satisfied.
\end{remark}

\begin{proposition}
\label{prop_stoch_FMP_mESP_sol}
	The map $D(\StochCausalSol) \rightarrow \N_0 \cup \{\infty\}$, $\Xi \mapsto \#\StochCausalSol(\Xi)$ is constant.
Furthermore, if we denote this constant by $M$, then $\#\StochCausalSol(\Xi) \geq M$ for any $\Xi \in \Pin^-$.
In particular, if the state-space system has the causal stochastic FMP, then $\#\StochCausalSol(\Xi)$ is constant on $\Pin^-$, and if it has the causal stochastic FMP and $\#\StochCausalSol(\Xi) = 1$ for some $\Xi \in \Pin^-$, then it has the causal stochastic ESP.
\end{proposition}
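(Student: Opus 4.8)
The plan is to mirror the proof of the deterministic analog, Proposition~\ref{prop_det_FMP_mESP_sol}, which should carry over to the causal stochastic setting once the right continuity and concatenation tools are in place. The essential idea is that hemi-continuity of the set-valued fiber map $\StochCausalSol$ forces the cardinality $\#\StochCausalSol(\Xi)$ to be \emph{locally} constant on the domain $D(\StochCausalSol)$ where the causal stochastic FMP holds, and then a connectedness-type or density argument promotes local constancy to global constancy. Concretely, I would first show that upper hemi-continuity yields $\#\StochCausalSol(\Xi') \le \#\StochCausalSol(\Xi)$ for $\Xi'$ near $\Xi$ (distinct solutions can be separated by disjoint open sets, and upper hemi-continuity keeps each fiber inside their union), while lower hemi-continuity yields the reverse inequality $\#\StochCausalSol(\Xi') \ge \#\StochCausalSol(\Xi)$ (each of the finitely many solutions at $\Xi$ persists into a nearby fiber). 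Together these give that the cardinality is constant on a neighborhood of each point of $D(\StochCausalSol)$.

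The delicate part, and the reason this statement is separated from the deterministic one, is the \textbf{global} upgrade and the lower bound $\#\StochCausalSol(\Xi) \ge M$ valid on \emph{all} of $\Pin^-$, not merely on $D(\StochCausalSol)$. Here I would lean on Assumption~\ref{assmpt_concat_conv}: given any $\Xi \in \Pin^-$ and any reference law $\Xi'$ at which the cardinality equals $M$, the concatenated laws $\gamma^n_*(\Xi' \otimes \Xi)$ converge to $\Xi$, so one can transport $M$ distinct causal solutions \emph{along} this approximating sequence. The concatenation $\gamma^n$ glues a tail of $\Xi'$-distributed inputs to an initial segment of $\Xi$-distributed inputs, and the point of causality is precisely that it lets us splice the corresponding causal solutions coherently: because conditional independence ensures past states depend on future inputs only through past inputs, the push-forward of a causal solution under the gluing map remains a well-defined causal solution for the concatenated input law. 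This produces $M$ distinct elements of $\StochCausalSol(\gamma^n_*(\Xi' \otimes \Xi))$ whose limits, as $n \to \infty$, are $M$ distinct causal solutions for $\Xi$, giving $\#\StochCausalSol(\Xi) \ge M$.

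\textbf{The main obstacle} I anticipate is verifying that the spliced measures are genuinely causal and genuinely solutions after passing through $\gamma^n_*$, and that the $M$ limiting solutions remain \emph{distinct}. The first difficulty is that causality (Definition~\ref{def_causal}) is a statement about conditional independence across \emph{all} time shifts, so I must check that gluing a $\Xi'$-tail does not destroy the conditional independence of $\Sigma_{\Xc,t}$ and $\Sigma_{\Uc,0}$ given $\Sigma_{\Uc,t}$; this is exactly where the product-topology and Polish hypotheses of Assumption~\ref{assmpt_prod_top} earn their keep, since they allow disintegration of measures and the construction of regular conditional distributions. The second difficulty is distinctness in the limit: convergence $\gamma^n_*(\Xi'\otimes\Xi) \to \Xi$ does not a priori prevent two distinct approximating solutions from coalescing, so I would separate them using disjoint open sets in $\Pstate^-$ inherited from the reference fiber at $\Xi'$ and invoke the compactness afforded by properness of $(\pi_{\US^-})_*$ to extract convergent subsequences whose limits stay in those disjoint sets.

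Once the constant $M$ and the global lower bound are established, the final two assertions are immediate: if the causal stochastic FMP holds, then $D(\StochCausalSol) = \Pin^-$ and constancy is global; and if additionally $\#\StochCausalSol(\Xi) = 1$ at some point, then $M = 1$, so the unique causal solution exists for every input law, which is the causal stochastic ESP.
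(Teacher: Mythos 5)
Your argument for the global lower bound $\#\StochCausalSol(\Xi) \geq M$ runs the transport in the wrong direction, and this is a genuine gap rather than a technicality. You extend the $M$ causal solutions at the reference law $\Xi'$ \emph{forward} along the concatenated inputs $\gamma^n_*(\Xi' \otimes \Xi)$ and then pass to the limit $n \to \infty$. Two things break. First, extracting convergent subsequences needs compact fibers of $(\pi_{\US^-})_*$, but properness is \emph{not} a hypothesis of this proposition (it appears in \cref{thrm_stoch_ESP_FMP}, \cref{prop_stoch_causal_ESP}, and the concluding conjecture; here only \cref{assmpt_det_SSS,assmpt_prod_top,assmpt_concat_conv} are in force). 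Second, and fatally, distinctness is not preserved in the limit: what distinguishes the $M$ forward-extended solutions lives in the remote past, which is exactly the information that fades as $n \to \infty$. The paper's prominent example makes this concrete: for $f(x,u) = ux/(1+\abs{x})$, the three solutions $\Seq{0},\Seq{x}^+,\Seq{x}^-$ for the constant-$u^+$ input, once extended forward through constant-$u^-$ inputs, all collapse onto the zero solution. Note also that your construction never actually uses $\Xi' \in D(\StochCausalSol)$; if it were valid it would prove $\#\StochCausalSol$ constant on all of $\Pin^-$, which is false. Your preliminary "local constancy" step has a further error: upper hemi-continuity does \emph{not} give $\#\StochCausalSol(\Xi') \leq \#\StochCausalSol(\Xi)$ for nearby $\Xi'$ (a nearby fiber may contain several points inside a single one of the separating open sets), and the subsequent connectedness/density upgrade is unsupported since neither $\Pin^-$ nor $D(\StochCausalSol)$ is assumed connected.

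The paper's proof (this proposition is the identity-readout case of \cref{prop_stoch_FMP_mESP_out}, which is deduced from the abstract \cref{prop_const_card}) transports solutions in the opposite direction and thereby avoids all limits of solutions. For $\Xi \in D(\StochCausalSol)$ and $\Xi'$ \emph{arbitrary}, lower hemi-continuity at $\Xi$ applied to the net $\gamma^n_*(\Xi' \otimes \Xi) \to \Xi$ gives $\#\StochCausalSol(\Xi) \leq \limsup_n \#\StochCausalSol(\gamma^n_*(\Xi' \otimes \Xi))$ (\cref{lemma_lower_hemi}); then the backward shift $T_*^{n+1}$, which is injective on the attractor because $\varphi_* \circ T_* = \mathrm{id}$ there, maps the causal solutions of each concatenated law injectively into those of $\Xi'$, yielding $\#\StochCausalSol(\Xi) \leq \#\StochCausalSol(\Xi')$. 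This single inequality gives constancy on $D(\StochCausalSol)$ and the lower bound everywhere, with no compactness and no limit-taking. Finally, causality earns its keep not in "splicing solutions across the concatenation," as you suggest, but in verifying hypothesis (ii) of \cref{prop_const_card}: the truncated set of causal solutions depends on a bi-infinite input law only through its truncation, which is precisely what \cref{lem_causal_at_zero}.(iii) and \cref{lem_CondInd_shift}, combined with \cref{lem_causal_trunc}, establish.
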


As in the deterministic case, to extend this result to the outputs, we pose a control-theoretic condition.
Let $C := \{(h(\seq{x}{-1}),\Seq{u}) \colon (\Seq{x},\Seq{u}) \in \DetSol \}$.
Note that the state-space system distinguishes reachable states if and only if there exists a function $g \colon C \rightarrow \Yc$ that satisfies $g(h(\seq{x}{-1}),\Seq{u}) = h(\seq{x}{0})$ for all $(\Seq{x},\Seq{u}) \in \DetSol$.
This motivates the next definition.

\begin{definition}
	We say that the state-space system {\bfi measurably distinguishes reachable states} if the set $C$ is Borel measurable and there exists a measurable function $g \colon C \rightarrow \Yc$ that satisfies $g(h(\seq{x}{-1}),\Seq{u}) = h(\seq{x}{0})$ for all $(\Seq{x},\Seq{u}) \in \DetSol$.
\end{definition}

\begin{proposition}
\label{prop_stoch_FMP_mESP_out}
	If the state-space system measurably distinguishes reachable states, then the analog statement of \cref{prop_stoch_FMP_mESP_sol} holds with $\StochCausalOut$ in place of $\StochCausalSol$.
\end{proposition}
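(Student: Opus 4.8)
The plan is to follow the proof of \cref{prop_stoch_FMP_mESP_sol} almost verbatim, with $\StochCausalOut$ playing the role of $\StochCausalSol$. The only ingredient specific to solutions was that, by \cref{prop_stoch_sol}, a causal solution is supported on $\DetSol$ and hence its realizations satisfy the measurable state recursion $\seq{X}{t} = f(\seq{X}{t-1},\seq{U}{t})$ almost surely. The hypothesis that the system measurably distinguishes reachable states is exactly what supplies the analogous recursion for outputs. Concretely, I would first record that $\DetSol$ is forward-invariant under the right-shift $T$ (the state equation propagates coordinatewise), so that applying the defining relation $g(h(\seq{x}{-1}),\Seq{u}) = h(\seq{x}{0})$ of the selector $g \colon C \rightarrow \Yc$ to the shifted solution $T^{-t}(\Seq{x},\Seq{u}) \in \DetSol$ yields, for every $(\Seq{x},\Seq{u}) \in \DetSol$ and every $t \in \Z_-$, the output recursion $h(\seq{x}{t}) = g(h(\seq{x}{t-1}), T^{-t}\Seq{u})$, with all arguments lying in $C$. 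Since $C$ is Borel and $g$ is measurable, this describes $\DetOut$ by a measurable system of equations paralleling the description of $\DetSol$.

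Next I would establish the output analog of \cref{prop_stoch_sol}: an element of $\Pout^-$ lies in $\StochCausalOut$ if and only if it is causal and some (equivalently every) realization $(\Seq{Y},\Seq{U})$ satisfies $\seq{Y}{t} = g(\seq{Y}{t-1}, T^{-t}\Seq{U})$ almost surely for all $t \in \Z_-$. The forward inclusion follows from the first step together with the observation that $H_*$ preserves causality: since $\seq{Y}{t} = h(\seq{X}{t})$ generates a sub-sigma-algebra of $\Sigma_{\Xc,t}$ while the input coordinates are left untouched, conditional independence passes to these sub-sigma-algebras, so $H_*(\StochCausalSol) \subseteq \Pcausal$ in the output variables. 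Measurability of $g$ is what makes ``satisfies the recursion almost surely'' a well-posed Borel event and keeps $\StochCausalOut$ inside the Radon measures. With this characterization in hand, the counting argument of \cref{prop_stoch_FMP_mESP_sol} transfers directly: the concatenation-convergence \cref{assmpt_concat_conv} is a statement about input laws and is therefore unchanged, and the abstract hemi-continuity machinery depends only on the measurable-recursion structure, now furnished by $g$ rather than $f$. This yields that $\Xi \mapsto \#\StochCausalOut(\Xi)$ is constant on $D(\StochCausalOut)$, that $\#\StochCausalOut(\Xi) \geq M$ for the common value $M$ of that count and every $\Xi \in \Pin^-$, and the stated consequences under the causal stochastic FMP in the outputs.

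The main obstacle is that the selector $g$ reads the \emph{entire} input sequence, not merely the current symbol: two simultaneously reachable states with equal readout need not remain simultaneously reachable across different input sequences, so the output process is \emph{not} the state process of a genuine state-space system driven by a one-step map $\Xc \times \Uc \rightarrow \Xc$ (this is already visible in the dependence of $g$ on $\Seq{u}$ in \cref{rem_weakly_disting}). Consequently one cannot invoke \cref{prop_stoch_FMP_mESP_sol} as a black box; instead one must verify that its proof, and the abstract results it rests on, use only measurability of the defining recursion and \cref{assmpt_concat_conv}, never the single-step nature of $f$. The delicate points are checking that causal outputs form a fiberwise well-structured family that is closed under this more general recursion, and that the coordinatewise application of the measurable $g$ preserves causality at every time $t$; once these are in place, the remainder of the argument is routine.
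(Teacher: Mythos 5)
Your step 1 (the output recursion $h(\seq{x}{t}) = g(h(\seq{x}{t-1}),T^{-t}\Seq{u})$ on $\DetSol$) and your observation that $H_*$ preserves causality (via \cref{rem_CondInd_transform}) are both correct, and your instinct that the measurable selector $g$ must replace $f$ in exactly one step is the right one. The genuine gap is the ``if'' direction of your proposed characterization of $\StochCausalOut$ in step 2, on which your whole plan rests: you never show that a causal law $\nu \in \Pout^-$ whose realizations satisfy $\seq{Y}{t} = g(\seq{Y}{t-1},T^{-t}\Seq{U})$ almost surely lifts to some $\mu \in \StochCausalSol$ with $H_*\mu = \nu$, and this is false in general. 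The map $g$ is only pinned down on $C$; off $C$ its measurable extension is arbitrary, so the recursion admits spurious solutions. For instance, take $h$ constant equal to $c$, so that genuine outputs form the single constant sequence at $c$, and extend $g$ by $g(y,\Seq{u}) = y$ off $C$: then every time-constant $\Yc$-valued process, paired independently with any input law, is causal and satisfies the recursion, yet lies in $\StochCausalOut$ only if its value is $c$. Even if you additionally force the pairs $(\seq{Y}{t-1},T^{-t}\Seq{U})$ into $C$ almost surely, membership in $C$ at each separate time $t$ only produces, for each $t$, \emph{some} deterministic solution realizing that one pair; these need not patch into a single state process $\Seq{X}$ with $h(\seq{X}{t}) = \seq{Y}{t}$ for all $t$. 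Since your plan is to identify $\StochCausalOut(\Xi)$ with the solution set of the output recursion and then count the latter, you would be counting a possibly strictly larger set, and constancy of its cardinality says nothing about $\#\StochCausalOut(\Xi)$.

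A second, related gap is that ``the counting argument transfers directly'' hides precisely the step where causality does its work. In the injectivity-propagation step one controls the joint law of (outputs up to time $t-1$, inputs up to time $t-1$) together with the law of the full input, but to apply $g$ one needs the joint law of (outputs up to time $t-1$, inputs up to time $t$); these are different data, and the bridge is that among measures that are causal at the relevant time and share a fixed full-input law, truncation of the input is \emph{injective} --- the paper's \cref{lem_CondInd_injective}. The paper's proof avoids your step 2 entirely: it applies the abstract \cref{prop_const_card} to the original system on $\Pstate$ with $\eta = (H \circ \tau)_*$, so that $\eta(\Sc_{\varphi_*} \cap \X_{\infty} \cap \cdot\,) = \StochCausalOut(\cdot)$ holds by \cref{lem_causal_trunc} and no lifting from outputs back to states is ever needed; hypothesis (i) of \cref{prop_const_card} is then verified by first upgrading, via causality and \cref{lem_CondInd_injective}, the equality of truncated output-and-input laws to equality of the joint laws with the full bi-infinite input, and only then pushing forward through the measurable map $\Gc$ built from $g$ that appends the next output. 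If you reorganize your argument this way --- keeping causal stochastic solutions as the primary objects and using $g$ only inside that injectivity step --- the rest of your outline does go through.
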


In \cref{thrm_det_generic_FMP}, we uncovered that bifurcations of deterministic solutions can only occur within a meager set.
This was obtained by combining the generic FMP with the result that the number of solutions is constant on the domain of the FMP.
In the stochastic context, we saw in \cref{thrm_stoch_ESP_FMP} that the stochastic FMP holds generically.
However, the analog claim about bifurcations of solutions is proved only for causal stochastic solutions.
With causality, we do not know if the causal stochastic FMP holds generically.
Thus, we cannot infer the same result as \cref{thrm_det_generic_FMP} about bifurcations of stochastic solutions on a meager set.
We pose this open problem as a conjecture.

\begin{conjecture}
	Suppose that $\Pin^-$ and $\Pstate^-$ are Polish and that $(\pi_{\US^-})_* \colon \Pstate^- \rightarrow \Pin^-$ is proper.
Then, $\#\StochCausalSol(\Xi)$ is generically constant, and so is $\#\StochCausalOut(\Xi)$ if the state-space system measurably distinguishes reachable states.
\end{conjecture}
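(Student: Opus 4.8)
The plan is to mirror the deterministic argument behind \cref{thrm_det_generic_FMP} and reduce the conjecture to a single generic hemi-continuity statement. Recall that \cref{thrm_det_generic_FMP} was obtained by combining two ingredients: a generic-continuity theorem for set-valued maps (an usco map from a Baire space into a metric space is lower hemi-continuous, hence continuous, on a residual set) applied to $\DetSol$, together with \cref{prop_det_FMP_mESP_sol}, which forces $\#\DetSol$ to be constant wherever the FMP holds. In the present setting the second ingredient is already available: \cref{prop_stoch_FMP_mESP_sol} shows that $\Xi \mapsto \#\StochCausalSol(\Xi)$ is constant on $D(\StochCausalSol)$, and \cref{prop_stoch_FMP_mESP_out} transfers this to $\StochCausalOut$ once the system measurably distinguishes reachable states. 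Hence the whole conjecture collapses to proving that $D(\StochCausalSol)$ contains a residual subset of $\Pin^-$.

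To establish this, I would apply the same generic-continuity theorem used for the second part of \cref{thrm_stoch_ESP_FMP}, but now to the causal fiber map $\StochCausalSol \colon \Pin^- \to 2^{\Pstate^-}$. Under \cref{assmpt_prod_top} the space $\Pin^-$ is Polish, hence Baire, and $\Pstate^-$ is Polish, hence metrizable, so the theorem applies provided $\StochCausalSol$ is usco, that is, has nonempty compact values and a closed graph. Nonemptiness of the fibers is exactly \cref{prop_stoch_causal_ESP}, since properness of $(\pi_{\US^-})_*$ entails compact fibers. For the remaining two properties I would use that $\StochSol$ is weakly closed: by \cref{prop_stoch_sol}(iii) its elements are precisely the measures supported on the closed set $\DetSol$, and support-containment in a closed set is preserved under weak limits by the portmanteau theorem, so each fiber $\StochSol(\Xi) = \StochSol \cap (\pi_{\US^-})_*^{-1}(\Xi)$ is compact. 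The causal fiber is $\StochCausalSol(\Xi) = \StochSol(\Xi) \cap \Pcausal$, and both the compactness of these fibers and the closedness of the graph of $\StochCausalSol$ would follow at once if $\Pcausal$ were weakly closed.

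This last point is where I expect the real difficulty, and it is precisely the gap the authors flag when they note that generic validity of the causal stochastic FMP is unknown. Conditional independence of $\sigma$-algebras is not, in general, preserved under weak convergence, so $\Pcausal$ need not be weakly closed and the usco hypothesis may fail. The approach I would try in order to rescue it exploits \cref{assmpt_prod_top}: with the product topologies on $\US$ and $\US^-$, weak convergence is governed by the finite-dimensional distributions, so I would rewrite causality as a countable family of conditional-independence constraints on finite blocks of coordinates and test each for weak-closedness on the solution set. The extra leverage there is that on $\StochSol$ the state coordinates are pinned down by the continuous state map $f$ (the support lies in $\DetSol$), which rigidifies the conditional laws and may make the factorization defining causality survive in the limit. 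Should full closedness still fail, a fallback is to work with the already-residual set $D(\StochSol)$ from \cref{thrm_stoch_ESP_FMP} and argue that the causal selection inherits hemi-continuity wherever the ambient solution map is continuous; but controlling the causal part along such limits is exactly the obstruction, and I expect closedness of $\Pcausal$ under weak limits to be the decisive step.
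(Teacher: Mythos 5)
This statement is posed in the paper as a \emph{conjecture} -- an open problem -- so there is no proof of it in the paper to compare against, and your proposal does not close it either. Your reduction is sound as far as it goes: by \cref{prop_stoch_FMP_mESP_sol,prop_stoch_FMP_mESP_out}, the conjecture would indeed follow once one shows that $D(\StochCausalSol)$ contains a residual subset of $\Pin^-$, and the natural route is the generic-continuity result behind \cref{prop_closed_upper_hemi} (upper hemi-continuous maps into a Polish space are hemi-continuous on a residual set). But that result needs the set-valued map $\StochCausalSol$ to be upper hemi-continuous, i.e.\ it needs causality to survive weak limits $\mu_n \to \mu$ along which the input marginals $\Xi_n = (\pi_{\US^-})_*\mu_n$ themselves vary. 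This is exactly the step you flag as "the real difficulty," and it is a genuine gap, not a technicality: the paper's own discussion in \cref{sec_cond_ind} records that the set of measures satisfying a conditional-independence constraint is closed in neither the weak nor a Wasserstein topology (only in total variation), and \cref{lem_push_forw_cont} rescues closedness only when the joint law of the conditioning and conditioned input variables is \emph{fixed} along the sequence -- which is what yields closedness of the causal set \emph{within} a single fiber (\cref{lem_causal_at_zero}.(ii)), but not across fibers. Incidentally, this means you overstate what is missing for fiber compactness: compactness of each $\StochCausalSol(\Xi)$ already follows from \cref{lem_stoch_sol_closed}, properness of $(\pi_{\US^-})_*$, and the per-fiber closedness of the causal set; it is only the cross-fiber (graph) closedness that is out of reach.

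Your two proposed rescues remain speculative and do not supply the missing argument. Rewriting causality as countably many finite-block conditional-independence constraints does not help by itself, because each such constraint suffers the same failure of weak closedness when the conditioning marginal degenerates in the limit; you would need to show that membership in $\StochSol$ (support in $\DetSol$) actually prevents this degeneration, and no such argument is given. The fallback of restricting to $D(\StochSol)$ from \cref{thrm_stoch_ESP_FMP} faces the same obstruction: continuity of the ambient solution map does not control whether a weak limit of causal solutions is causal. So the proposal correctly identifies why the statement is hard -- in fact, it rediscovers precisely the obstruction the authors cite for leaving it open (the causal stochastic FMP is not known to hold generically) -- but it does not constitute a proof.
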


\section{On abstract dynamical systems}
\label{sec_abstract}

The results on state-space systems that we presented in the previous sections can (in part) be proved in a more general framework for abstract dynamical systems.
On the one hand, this gives us insights on exactly which properties of state-space systems are the ones that enable us to prove said results.
On the other hand, we will see that both the deterministic and the stochastic results can be accommodated in one unifying theory.

In \cref{subsec_dyn_sequence}, we teased a connection between solutions and attractors on the level of abstract dynamical systems.
This is the content of \cref{subsec_abstr_sol}.
Dynamical systems are broadly categorized in autonomous and non-autonomous ones.
We review how the notions of a solution for these systems are related to each other and propose a unifying framework.
Being intimately related to these solutions, we will also encounter generalized synchronizations \cite{ErogluLambPereira2017, KocarevParlitz1996}.
The global attractor of a dynamical system will make its appearance naturally along the way.

In \cref{subsec_dep_sol_base}, we prove the majority of results about how sets of solutions depend on a base point, which corresponds to the input in the context of state-space systems, ranging from continuous dependence to bifurcations.
Thereafter, \cref{sec_abstract_stochastic} discusses the distributional behavior of a random state of the dynamical system.

\subsection{Solutions of dynamical systems}
\label{subsec_abstr_sol}

The notions of a solution of a dynamical system in the following three subsections are standard material \cite{KloedenRasmussen2011, KloedenEtal2013} but need to be reviewed to set the stage for our proposed unifying framework in \cref{subsec_bundlesystems}.

\subsubsection{Autonomous systems}
\label{subsec_abstract_aut}

Let $\Phi \colon \X \rightarrow \X$ be an autonomous dynamical system.
A subset $A \subseteq \X$ is called forward-invariant if $\Phi(A) \subseteq A$ and strictly invariant if $\Phi(A) = A$.
Recall that a {\bfi solution} of $\Phi$ through $x \in \X$ is a bi-infinite trajectory through $x$, that is, a map $\chi \colon \Z \rightarrow \X$ satisfying $\chi(0) = x$ and $\chi(t+n) = \Phi^n(\chi(t))$ for all $t \in \Z$ and $n \in \N$ (but it suffices to check this for $n=1$).
We denote the set of solution points as
\begin{equation*}
	\Sc_{\Phi}
	:= \{ x \in \X \colon \text{there exists a solution of } \Phi \text{ through } x \}.
\end{equation*}
It is not difficult to show that $\Sc_{\Phi}$ is the maximal strictly invariant set, that is, $\Sc_{\Phi}$ is strictly invariant and any other strictly invariant subset $A \subseteq \X$ is contained in $\Sc_{\Phi}$.
In particular, $\Sc_{\Phi}$ contains all equilibria and periodic orbits.
If $\Phi$ is injective and $\Theta \colon \X \rightarrow \X$ is a left-inverse of $\Phi$, then $\chi \colon \Z \rightarrow \X$ is a solution of $\Phi$ through $x$ if and only if $\chi(n) = \Phi^n(x)$, $\chi(-n) = \Theta^n(x)$, and $\Phi^n(\Theta^n(x)) = x$ for all $n \in \N_0$.
In this case, $\Sc_{\Phi}$ becomes the maximal set on which $\Theta^n$ is also a right-inverse of $\Phi^n$ for any $n \in \N$;
\begin{equation*}
	\Sc_{\Phi}
	= \{ x \in \X \colon \Phi^n(\Theta^n(x)) = x \text{ for all } n \in \N \}.
\end{equation*}
In particular, if $\Phi$ is injective, then it follows easily that $\Sc_{\Phi} = \bigcap_{n \in \N} \Phi^n(\X)$.
There are competing definitions of an attractor of a dynamical system \cite{Milnor1985, OljaAshwinRasmussen2024}, discussing which goes beyond the scope of this paper.
In this paper, we call $\bigcap_{n \in \N} \Phi^n(\X)$ the {\bfi global attractor}.
Thus, if $\Phi$ is injective, then $\Sc_{\Phi}$ coincides with the global attractor.
We emphasize that injectivity is sufficient but not necessary for this.

\subsubsection{Non-autonomous systems: Processes}

Denote $\Z^2_{\geq} := \{(t_1,t_0) \in \Z \times \Z \colon t_1 \geq t_0\}$.
Let $\Psi \colon \Z^2_{\geq} \times \Xc \rightarrow \Xc$ be a non-autonomous dynamical system in the language of processes, that is, a map that satisfies the initial value condition $\Psi(t_0,t_0,x) = x$ and the evolution property $\Psi(t_2,t_0,x) = \Psi(t_2,t_1,\Psi(t_1,t_0,x))$ for all $t_2 \geq t_1 \geq t_0$.
Since the evolution of a process depends on the initial time, it matters at which time a solution passes through a point.
A solution of $\Psi$ through $x \in \Xc$ at time $t \in \Z$ is a map $\chi \colon \Z \rightarrow \Xc$ that satisfies $\chi(t) = x$ and $\chi(t_1) = \Psi(t_1,t_0,\chi(t_0))$ for all $(t_1,t_0) \in \Z^2_{\geq}$.
The set of solution points becomes a subset of $\Xc \times \Z$, namely
\begin{equation*}
	\Sc_{\Psi}^{\mathrm{process}}
	:= \{ (x,t) \in \Xc \times \Z \colon \text{there exists a solution of } \Psi \text{ through } x \text{ at time } t \}.
\end{equation*}
It is well known that a process can be turned into an autonomous dynamical system by augmenting the state space with a time coordinate.
Specifically, the map $\Phi \colon \Xc \times \Z \rightarrow \Xc \times \Z$, $(x,t) \mapsto (\Psi(t+1,t,x),t+1)$ satisfies $\Phi^{t_1-t_0}(x,t_0) = (\Psi(t_1,t_0,x),t_1)$ for all $(t_1,t_0) \in \Z^2_{\geq}$.
If $\chi$ is a solution of the process $\Psi$ through $x$ at time $t$, then the augmented map $\chi' \colon \Z \rightarrow \Xc \times \Z$, $s \mapsto (\chi(t+s),t+s)$ is a solution of the autonomous dynamical system $\Phi$ through $(x,t)$.
Conversely, if $\chi'$ is a solution of $\Phi$ through $(x,t)$, then the projection of $\chi'$ to the $\Xc$-coordinate is a solution of $\Psi$ through $x$ at time $t$.
Thus, $\Sc_{\Psi}^{\mathrm{process}} = \Sc_{\Phi}$.

\subsubsection{Non-autonomous systems: Skew-products}
\label{subsec_skewproducts}

Let $\phi \colon \Zc \rightarrow \Zc$ be an invertible autonomous dynamical system, and let $\Psi \colon \N_0 \times \Zc \times \Xc \rightarrow \Xc$ be a non-autonomous dynamical system driven by $\phi$ in the language of skew-products, that is, a map that satisfies the initial value condition $\Psi(0,z,x) = x$ and the co-cycle property $\Psi(n+m,z,x) = \Psi(n,\phi^m(z),\Psi(m,z,x))$ for all $m,n \in \N_0$.
Let $\mathrm{orb}_{\phi}(z) := \bigcup_{t \in \Z} \{\phi^t(z)\}$ denote the bi-infinite orbit of a point $z \in \Zc$ under $\phi$.
In the skew-product formalism, a solution of $\Psi$ through $x \in \Xc$ with input $z \in \Zc$ is a map $\chi \colon \mathrm{orb}_{\phi}(z) \rightarrow \Xc$ that satisfies $\chi(z) = x$ and $\chi(\phi^{t_1}(z)) = \Psi(t_1-t_0,\phi^{t_0}(z),\chi(\phi^{t_0}(z)))$ for all $(t_1,t_0) \in \Z^2_{\geq}$.
The set of solution points is analogously
\begin{equation*}
	\Sc_{\Psi}^{\mathrm{skew\text{-}product}}
	:= \{ (x,z) \in \Xc \times \Zc \colon \text{there exists a solution of } \Psi \text{ through } x \text{ with input } z \}.
\end{equation*}
As for processes, we may augment the state space of the skew-product and consider the autonomous dynamical system $\Phi \colon \Xc \times \Zc \rightarrow \Xc \times \Zc$, $(x,z) \mapsto (\Psi(1,z,x),\phi(z))$, which satisfies $\Phi^n(x,z) = (\Psi(n,z,x),\phi^n(z))$ for all $n \in \N_0$.
If $\chi$ is a solution of the skew-product $\Psi$ through $x$ with input $z$, then the augmented map $\chi' \colon \Z \rightarrow \Xc \times \Zc$, $t \mapsto (\chi(\phi^t(z)),\phi^t(z))$ is a solution of $\Phi$ through $(x,z)$.
Unlike for processes, the converse does not hold in general.
Thus, $\Sc_{\Psi}^{\mathrm{skew\text{-}product}} \subseteq \Sc_{\Phi}$, and the inclusion may be strict.
In the next section, we characterize the points that belong to $\Sc_{\Phi}$ but not to $\Sc_{\Psi}^{\mathrm{skew\text{-}product}}$.

More common in the literature is the notion of an {\bfi entire solution} $\chi \colon \Zc \rightarrow \Xc$, which is defined by the same characteristic property $\chi(\phi^{t_1}(z)) = \Psi(t_1-t_0,\phi^{t_0}(z),\chi(\phi^{t_0}(z)))$ but has domain $\Zc$.
Clearly, if $\chi$ is an entire solution, then its restriction to $\mathrm{orb}_{\phi}(z)$ is a solution of $\Psi$ through $\chi(z)$ with input $z$.
We could denote
\begin{equation*}
	\Sc_{\Psi}^{\mathrm{entire}}
	:= \{ (x,z) \in \Xc \times \Zc \colon \text{there exists an entire solution } \chi \text{ of } \Psi \text{ with } \chi(z) = x \}
	\subseteq \Sc_{\Psi}^{\mathrm{skew\text{-}product}}.
\end{equation*}
The subtlety of entire solutions lies in the dependence of existence of solutions for various inputs:
if a solution with input $z$ does not exist for some $z \in \Zc$, then neither does an entire solution exist, and then $\Sc_{\Psi}^{\mathrm{entire}}$ is empty even if $\Sc_{\Psi}^{\mathrm{skew\text{-}product}}$ is non-empty.
This makes the concept of an entire solution useful only if we have some additional knowledge of the dynamical system that guarantees the existence of at least one solution with each input.
If this guarantee is met, then $\Sc_{\Psi}^{\mathrm{entire}} = \Sc_{\Psi}^{\mathrm{skew\text{-}product}}$.

The non-autonomous dynamical system $\Psi$ has the {\bfi unique solution property (USP)} if for each $z \in \Zc$ there exists a unique solution of $\Psi$ with input $z$.
The USP is equivalent to the existence of a \textit{unique} entire solution.
In this case, that unique entire solution is called the {\bfi generalized synchronization} \cite{KocarevParlitz1996}.

\subsubsection{Autonomous systems on bundles}
\label{subsec_bundlesystems}

We introduce an overarching framework.
Consider a map $\pi \colon \X \rightarrow \Zc$ and an invertible autonomous dynamical system $\phi \colon \Zc \rightarrow \Zc$.
A {\bfi bundle map} over $\phi$ is a map $\Phi \colon \X \rightarrow \X$ that satisfies $\pi \circ \Phi = \phi \circ \pi$.
The autonomous dynamical system obtained from a skew-product in the previous section fits the special case $\X = \Xc \times \Zc$.
The autonomous dynamical system obtained from a process can be regarded as a bundle map over $\Z \rightarrow \Z$, $t \mapsto t+1$.
In fact, any autonomous dynamical system can be regarded as a bundle map by taking $\Zc$ to be a one-point space if no additional structure is available.

\begin{example}
\label{ex_bundle_dynamics}
	The push-forward $\varphi_* \colon \Pstate \rightarrow \Pstate$ from \cref{sec_stoch_SSS} is a bundle map over $\sigma_* \colon \Pin \rightarrow \Pin$ on the bundle $(\pi_{\Uc})_* \colon \Pstate \rightarrow \Pin$.
This dynamical system cannot be written as a skew-product since $P(\XS^- \times \US) \neq P(\XS^-) \times P(\US)$.
This exemplifies the need for autonomous systems on bundles as a generalization of skew-products.
\end{example}

\begin{remark}
	Suppose $\Phi$ is injective.
Then, $\Sc_{\Phi} = \bigcap_{n \in \N} \Phi^n(\X)$ as noted in \cref{subsec_abstract_aut}.
Since $\Phi$ is a bundle map over $\phi$, we have $\Sc_{\Phi} \cap \pi^{-1}(z) = \bigcap_{n \in \N} \Phi^n(\pi^{-1}(\phi^{-n}(z)))$.
Thus, if $\Phi$ encodes a non-autonomous dynamical system, say by augmentation, then $\Sc_{\Phi}$ corresponds to its global pullback attractor.
\end{remark}

Since $\Phi$ is an autonomous dynamical system, the initial notion of solution from \cref{subsec_abstract_aut} persists.
A {\bfi synchronized solution} of $\Phi$ through $x \in \X$ is a map $\chi \colon \mathrm{orb}_{\phi}(\pi(x)) \rightarrow \X$ that satisfies $\chi(\pi(x)) = x$ and $\Phi \circ \chi = \chi \circ \phi$.
The denomination `synchronized' is motivated by the fact that $\chi$ is a semi-conjugacy between $\Phi$ and $\phi$ on the orbit of $z$.
Let
\begin{equation*}
	\Sc_{\Phi}^{\mathrm{sync}}
	= \{ x \in \X \colon \text{there exists a synchronized solution of } \Phi \text{ through } x \}.
\end{equation*}
If $\chi$ is a synchronized solution of $\Phi$ through $x$, then $\chi' \colon \Z \rightarrow \X$, $t \mapsto \chi \circ \phi^t \circ \pi(x)$ is a solution of $\Phi$ through $x$.
Thus, $\Sc_{\Phi}^{\mathrm{sync}} \subseteq \Sc_{\Phi}$.
If $\Phi$ is the autonomous dynamical system obtained from augmenting a skew-product $\Psi$, then $\Sc_{\Phi}^{\mathrm{sync}} = \Sc_{\Psi}^{\mathrm{skew\text{-}product}}$.
To characterize $\Sc_{\Phi} \backslash \Sc_{\Phi}^{\mathrm{sync}}$, we need to consider periodic points.
Let $\Per{\Phi}$ and $\Per{\phi}$ denote the set of periodic points of $\Phi$ and $\phi$.
Since $\Phi$ is a bundle map over $\phi$, we have $\Per{\Phi} \subseteq \pi^{-1}(\Per{\phi})$, and the inclusion may be strict.
Similarly, let $\APer{\Phi} = \X \backslash \Per{\Phi}$ and $\APer{\phi} = \Zc \backslash \Per{\phi}$ be the set of aperiodic points of $\Phi$ and $\phi$.
Introduce the set of {\bfi synchronized periodic points} of $\Phi$ and $\phi$ as the set of all periodic points $x$ of $\Phi$ whose minimal period is not greater than the minimal period of $\pi(x)$ under $\phi$, that is,
\begin{equation*}
	\SPer{\Phi}{\phi}
	:= \{ x \in \X \colon \exists\, n \in \N \colon \Phi^n(x) = x \text{ and } \forall\, 1 \leq m < n \colon \phi^m(\pi(x)) \neq \pi(x) \}.
\end{equation*}
Heuristically, if we drive a system with a periodic input, we would hope that the system creates a stable response to the input with the same periodicity.
The following proposition shows that this heuristic is captured by the synchronized solutions.

\begin{proposition}
\label{prop_driver_comp_sol}
	The set of synchronized solution points is characterized by
\begin{equation*}
	\Sc_{\Phi}^{\mathrm{sync}}
	= \Big( \Sc_{\Phi} \cap \pi^{-1}( \APer{\phi} ) \Big) \cup \SPer{\Phi}{\phi}
\end{equation*}
and, hence,
\begin{equation*}
	\Sc_{\Phi} \backslash \Sc_{\Phi}^{\mathrm{sync}}
	= \Big( \Sc_{\Phi} \cap \APer{\Phi} \cap \pi^{-1}( \Per{\phi} ) \Big) \cup \Big( \Per{\Phi} \backslash \SPer{\Phi}{\phi} \Big).
\end{equation*}
\end{proposition}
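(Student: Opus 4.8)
The plan is to prove the first identity by partitioning the total space according to whether the base point $\pi(x)$ is periodic or aperiodic under $\phi$, and then to deduce the second identity as a purely set-theoretic consequence. Throughout I would use the elementary fact recorded in \cref{subsec_bundlesystems} that $\Sc_{\Phi}^{\mathrm{sync}} \subseteq \Sc_{\Phi}$, together with the bundle relation $\pi \circ \Phi = \phi \circ \pi$, which forces $\pi(\chi'(t)) = \phi^t(\pi(x))$ for any ordinary solution $\chi'$ of $\Phi$ through $x$, and likewise $\Phi^n \circ \chi = \chi \circ \phi^n$ for any synchronized solution $\chi$.

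First I would treat the aperiodic base points and show $\Sc_{\Phi}^{\mathrm{sync}} \cap \pi^{-1}(\APer{\phi}) = \Sc_{\Phi} \cap \pi^{-1}(\APer{\phi})$. The inclusion $\subseteq$ is immediate from $\Sc_{\Phi}^{\mathrm{sync}} \subseteq \Sc_{\Phi}$. For $\supseteq$, given $x \in \Sc_{\Phi}$ with $\pi(x)$ aperiodic, pick an ordinary solution $\chi' \colon \Z \rightarrow \X$ through $x$; since the points $\phi^t(\pi(x))$ are pairwise distinct, the assignment $\chi(\phi^t(\pi(x))) := \chi'(t)$ is a well-defined map on $\mathrm{orb}_{\phi}(\pi(x))$, and the solution relation $\chi'(t+1) = \Phi(\chi'(t))$ translates exactly into the synchronization relation $\Phi \circ \chi = \chi \circ \phi$, so $x \in \Sc_{\Phi}^{\mathrm{sync}}$. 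In short, over aperiodic base points synchronized and ordinary solutions are in bijection.

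Next I would treat periodic base points and show $\Sc_{\Phi}^{\mathrm{sync}} \cap \pi^{-1}(\Per{\phi}) = \SPer{\Phi}{\phi}$; since $\SPer{\Phi}{\phi} \subseteq \pi^{-1}(\Per{\phi})$, this is the heart of the argument. Let $\ell$ be the minimal $\phi$-period of $\pi(x)$. If $x$ admits a synchronized solution $\chi$, evaluating $\Phi^{\ell} \circ \chi = \chi \circ \phi^{\ell}$ at $\pi(x)$ and using $\phi^{\ell}(\pi(x)) = \pi(x)$ yields $\Phi^{\ell}(x) = x$, so $x$ is $\Phi$-periodic with some minimal period $k \mid \ell$; combined with $\ell \mid k$ (which always holds because $\Phi^k(x) = x$ forces $\phi^k(\pi(x)) = \pi(x)$), this gives $k = \ell$, i.e.\ $x \in \SPer{\Phi}{\phi}$. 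Conversely, if $x \in \SPer{\Phi}{\phi}$ then the same divisibility pins its minimal $\Phi$-period to $\ell$, so the finitely many orbit points $\phi^t(\pi(x))$ and the corresponding states $\Phi^t(x)$ share the same period; hence $\chi(\phi^t(\pi(x))) := \Phi^t(x)$ is well defined on the periodic orbit and is a synchronized solution through $x$. Taking the union over the partition $\X = \pi^{-1}(\APer{\phi}) \sqcup \pi^{-1}(\Per{\phi})$ of the two cases gives the first identity.

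Finally, the second identity follows formally. Writing the first identity as $\Sc_{\Phi}^{\mathrm{sync}} = (\Sc_{\Phi} \cap \pi^{-1}(\APer{\phi})) \cup \SPer{\Phi}{\phi}$ and complementing within $\Sc_{\Phi}$, I would use $\Sc_{\Phi} \backslash \pi^{-1}(\APer{\phi}) = \Sc_{\Phi} \cap \pi^{-1}(\Per{\phi})$ to obtain $\Sc_{\Phi} \backslash \Sc_{\Phi}^{\mathrm{sync}} = \Sc_{\Phi} \cap \pi^{-1}(\Per{\phi}) \cap (\X \backslash \SPer{\Phi}{\phi})$. Splitting this along $\X = \APer{\Phi} \sqcup \Per{\Phi}$ and using the inclusions $\SPer{\Phi}{\phi} \subseteq \Per{\Phi}$ (so the aperiodic piece contributes all of $\APer{\Phi}$) and $\Per{\Phi} \subseteq \Sc_{\Phi} \cap \pi^{-1}(\Per{\phi})$ (because periodic orbits are solutions and $\pi$ maps $\Phi$-periodic points to $\phi$-periodic ones) collapses the expression to the claimed union. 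The main obstacle is the period-matching step in the periodic case: one must establish that existence of a synchronized solution is equivalent to the $\Phi$-period and the $\phi$-period agreeing, which is precisely the two-sided divisibility argument $k \mid \ell$ and $\ell \mid k$; everything else is bookkeeping with the two partitions.
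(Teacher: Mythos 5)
Your proof is correct and follows essentially the same route as the paper's: the same conversion of an ordinary solution $\chi'$ into a synchronized one over aperiodic base points (well-defined because the orbit points $\phi^t(\pi(x))$ are pairwise distinct), the same period-forcing computation $\Phi^{\ell}(x) = \Phi^{\ell}(\chi(\pi(x))) = \chi(\phi^{\ell}(\pi(x))) = x$ over periodic base points, and the same explicit construction $\phi^t(\pi(x)) \mapsto \Phi^t(x)$ of a synchronized solution through a point of $\SPer{\Phi}{\phi}$. The only difference is bookkeeping: you prove the first identity case-by-case and deduce the second set-theoretically, whereas the paper establishes the disjoint four-set partition of $\Sc_{\Phi}$ up front and verifies three inclusions.
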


\begin{proof}
	Observe that $\Sc_{\Phi}$ is the disjoint union of the four sets $S_0 := \Sc_{\Phi} \cap \pi^{-1}( \APer{\phi} )$, $S_1 := \SPer{\Phi}{\phi}$, $S_2 := \Sc_{\Phi} \cap \APer{\Phi} \cap \pi^{-1}( \Per{\phi} )$, and $S_3 := \Per{\Phi} \backslash \SPer{\Phi}{\phi}$.
First, if $\chi$ is a synchronized solution of $\Phi$ through $x$ and if $z := \pi(x)$ is periodic under $\phi$ with period $n \in \N$, then $\Phi^n(x) = \Phi^n \circ \chi(z) = \chi \circ \phi^n(z) = \chi(z) = x$.
This shows that $S_2 \cup S_3 \subseteq \Sc_{\Phi} \backslash \Sc_{\Phi}^{\mathrm{sync}}$.
Second, if $x \in \SPer{\Phi}{\phi}$ and if $n \in \N$ is the minimal period of $z := \pi(x)$ under $\phi$, then the map $\{z,\phi(z),\dots,\phi^{n-1}(z)\} \rightarrow \X$, $\phi^t(z) \mapsto \Phi^t(x)$ defines a synchronized solution.
Thus, $S_1 \subseteq \Sc_{\Phi}^{\mathrm{sync}}$.
Lastly, if $\chi \colon \Z \rightarrow \X$ is a solution of $\Phi$ through $x$ and if $\pi(x) \in \APer{\phi}$, then the map $\mathrm{orb}_{\phi}(\pi(x)) \rightarrow \X$, $\phi^t(\pi(x)) \mapsto \chi(t)$ is well-defined and a synchronized solution.
Thus, $S_0 \subseteq \Sc_{\Phi}^{\mathrm{sync}}$.
\end{proof}

On the one hand, if $\Phi$ is a bundle map over $\phi \colon \Z \rightarrow \Z$, $t \mapsto t+1$, e.g.\ obtained from augmenting a process, then $\Per{\phi}$ is empty and $\Sc_{\Phi} = \Sc_{\Phi}^{\mathrm{sync}}$.
On the other hand, it is easy to construct examples of skew-products such that the bundle map obtained from augmentation admits points in either $\Sc_{\Phi} \cap \APer{\Phi} \cap \pi^{-1}( \Per{\phi} )$ or $\Per{\Phi} \backslash \SPer{\Phi}{\phi}$.
In particular, the maximal invariant set of a skew-product is, in general, not equal to the collection of solutions of the system.
This corrects a remark made in \cite{KloedenEtal2012}.

Extending the notion for skew-products, we can introduce an {\bfi entire solution} as a map $\chi \colon \Zc \rightarrow \X$ that satisfies $\Phi \circ \chi = \chi \circ \phi$ and $\pi \circ \chi = \mathrm{id}_{\Zc}$;
and we can say that $\Phi$ has the {\bfi unique solution property (USP)} if for every $z \in \Zc$ there exists a unique $x \in \pi^{-1}(z)$ through which there exists a solution of $\Phi$.
In other words, $\Phi$ has the USP if the fiber $\Sc_{\Phi} \cap \pi^{-1}(z)$ is a singleton for all $z \in \Zc$.
The same comment applies that we made for entire solutions of skew-products:
if there exists some $z \in \Zc$ such that through any $x \in \pi^{-1}(z)$ no solution exists, then neither does an entire solution exist.
However, the USP is equivalent to the existence of a unique entire solution, which is then the unique left-inverse of $\pi$ with range $\Sc_{\Phi}$.
As before, in this case, that unique entire solution is called the {\bfi generalized synchronization (GS)}.
In the language of autonomous systems on bundles, by definition, the generalized synchronization is a semi-conjugacy between the base dynamical system $\phi$ and the restriction of the dynamical system $\Phi$ to $\Sc_{\Phi}$.
In particular, that the dynamics of $\Phi$ on $\Sc_{\Phi}$ become synchronized with $\phi$ implies that $\Sc_{\Phi} \backslash \Sc_{\Phi}^{\mathrm{sync}}$ is empty, which we make rigorous in the corollary below.
This corollary generalizes \cite[Proposition 3]{ManjunathJaeger2013}.

\begin{corollary}
\label{cor_USP_driver_comp}
	Suppose $\Phi$ has the USP.
Then, $\Sc_{\Phi} = \Sc_{\Phi}^{\mathrm{sync}}$.
\end{corollary}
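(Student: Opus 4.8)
The plan is to leverage the explicit decomposition of $\Sc_{\Phi} \backslash \Sc_{\Phi}^{\mathrm{sync}}$ furnished by \cref{prop_driver_comp_sol}. Since the inclusion $\Sc_{\Phi}^{\mathrm{sync}} \subseteq \Sc_{\Phi}$ is already established, it suffices to prove the reverse inclusion, which by \cref{prop_driver_comp_sol} amounts to showing that the two sets $\Sc_{\Phi} \cap \APer{\Phi} \cap \pi^{-1}(\Per{\phi})$ and $\Per{\Phi} \backslash \SPer{\Phi}{\phi}$ are both empty under the USP.

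The crux is a single observation that disposes of both sets simultaneously. First I would prove the following claim: if $x \in \Sc_{\Phi}$ and $z := \pi(x)$ is periodic under $\phi$ with minimal period $n \in \N$, then $\Phi^n(x) = x$. To see this, recall from \cref{subsec_abstract_aut} that $\Sc_{\Phi}$ is (unconditionally) strictly invariant, so $\Phi^n(x) \in \Sc_{\Phi}$; moreover, since $\Phi$ is a bundle map over $\phi$, we have $\pi(\Phi^n(x)) = \phi^n(\pi(x)) = \phi^n(z) = z$. Thus both $x$ and $\Phi^n(x)$ lie in the fiber $\Sc_{\Phi} \cap \pi^{-1}(z)$, which is a singleton by the USP, forcing $\Phi^n(x) = x$.

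With this claim in hand, emptiness of the two sets is immediate. For the first set, if $x \in \Sc_{\Phi} \cap \pi^{-1}(\Per{\phi})$ then $z = \pi(x)$ is $\phi$-periodic, so the claim yields $\Phi^n(x) = x$, making $x$ periodic under $\Phi$ and hence $x \notin \APer{\Phi}$; thus the intersection with $\APer{\Phi}$ is empty. For the second set, if $x \in \Per{\Phi}$ then $\phi^{n}(z) = \pi(\Phi^{n}(x)) = \pi(x) = z$ shows $z \in \Per{\phi}$; taking $n$ to be the minimal $\phi$-period of $z$, the claim gives $\Phi^n(x) = x$, while minimality gives $\phi^m(z) \neq z$ for all $1 \leq m < n$. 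These are precisely the defining conditions for $x \in \SPer{\Phi}{\phi}$, so $\Per{\Phi} \backslash \SPer{\Phi}{\phi}$ is empty as well. Combining, $\Sc_{\Phi} \backslash \Sc_{\Phi}^{\mathrm{sync}} = \emptyset$, which yields the desired equality.

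I do not anticipate a genuine obstacle here: the USP provides exactly the fiberwise rigidity needed to collapse the $\Phi$-period of a solution down to the $\phi$-period of its base point, and the two error terms in \cref{prop_driver_comp_sol} are tailored to register precisely this discrepancy. The only point demanding minor care is the bookkeeping for $\SPer{\Phi}{\phi}$ in the second case—one must use the \emph{minimal} $\phi$-period $n$ as the witness period, so that the clause ``$\forall\, 1 \leq m < n$'' is satisfied by minimality of $n$ rather than requiring a separate comparison of $\Phi$-periods.
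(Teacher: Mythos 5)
Your proof is correct and follows essentially the same route as the paper: both reduce the claim via \cref{prop_driver_comp_sol} to showing that every solution lying over a $\phi$-periodic base point is a synchronized periodic point. The only cosmetic difference is that the paper derives the key equality $\Phi^n(x) = x$ from the equivariance of the generalized synchronization ($\Phi^n \circ \chi = \chi \circ \phi^n$), whereas you obtain it directly from strict invariance of $\Sc_{\Phi}$, the bundle property, and the singleton fibers guaranteed by the USP -- the same underlying fact, phrased in a slightly more self-contained way.
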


\begin{proof}
	By \cref{prop_driver_comp_sol}, we have to show that $\Sc_{\Phi} \cap \pi^{-1}(\Per{\phi}) \subseteq \SPer{\Phi}{\phi}$.
Let $\chi \colon \Zc \rightarrow \X$ be the generalized synchronization, and let $x \in \Sc_{\Phi}$.
If $z:= \pi(x)$ is periodic under $\phi$ with period $n \in \N$, then $\Phi^n(x) = \Phi^n \circ \chi(z) = \chi \circ \phi^n(z) = \chi(z) = x$, which shows that $x \in \SPer{\Phi}{\phi}$.
\end{proof}

Being a right-inverse of $\pi$, the generalized synchronization --- if it exists --- is always injective.
If $\X = \Xc \times \Zc$ is a product space, then the GS is a map of the form $\chi = \zeta \times \mathrm{id}_{\Zc}$ with $\zeta \colon \Zc \rightarrow \Xc$.
The GS for the skew-product is the standalone map $\zeta \colon \Zc \rightarrow \Xc$.
The map $\zeta$ is not necessarily injective, but if it is, then its left-inverse is a conjugacy between the true underlying dynamics of $\phi$ and the dynamics of the states of the skew-product on its pullback attractor.
The GS of an autonomous system on a bundle also defines a conjugacy but not the desired one, because the `extended states' in $\Xc \times \Zc$ contain information about the inputs.

\vspace*{1mm}\noindent
\begin{minipage}{0.6\textwidth}
\hspace*{15pt}%
To unify the concept of the GS, we have to compose the GS of an autonomous system on a bundle with the projection map $\Xc \times \Zc \rightarrow \Xc$.
Generalizing this idea to the case of non-product spaces $\X$, we have to compose the GS $\chi \colon \Zc \rightarrow \X$ with a map $\Pi \colon \X \rightarrow \Xc$, which we think of as a state-extraction map that forgets the encoding of the input.
Now, if $\Pi \circ \chi$ happens to be injective with left-inverse $\eta \colon \Pi(\Sc_{\Phi}) \rightarrow \Zc$, then we get the commutative diagram on the right, where $\hat{\Phi} = \Pi \circ \Phi \circ \chi \circ \eta \colon \Pi(\Sc_{\Phi}) \rightarrow \Pi(\Sc_{\Phi})$ captures the `states-only' dynamics of $\Phi$.
In particular, $\eta$ is a conjugacy between $\phi$ and $\hat{\Phi}$.
\end{minipage}%
\hfill%
\begin{minipage}{0.4\textwidth}
\begin{center}
\begin{tikzcd}[row sep = 3em, column sep = 3em, every label/.append style = {font = \normalsize}]
	\Pi(\Sc_{\Phi}) \arrow[bend right = 50]{dd}{\eta} \arrow[dashed]{r}{\hat{\Phi}} \pgfmatrixnextcell \arrow[swap, bend left = 50]{dd}{\eta} \Pi(\Sc_{\Phi})
	\\
	\arrow{u}{\Pi} \Sc_{\Phi} \arrow[shift left=1]{d}{\pi} \arrow{r}{\Phi} \pgfmatrixnextcell \Sc_{\Phi} \arrow[swap, shift right=1]{d}{\pi} \arrow{u}{\Pi}
	\\
	\Zc \arrow[shift left=1]{u}{\chi} \arrow{r}{\phi} \pgfmatrixnextcell \Zc \arrow[swap, shift right=1]{u}{\chi}
\end{tikzcd}%
\end{center}
\end{minipage}

\subsubsection{Parametrized autonomous systems}
\label{subsec_abstract_param}

Consider a parametrized family $(\Phi_z)_{z \in \Zc}$ of autonomous dynamical systems $\Phi_z \colon \Xc \rightarrow \Xc$.
Let $\X = \Xc \times \Zc$ and $\pi = \pi_{\Zc} \colon \X \rightarrow \Zc$ be the projection.
Then, $\Phi(x,z) := (\Phi_z(x),z)$ is a bundle map $\X \rightarrow \X$ over the identity $\Zc \rightarrow \Zc$.
It is easy to see that $\Sc_{\Phi_z} \times \{z\} = \Sc_{\Phi} \cap \pi_{\Zc}^{-1}(z)$.
Thus, a parametrized family of solution sets can be translated to the solution set of a single bundle map parametrized by its fibers.

We show that the converse can (almost) be done even if $\Phi \colon \Xc \times \Zc \rightarrow \Xc \times \Zc$ is a bundle map over a bijection $\phi \colon \Zc \rightarrow \Zc$ that is not necessarily the identity.
For all $z \in \Zc$, consider the autonomous dynamical system $\Phi_z \colon \Xc \times \Z \rightarrow \Xc \times \Z$ given by $\Phi_z(x,t) = (\pi_{\Xc} \circ \Phi(x,\phi^t(x)),t+1)$ (which is the autonomous system obtained from augmenting the process $\Psi_z$ on $\Xc$ determined by $\Psi_z(t+1,t,x) = \pi_{\Xc} \circ \Phi(x,\phi^t(x))$).
If $\chi \colon \Z \rightarrow \Xc \times \Z$ is a solution of $\Phi_z$ through $(x,0)$, then $\chi' \colon \Z \rightarrow \X$, $t \mapsto (\pi_{\Xc} \circ \chi(t),\phi^t(z))$ is a solution of $\Phi$ through $(x,z)$.
Conversely, if $\chi \colon \Z \rightarrow \Xc \times \Zc$ is a solution of $\Phi$ through $(x,z)$, then $\chi' \colon \Z \rightarrow \Xc \times \Z$, $t \mapsto (\pi_{\Xc} \circ \chi(t),t)$ is a solution of $\Phi_z$ through $(x,0)$.
Thus, the fiber $\Sc_{\Phi} \cap \pi_{\Zc}^{-1}(z)$ is in bijection with the fiber $\Sc_{\Phi_z} \cap \pi_{\Z}^{-1}(0)$ via $(x,z) \mapsto (x,0)$.
If $\phi$ is not the identity, we cannot get around making the domain of $\Phi_z$ partially discrete since $\Phi$ encodes non-autonomous dynamics.

The preceding paragraphs show that parametrized families of autonomous dynamical systems and autonomous systems on trivial bundles are equivalent if we are interested in their solution sets.
However, systems on bundles become more general when $\X$ is not a product space $\Xc \times \Zc$ as in \cref{ex_bundle_dynamics}.

\subsection{Dependence of solutions on base points}
\label{subsec_dep_sol_base}

For the remainder of \cref{sec_abstract}, let $\X$ and $\Zc$ be Hausdorff spaces and $\pi \colon \X \rightarrow \Zc$ be a continuous map.
Let $\phi \colon \Zc \rightarrow \Zc$ be a continuous bijection and $\Phi$ be a continuous bundle map over $\phi$, that is, $\pi \circ \Phi = \phi \circ \pi$.
In this section, we study how the fibers of $\Sc_{\Phi}$ vary with the base point.

\subsubsection{Continuity of set of solutions}

An important question in dynamical systems theory is whether the fibers of the global attractor depend continuously on the base point.
To address this question rigorously, we recall the notion of hemi-continuity, which generalizes the concept of continuity to set-valued maps in general topological spaces \cite{AubinFrankowska2009}.
Given a point $z_0 \in \Zc$, a set-valued map $S \colon \Zc \rightarrow 2^{\X}$ is {\bfi upper hemi-continuous} at $z_0$ if for any open set $W \subseteq \X$ containing $S(z_0)$ there exists an open set $V \subseteq \Zc$ containing $z_0$ such that $S(z) \subseteq W$ for all $z \in V$;
{\bfi lower hemi-continuous} at $z_0$ if for any open set $W \subseteq \X$ with $W \cap S(z_0) \neq \emptyset$ there exists an open set $V \subseteq \Zc$ containing $z_0$ such that $W \cap S(z) \neq \emptyset$ for all $z \in V$;
and {\bfi hemi-continuous} at $z_0$ if it is both upper and lower hemi-continuous at $z_0$.
A set-valued map is (upper/lower) hemi-continuous if it is (upper/lower) hemi-continuous at all points.\footnote{
Here, we consider general set-valued maps on Hausdorff spaces.
When $\X$ is a metric space and $S$ is compact-set-valued, then continuity of $S$ is typically defined as continuity with respect to the Hausdorff distance, which defines a metric on the set of non-empty compact subsets of $\X$.
In this case, $S$ is hemi-continuous if and only if $S$ is continuous with respect to the Hausdorff distance.
}
Note that if $S(z)$ is a singleton for any $z \in \Zc$, then upper hemi-continuity, lower hemi-continuity, hemi-continuity of $S$, and usual continuity of $S$ as a map $\Zc \rightarrow \X$ are all equivalent.
Using the bundle structure, by a slight abuse of notation, we say that a subset $S \subseteq \X$ is (upper/lower) hemi-continuous if the set-valued map $z \mapsto S \cap \pi^{-1}(z)$ is (upper/lower) hemi-continuous.
Under some assumptions on $\X$, $\Zc$, and $\pi$, closed subsets turn out to be automatically upper hemi-continuous and generically hemi-continuous; see \cref{prop_closed_upper_hemi}.
Recall that a map is proper if it is closed and has compact fibers;
and that a subset of a topological space is called residual if it can be written as the intersection of countably many subsets each of which has a dense interior.

\begin{proposition}
\label{prop_closed_upper_hemi}
	Suppose $\pi$ is proper, and let $S \subseteq \X$ be closed.
Then, $S$ is upper hemi-continuous.
Furthermore, if $\X$ and $\Zc$ are Polish, then the set of points in $\Zc$ at which $S$ is hemi-continuous is residual.
\end{proposition}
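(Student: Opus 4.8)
The plan is to prove the two assertions in turn, obtaining the generic hemi-continuity by combining the global upper hemi-continuity with a Baire-category argument.

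For the first assertion I would use only that $\pi$ is a closed map (properness gives this). Since $S$ is closed in $\X$, the restriction $\pi|_S$ is closed: a closed subset of $S$ is closed in $\X$ and hence has closed $\pi$-image. To check upper hemi-continuity at $z_0 \in \Zc$, fix open $W \subseteq \X$ with $S \cap \pi^{-1}(z_0) \subseteq W$. Then $S \setminus W$ is closed, so $\pi(S \setminus W)$ is closed, and it omits $z_0$ (else a point of $S \setminus W$ would sit in the fiber over $z_0$, contradicting $S \cap \pi^{-1}(z_0) \subseteq W$). Thus $V := \Zc \setminus \pi(S \setminus W)$ is an open neighborhood of $z_0$ on which no fiber of $S$ escapes $W$, i.e.\ $S \cap \pi^{-1}(z) \subseteq W$ for all $z \in V$. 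This is exactly upper hemi-continuity, and it holds at every point.

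For the second assertion, assume $\X$ and $\Zc$ are Polish. As upper hemi-continuity is already established everywhere, it suffices to exhibit a residual set of points at which $S$ is lower hemi-continuous. Fix a countable base $(W_k)_{k \in \N}$ of $\X$ and put $C_k := \pi(S \cap \overline{W_k}) = \{ z \in \Zc : (S \cap \pi^{-1}(z)) \cap \overline{W_k} \neq \emptyset \}$. Passing to the \emph{closure} $\overline{W_k}$ is the crucial move: it makes $S \cap \overline{W_k}$ closed, so $C_k$ is closed because $\pi$ is closed. The boundary $\partial C_k$ of a closed set is closed with empty interior (any open subset of $C_k$ lies in its interior), hence nowhere dense; therefore $\bigcup_k \partial C_k$ is meager and $R := \Zc \setminus \bigcup_k \partial C_k$ is residual, using that the Polish space $\Zc$ is Baire. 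I would then show $S$ is lower hemi-continuous at every $z_0 \in R$: given open $W$ meeting $S \cap \pi^{-1}(z_0)$ in a point $x$, regularity and second countability of the metrizable space $\X$ let me pick a basic $W_k$ with $x \in W_k$ and $\overline{W_k} \subseteq W$; then $z_0 \in C_k$, and $z_0 \notin \partial C_k$ forces $z_0 \in \mathrm{int}(C_k)$, on which every fiber of $S$ meets $\overline{W_k} \subseteq W$. As $W$ was arbitrary, lower hemi-continuity holds at $z_0$, and with the global upper hemi-continuity, $S$ is hemi-continuous throughout $R$. The set of hemi-continuity points then has complement contained in the meager set $\bigcup_k \partial C_k$ and is residual.

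The main obstacle is the second assertion, and specifically the idea that lower hemi-continuity can be controlled by the closed sets $C_k = \pi(S \cap \overline{W_k})$, whose boundaries are automatically nowhere dense. The only subtleties beyond this are routine: reducing an arbitrary open $W$ to a basic one with controlled closure, and handling empty fibers (uniformly, since $\pi(S)$ is closed). Notably, compactness of the fibers is not needed for either assertion beyond closedness of $\pi$; it merely ensures the fibers are compact, reconciling hemi-continuity with continuity in the Hausdorff distance.
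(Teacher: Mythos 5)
Your proof is correct, and it takes a genuinely different route from the paper's in both halves. For upper hemi-continuity the paper argues by contradiction with nets: given $x_i \in S \cap \pi^{-1}(z_i) \cap W^c$ with $z_i \to z$, properness produces a cluster point $x \in \pi^{-1}(z)$ of the net (this is the cluster-point characterization of proper maps in \cref{lemma_proper}), and closedness of $S \cap W^c$ yields the contradiction. Your observation that $V := \Zc \setminus \pi(S \setminus W)$ is itself the required neighborhood uses only that $\pi$ is a closed map; it is more elementary and strictly more general, since compactness of the fibers is never invoked -- a point you correctly flag. For the second half the paper gives no argument at all but cites \cite[Theorem 1.4.13]{AubinFrankowska2009}, whereas you supply a self-contained Fort-type Baire-category proof: $C_k := \pi(S \cap \overline{W_k})$ is closed because $S \cap \overline{W_k}$ is closed and $\pi$ is a closed map, boundaries of closed sets are nowhere dense, and at any $z_0$ outside the meager set $\bigcup_k \partial C_k$ lower hemi-continuity follows by shrinking a given $W$ to a basic $W_k$ with $x \in W_k \subseteq \overline{W_k} \subseteq W$ (regularity plus second countability of the Polish space $\X$), since then $z_0 \in C_k \setminus \partial C_k = \mathrm{int}(C_k)$ forces every fiber over $\mathrm{int}(C_k)$ to meet $\overline{W_k} \subseteq W$. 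The two facts your argument hinges on both check out (the boundary of a closed set has empty interior; every point of an open set lies in a basic set whose closure stays inside it), and your set $R$ is residual exactly in the paper's sense, being a countable intersection of dense open sets -- note this does not even require $\Zc$ to be Baire. What your route buys is self-containedness and a weaker hypothesis (closedness of $\pi$ suffices throughout, for both assertions); what the paper's route buys is brevity and reuse of \cref{lemma_proper}, which it needs elsewhere anyway. One minor remark: no special handling of empty fibers is needed, since lower hemi-continuity is vacuous at such points and your part-one argument already covers upper hemi-continuity there.
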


\begin{proof}
	Suppose for contradiction that $S$ is not upper hemi-continuous.
This entails that there exist an element $z \in \Zc$, an open set $W \subseteq \X$ containing $S \cap \pi^{-1}(z)$, a net $(z_i)_{i \in I}$ indexed by the open neighborhoods of $z$ and converging to $z$, and for each $i \in I$ an element $x_i \in S \cap \pi^{-1}(z_i) \cap W^c$.
Since $\pi$ is proper, $(x_i)_{i \in I}$ admits some cluster point $x \in \pi^{-1}(z)$; see \cref{lem_proper}.
Being a closed subset, $S \cap W^c$ must contain $x$, which contradicts $x \in S \cap \pi^{-1}(z) \subseteq W$.
This shows that $S$ is upper hemi-continuous.
It is known that an upper hemi-continuous map from a completely metrizable space to subsets of a Polish space is hemi-continuous on a residual set \cite[Theorem 1.4.13]{AubinFrankowska2009}.
\end{proof}

Given a parametrized family of autonomous dynamical systems, it has been studied when the individual attractors depend hemi-continuously on the parameter \cite{BabinPilyugin1997, DeshengKloeden2004, HoangOlsonRobinson2015, KloedenEtal2013}.
With our discussion in \cref{subsec_abstract_param}, we are able to recover those results that prove residual continuity of attractors of continuously parametrized dynamical systems on a compact Polish space.

\begin{proposition}
\label{prop_hemicont_param_attr}
	Suppose $\Zc$ is Polish and $\Phi_z \colon \Xc \rightarrow \Xc$, $z \in \Zc$, are autonomous dynamical systems on a compact Polish space $\Xc$ such that $\Xc \times \Zc \rightarrow \Xc$, $(x,z) \mapsto \Phi_z(x)$ is continuous.
Then, $\Sc_{\Phi_z} = \bigcap_{n \in \N} \Phi_z^n(\Xc)$ for all $z \in \Zc$, and the set $\Sc_{\Phi_z}$ depends hemi-continuously on $z$ in a residual set.
\end{proposition}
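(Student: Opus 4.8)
The plan is to prove the two assertions separately: first establish the fiber-wise formula $\Sc_{\Phi_z} = \bigcap_{n \in \N} \Phi_z^n(\Xc)$ by hand, and then deduce residual hemi-continuity from \cref{prop_closed_upper_hemi} applied to the bundle map assembled from the family $(\Phi_z)_z$. The reason the formula needs a separate argument is that the systems $\Phi_z$ need not be injective, so the characterization in \cref{subsec_abstract_aut} does not apply directly; instead I would replace injectivity by compactness of $\Xc$.

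For the formula, the inclusion $\Sc_{\Phi_z} \subseteq \bigcap_n \Phi_z^n(\Xc)$ is immediate, since $\Sc_{\Phi_z}$ is strictly invariant and hence $\Sc_{\Phi_z} = \Phi_z^n(\Sc_{\Phi_z}) \subseteq \Phi_z^n(\Xc)$ for every $n$. For the reverse inclusion I would exploit compactness. Given $x$ in the global attractor, for each $n$ pick $y_n$ with $\Phi_z^n(y_n) = x$ and assemble the finite backward segment $(x, \Phi_z^{n-1}(y_n), \dots, y_n)$ into a point $a^{(n)} \in \Xc^{\N_0}$ (padding the remaining coordinates with $y_n$). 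Since $\Xc$ is compact metric, $\Xc^{\N_0}$ is compact metrizable, so a subsequence $a^{(n_j)}$ converges to some $a$; continuity of $\Phi_z$ together with the fact that the relation $\Phi_z(a_{k+1}) = a_k$ holds at every fixed coordinate $k$ for all large $n_j$ forces $a$ to be a backward orbit through $x$. Splicing it with the forward iterates $(\Phi_z^m(x))_{m \geq 0}$ yields a bi-infinite orbit, so $x \in \Sc_{\Phi_z}$. This diagonal/limit argument is the technical heart of the proof, and I expect it to be the main obstacle because it is exactly what compensates for the missing injectivity.

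For the hemi-continuity claim I would pass to the bundle picture of \cref{subsec_abstract_param}: set $\X = \Xc \times \Zc$, let $\pi = \pi_{\Zc}$, and let $\Phi(x,z) = (\Phi_z(x), z)$, a continuous bundle map over $\mathrm{id}_{\Zc}$ satisfying $\Sc_{\Phi} \cap \pi^{-1}(z) = \Sc_{\Phi_z} \times \{z\}$. To invoke \cref{prop_closed_upper_hemi} I must verify that $\pi$ is proper, that $\X$ and $\Zc$ are Polish, and that $\Sc_{\Phi}$ is closed. Properness is clear because the fibers $\Xc \times \{z\}$ are compact and projection off a compact factor is a closed map; and $\X$ is Polish as a product of Polish spaces. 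For closedness I would show that each $\Phi^n(\X)$ is closed by the same compactness trick (a convergent sequence $(\Phi_{z_k}^n(x_k), z_k)$ has, after extracting $x_k \to x$, limit $(\Phi_z^n(x), z)$ by joint continuity of $(x,z) \mapsto \Phi_z^n(x)$), and then identify $\Sc_{\Phi} = \bigcap_n \Phi^n(\X)$ fiber-wise: over each $z$ the right-hand side is $(\bigcap_n \Phi_z^n(\Xc)) \times \{z\}$, which equals $\Sc_{\Phi_z} \times \{z\} = \Sc_{\Phi} \cap \pi^{-1}(z)$ by the first part.

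Finally, \cref{prop_closed_upper_hemi} yields that $\Sc_{\Phi}$, viewed as a subset of $\X$, is upper hemi-continuous everywhere and hemi-continuous on a residual subset of $\Zc$. It then remains to transfer this to the set-valued map $z \mapsto \Sc_{\Phi_z} \subseteq \Xc$, which is routine: both upper and lower hemi-continuity of $z \mapsto \Sc_{\Phi_z}$ can be tested against open sets of the form $W' \times \Zc$ with $W' \subseteq \Xc$ open, so hemi-continuity of $\Sc_{\Phi}$ in $\X$ at a point $z$ immediately implies hemi-continuity of $z \mapsto \Sc_{\Phi_z}$ at $z$. Hence the latter map is hemi-continuous on the same residual set, completing the argument.
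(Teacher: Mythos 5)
Your proposal is correct and takes essentially the same route as the paper: both pass to the bundle map $\Phi(x,z) = (\Phi_z(x),z)$ over $\mathrm{id}_{\Zc}$, use compactness of $\Xc$ to obtain properness of $\pi_{\Zc}$ and closedness of the nested images $\Phi^n(\Xc \times \Zc)$, identify $\Sc_{\Phi} = \bigcap_{n \in \N} \Phi^n(\Xc \times \Zc)$, and conclude via \cref{prop_closed_upper_hemi}. The only variation is internal to one step: where you build a backward orbit directly as a limit of finite backward segments in the compact space $\Xc^{\N_0}$, the paper shows the intersection is strictly invariant (by the same kind of convergent-subnet extraction) and appeals to maximality of $\Sc_{\Phi}$ among strictly invariant sets -- interchangeable compactness arguments.
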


\begin{proof}
	As in \cref{subsec_abstract_param}, consider $\X = \Xc \times \Zc$ and $\Phi \colon \Xc \times \Zc \rightarrow \Xc \times \Zc$, $(x,z) \mapsto (\Phi_z(x),z)$.
Note that the projection $\Xc \times \Zc \rightarrow \Zc$ is proper and the sets $\Phi^n(\Xc \times \Zc)$, $n \in \N$, are closed by compactness of $\Xc$.
We show that $\Sc_{\Phi} = \bigcap_{n \in \N} \Phi^n(\Xc \times \Zc) =: \Ac$.
It is easy to see that $\Sc_{\Phi}$ is contained in $\Ac$.
Since $\Sc_{\Phi}$ is the maximal strictly invariant set, it therefore suffices to show that $\Ac$ is strictly invariant.
That $\Ac$ is forward-invariant is clear.
That $\Ac \subseteq \Phi(\Ac)$ can be shown by a standard argument, which we briefly repeat here.
Suppose $(x,z) \in \Ac$.
In particular, there exists a sequence $(x_n)_{n \in \N} \subseteq \Xc$ with $(x_n,z) \in \Phi^n(\Xc \times \Zc)$ and $(x,z) = \Phi(x_n,z)$.
By compactness, $(x_n)_{n \in \N}$ has some convergent subnet with limit $(x',z)$.
Since $\Phi^n(\Xc \times \Zc)$ is a nested sequence of closed sets, we necessarily have $(x',z) \in \Ac$.
By continuity, $(x,z) = \Phi(x',z) \in \Phi(\Ac)$.
We have shown that $\Sc_{\Phi} = \Ac$ is the intersection of closed sets, hence itself closed.
This enables us to apply \cref{prop_closed_upper_hemi} to conclude that $\Sc_{\Phi_z}$ depends hemi-continuously on $z$ in a residual set.
The same argument as above shows that $\Sc_{\Phi_z} = \bigcap_{n \in \N} \Phi_z^n(\Xc)$.
\end{proof}

We conclude this section with an application of \cref{prop_closed_upper_hemi} to dynamical systems with the USP.

\begin{proposition}
	Suppose $\pi$ is proper and $\Phi$ has the USP.
Furthermore, suppose $\X$ is compact or $\Phi$ admits a continuous left-inverse.
Then, $\Sc_{\Phi} = \bigcap_{n \in \N} \Phi^n(\X)$ is (hemi-)continuous.
\end{proposition}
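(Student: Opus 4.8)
The plan is to prove the two assertions separately: first the identity $\Sc_{\Phi} = \bigcap_{n \in \N} \Phi^n(\X) =: \Ac$, and then the (hemi\nobreakdash-)continuity of this set, splitting throughout according to the two hypotheses. If $\Phi$ admits a continuous left-inverse $\Theta$, then $\Phi$ is injective, and the identity $\Sc_{\Phi} = \bigcap_{n \in \N}\Phi^n(\X)$ is exactly the characterization of the maximal strictly invariant set recorded for injective maps in \cref{subsec_abstract_aut}. If instead $\X$ is compact, I would recover the identity by the same argument already used in the proof of \cref{prop_hemicont_param_attr}: the inclusion $\Sc_{\Phi} \subseteq \Ac$ holds because every strictly invariant set $A$ satisfies $A = \Phi^n(A) \subseteq \Phi^n(\X)$, and for the reverse inclusion it suffices to show that $\Ac$ is strictly invariant. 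Forward invariance $\Phi(\Ac) \subseteq \Ac$ is immediate from the nestedness of the decreasing sequence $(\Phi^n(\X))_n$. For $\Ac \subseteq \Phi(\Ac)$, given $x \in \Ac$ I would choose, for each $n$, a preimage $x_n \in \Phi^n(\X)$ with $\Phi(x_n) = x$; compactness yields a cluster point $x'$ of the net $(x_n)$, which lies in every $\Phi^m(\X)$ (each being closed and containing a tail of the net), hence in $\Ac$, and continuity of $\Phi$ along a convergent subnet gives $\Phi(x') = x$. Since $\Sc_{\Phi}$ is the maximal strictly invariant set, this forces $\Ac = \Sc_{\Phi}$.

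Next I would establish that $\Sc_{\Phi}$ is closed, so as to invoke \cref{prop_closed_upper_hemi}. When $\X$ is compact, each $\Phi^n(\X)$ is a compact, hence closed, subset of the Hausdorff space $\X$, so $\Ac = \Sc_{\Phi}$ is closed as an intersection of closed sets. When $\Phi$ has a continuous left-inverse $\Theta$, I would instead write $\Sc_{\Phi} = \bigcap_{n \in \N} \{ x \in \X \colon \Phi^n(\Theta^n(x)) = x \}$, again from \cref{subsec_abstract_aut}; each set in this intersection is the equalizer of the two continuous maps $\Phi^n \circ \Theta^n$ and $\mathrm{id}_{\X}$ into the Hausdorff space $\X$, hence closed, so $\Sc_{\Phi}$ is closed. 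In either case, since $\pi$ is proper and $\Sc_{\Phi}$ is closed, \cref{prop_closed_upper_hemi} shows that $\Sc_{\Phi}$ is upper hemi-continuous.

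Finally, I would upgrade upper hemi-continuity to full (hemi\nobreakdash-)continuity using the USP. Because $\Phi$ has the USP, each fiber $\Sc_{\Phi} \cap \pi^{-1}(z)$ is a singleton, so the fiber map $z \mapsto \Sc_{\Phi} \cap \pi^{-1}(z)$ is single-valued and coincides with the generalized synchronization $\chi \colon \Zc \rightarrow \X$. As noted after the definition of hemi-continuity in \cref{subsec_dep_sol_base}, for a singleton-valued map upper hemi-continuity, lower hemi-continuity, hemi-continuity, and ordinary continuity of the associated single-valued map all coincide. Hence the upper hemi-continuity obtained above already yields that $\Sc_{\Phi}$ is hemi-continuous, equivalently that $\chi$ is continuous.

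The main obstacle is the strict invariance of $\Ac$ in the compact case, namely the inclusion $\Ac \subseteq \Phi(\Ac)$, which requires the cluster-point/subnet argument; this is the one nontrivial step, and it is precisely the reason a hypothesis beyond continuity (compactness, or injectivity via a continuous left-inverse) is needed. Everything else is a matter of assembling earlier results: the attractor identity from \cref{subsec_abstract_aut}, upper hemi-continuity from \cref{prop_closed_upper_hemi}, and the singleton-fiber equivalence between hemi-continuity and continuity.
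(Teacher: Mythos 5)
Your proposal is correct and follows essentially the same route as the paper: both reduce the statement to \cref{prop_closed_upper_hemi} plus the singleton-fiber equivalence of hemi-continuity and continuity under the USP, establishing the identity $\Sc_{\Phi} = \bigcap_{n \in \N} \Phi^n(\X)$ and closedness via the compactness argument of \cref{prop_hemicont_param_attr} in one case and the left-inverse characterization from \cref{subsec_abstract_aut} in the other. The only (harmless) deviation is in the left-inverse case, where you obtain closedness of $\Sc_{\Phi}$ as an intersection of equalizers of the continuous maps $\Phi^n \circ \Theta^n$ and $\mathrm{id}_{\X}$ in the Hausdorff space $\X$, whereas the paper instead invokes \cref{lemma_proper} to conclude that $\Phi$ is a closed map, so that each $\Phi^n(\X)$ is closed; both are valid one-line arguments.
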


\begin{proof}
	We pointed out further above that upper hemi-continuity, hemi-continuity, and usual continuity agree for singleton-valued maps.
Hence, by \cref{prop_closed_upper_hemi}, we only have to verify that $\Sc_{\Phi}$ equals $\bigcap_{n \in \N} \Phi^n(\X)$ and is a closed set.
For compact $\X$, this is done exactly as in the proof of \cref{prop_hemicont_param_attr}.
We noted in \cref{subsec_abstract_aut} that if $\Phi$ admits a continuous left-inverse, then $\Sc_{\Phi}$ coincides with the global attractor.
Furthermore, the existence of a continuous left-inverse entails that $\Phi$ is closed; see \cref{lem_proper}.
In particular, the global attractor is a closed set.
\end{proof}

\subsubsection{Bifurcations}

Beyond continuity of the fibers of $\Sc_{\Phi}$, another interesting question is whether something can be said about the number of solutions in a fiber as a function of the base point.
To this end, we review the effect of lower hemi-continuity.
Let $\Y$ and $\Zc_-$ be Hausdorff spaces.

\begin{lemma}
\label{lem_lower_hemi}
	Suppose $\Oc \colon \Zc_- \rightarrow 2^{\Y}$ is lower hemi-continuous at $z_- \in \Zc_-$.
If $(z_-^i)_{i \in I} \subseteq \Zc_-$ is a convergent net with limit $z_-$, then $\# \Oc(z_-) \leq \limsup_i \# \Oc(z_-^i)$.
\end{lemma}

\begin{proof}
	Suppose for contradiction that $\# \Oc(z_-) > \limsup_i \# \Oc(z_-^i) =: M$.
In particular, $M < \infty$.
Take $M+1$ distinct points $y_0,\dots,y_M \in \Oc(z_-)$.
Since $\Y$ is Hausdorff, we can take disjoint open neighborhoods $W_m \subseteq \Y$ of $y_m$.
By lower hemi-continuity, there exists an $i_0 \in I$ such that $W_m \cap \Oc(z_-^i) \neq \emptyset$ for all $i \geq i_0$ and all $0 \leq m \leq M$.
In other words, $\Oc(z_-^i)$ contains at least one point in each of the $M+1$ disjoint sets $W_0,\dots,W_M$, which contradicts $\limsup_i \# \Oc(z_-^i) = M$.
\end{proof}

The hypotheses in the next result may seem artificial but they fit precisely the framework of state-space systems as we will see later.

\begin{proposition}
\label{prop_const_card}
	Suppose that $\Phi$ admits a continuous left-inverse $\Theta \colon \X \rightarrow \X$ and that there exist maps $\tau \colon \Zc \rightarrow \Zc_-$ and $\eta \colon \X \rightarrow \Y$ and a subset $\X_{\infty} \subseteq \X$ that is forward-invariant under $\Theta$ such that the following hold.
\begin{enumerate}[\upshape (i)]\itemsep=0em
\item\label{prop_const_card_assmpt_indisting}
For any $x,x' \in \Sc_{\Phi} \cap \X_{\infty}$, if $\eta(\Theta(x)) = \eta(\Theta(x'))$ and $\pi(x) = \pi(x')$, then $\eta(x) = \eta(x')$.\footnote{
Note that this always holds if $\eta$ is the identity since $x = \Phi(\Theta(x)) = \Phi(\Theta(x')) = x'$ for all $x,x' \in \Sc_{\Phi}$ with $\Theta(x) = \Theta(x')$.}

\item\label{prop_const_card_assmpt_stable_sol}
For any $z,z' \in \Zc$ with $\tau(z) = \tau(z')$, it holds that $\eta(\Sc_{\Phi} \cap \X_{\infty} \cap \pi^{-1}(z)) = \eta(\Sc_{\Phi} \cap \X_{\infty} \cap \pi^{-1}(z'))$.

\item\label{prop_const_card_assmpt_reachable_input}
For all $z_-,z_-' \in \Zc_-$ there exist nets $(z_i)_{i \in I} \subseteq \Zc$ and $(k_i)_{i \in I} \subseteq \N_0$ such that $\tau(z_i) \rightarrow z_-$ and $\tau(\phi^{-k_i}(z_i)) = z_-'$ for all $i \in I$.
\end{enumerate}
Let $\Zc_-^* \subseteq \Zc_-$ be the set of points at which $z_- \mapsto \Oc(z_-) := \eta(\Sc_{\Phi} \cap \X_{\infty} \cap (\tau \circ \pi)^{-1}(z_-))$ is lower hemi-continuous.
Then, $\#\Oc(z_-) \leq \#\Oc(z_-')$ for any $z_- \in \Zc_-^*$ and any $z_-' \in \Zc_-$.
In particular, $\#\Oc(z_-)$ is constant on $\Zc_-^*$.
\end{proposition}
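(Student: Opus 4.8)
The plan is to reduce the statement to a single monotonicity estimate for the $\eta$-images of the fibers of $\Sc_{\Phi} \cap \X_{\infty}$, and then to feed that estimate into \cref{lemma_lower_hemi}, using hypothesis (iii) to compare an arbitrary fiber against one sitting over a point of lower hemi-continuity. I begin with the dynamical bookkeeping. Since $\Theta$ is a continuous left-inverse of $\Phi$, the map $\Phi$ is injective; and since $\Sc_{\Phi}$ is strictly invariant under $\Phi$, $\Theta\Phi = \mathrm{id}$ holds everywhere, and $\Phi^n\Theta^n = \mathrm{id}$ holds on $\Sc_{\Phi}$ by the characterization in \cref{subsec_abstract_aut}, the maps $\Phi$ and $\Theta$ restrict to mutually inverse bijections of $\Sc_{\Phi}$. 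Evaluating $\pi \circ \Phi = \phi \circ \pi$ at $\Theta(x)$ for $x \in \Sc_{\Phi}$ and using $\Phi\Theta = \mathrm{id}$ there gives $\pi \circ \Theta = \phi^{-1} \circ \pi$ on $\Sc_{\Phi}$; together with $\Theta(\X_{\infty}) \subseteq \X_{\infty}$ this shows that $\Theta$ carries $\Sc_{\Phi} \cap \X_{\infty} \cap \pi^{-1}(z)$ into $\Sc_{\Phi} \cap \X_{\infty} \cap \pi^{-1}(\phi^{-1}(z))$. Writing $\Oc'(z) := \eta(\Sc_{\Phi} \cap \X_{\infty} \cap \pi^{-1}(z))$, hypothesis (ii) says exactly that $\Oc'(z)$ depends only on $\tau(z)$, so that $\Oc(z_-) = \Oc'(z)$ for every $z \in \tau^{-1}(z_-)$.

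The heart of the argument is the claim that $\#\Oc'(z) \leq \#\Oc'(\phi^{-1}(z))$ for every $z \in \Zc$, and here hypothesis (i) is precisely what is needed. It guarantees that the assignment $\eta(\Theta(x)) \mapsto \eta(x)$ is single-valued on the fiber over $z$, i.e.\ there is a well-defined map $g_z$ on $A_z := \eta(\Theta(\Sc_{\Phi} \cap \X_{\infty} \cap \pi^{-1}(z)))$ satisfying $g_z(A_z) = \Oc'(z)$. As $g_z$ is a function, $\#\Oc'(z) \leq \#A_z$; and by the previous paragraph $A_z \subseteq \Oc'(\phi^{-1}(z))$, whence $\#A_z \leq \#\Oc'(\phi^{-1}(z))$. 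Iterating yields $\#\Oc'(z) \leq \#\Oc'(\phi^{-k}(z))$ for all $k \in \N_0$.

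Finally I would assemble the pieces. Fix $z_- \in \Zc_-^*$ and an arbitrary $z_-' \in \Zc_-$, and invoke (iii) to obtain nets $(z_i)_{i \in I} \subseteq \Zc$ and $(k_i)_{i \in I} \subseteq \N_0$ with $\tau(z_i) \rightarrow z_-$ and $\tau(\phi^{-k_i}(z_i)) = z_-'$ for all $i$. Since $z_i \in \tau^{-1}(\tau(z_i))$ and $\phi^{-k_i}(z_i) \in \tau^{-1}(z_-')$, the factorization from the first paragraph gives $\Oc(\tau(z_i)) = \Oc'(z_i)$ and $\Oc(z_-') = \Oc'(\phi^{-k_i}(z_i))$, so the monotonicity claim yields $\#\Oc(\tau(z_i)) \leq \#\Oc(z_-')$ for every $i$. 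Applying \cref{lemma_lower_hemi} to the net $\tau(z_i) \rightarrow z_-$ at the point $z_- \in \Zc_-^*$, where $\Oc$ is lower hemi-continuous, gives $\#\Oc(z_-) \leq \limsup_i \#\Oc(\tau(z_i)) \leq \#\Oc(z_-')$. As $z_-' \in \Zc_-$ was arbitrary, this is the asserted inequality; specializing to $z_-' \in \Zc_-^*$ and exchanging the roles of the two points shows that $\#\Oc$ is constant on $\Zc_-^*$.

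The main obstacle I anticipate is pinning down the direction of the monotonicity and verifying that hypothesis (i) genuinely produces a single-valued $g_z$. One must check that, \emph{within a fixed fiber}, the $\eta$-value at the present time is determined by the $\eta$-value one step into the past under $\Theta$; this is what forces the cardinality to be non-decreasing under $\Theta$ (equivalently, non-increasing forward in time), and it is exactly this direction that matches the way (iii) situates $z_-'$ at the $\Theta$-past of the approximating net $\tau(z_i)$. Getting this orientation consistent between the abstract estimate and the reachability hypothesis is the delicate point; the remaining steps are routine once the factorization $\Oc \circ \tau = \Oc'$ and the fiber-shift $\pi \circ \Theta = \phi^{-1}\circ\pi$ are in place.
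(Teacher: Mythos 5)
Your proof is correct and follows essentially the same route as the paper's: hypothesis (iii) plus \cref{lemma_lower_hemi} reduce everything to the monotonicity of fiber cardinalities under $\Theta$, which is exactly the paper's key step. The only (cosmetic) difference is that you establish $\#\Oc'(z)\leq\#\Oc'(\phi^{-1}(z))$ one step at a time via the well-defined surjection $\eta(\Theta(x))\mapsto\eta(x)$ granted by (i) and then iterate, whereas the paper fixes a section $\iota$ of $\eta$ and proves injectivity of $\eta\circ\Theta^{k_i}\circ\iota$ by the corresponding induction — the same counting argument, unrolled.
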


\begin{proof}
	Fix $z_- \in \Zc_-^*$ and $z_-' \in \Zc_-$, and take nets $(z_i)_{i \in I} \subseteq \Zc$ and $(k_i)_{i \in I} \subseteq \N_0$ as in \eqref{prop_const_card_assmpt_reachable_input} such that $z_-^i := \tau(z_i)$ converges to $z_-$ and $\tau(\phi^{-k_i}(z_i)) = z_-'$ for all $i \in I$.
By \cref{lem_lower_hemi}, we have $\#\Oc(z_-) \leq \limsup_i \#\Oc(z_-^i)$.
Fix $i \in I$.
We claim that $\#\Oc(z_-^i) \leq \#\Oc(z_-')$.
By \eqref{prop_const_card_assmpt_stable_sol}, there exists a right-inverse $\iota \colon \Oc(z_-^i) \rightarrow \Sc_{\Phi} \cap \X_{\infty} \cap \pi^{-1}(z_i)$ of $\eta$.
The sets $\Sc_{\Phi}$ and $\X_{\infty}$ are forward-invariant under $\Theta$.
The fiber $\Sc_{\Phi} \cap \pi^{-1}(z_i)$ gets mapped into the fiber $\Sc_{\Phi} \cap \pi^{-1}(\phi^{-k_i}(z_i))$ under $\Theta^{k_i}$.
We show that the composition $\eta \circ \Theta^{k_i} \circ \iota$ maps the set $\Oc(z_-^i)$ injectively into $\eta(\Sc_{\Phi} \cap \X_{\infty} \cap \pi^{-1}(\phi^{-k_i}(z_i)))$.
This is trivial if $k_i = 0$, so we consider the case $k_i \geq 1$.
To this end, suppose $y,y' \in \Oc(z_-^i)$ satisfy $\eta \circ \Theta^{k_i} \circ \iota(y) = \eta \circ \Theta^{k_i} \circ \iota(y')$.
Abbreviate $x_m = \Theta^m \circ \iota(y)$ and $x'_m = \Theta^m \circ \iota(y')$ for $0 \leq m \leq k_i$, and note that $\pi(x_m) = \phi^{-m}(z_i) = \pi(x_m')$.
From \eqref{prop_const_card_assmpt_indisting}, we obtain $\eta(x_{k_i-1}) = \eta(x_{k_i-1}')$ and inductively $\eta(x_{k_i-2}) = \eta(x_{k_i-2}')$ and so forth until $\eta(x_0) = \eta(x_0')$.
But $\eta(x_0) = y$ and $\eta(x_0') = y'$, which shows that $\eta \circ \Theta^{k_i} \circ \iota$ is injective on $\Oc(z_-^i)$.
Lastly, by \eqref{prop_const_card_assmpt_stable_sol}, we have $\eta(\Sc_{\Phi} \cap \X_{\infty} \cap \pi^{-1}(\phi^{-k_i}(z_i))) = \Oc(z_-')$.
This concludes the claim $\#\Oc(z_-^i) \leq \#\Oc(z_-')$, which yields $\#\Oc(z_-) \leq \#\Oc(z_-')$.
This is an equality if $z_-'$ also belongs to $\Zc_-^*$ by symmetry of the argument in $z_-$ and $z_-'$.
\end{proof}

\begin{remark}
	It follows easily from the definition of lower hemi-continuity that if $\tau$ is open, $\eta$ is continuous, and $\Zc^* \subseteq \Zc$ denotes the set of points at which $\Sc_{\Phi} \cap \X_{\infty}$ is lower hemi-continuous, then $\tau(\Zc^*) \subseteq \Zc_-^*$.
\end{remark}

\subsubsection{Structured solutions}
\label{sec_struct_sol}

In \cref{prop_const_card}, we introduced a subset $\X_{\infty} \subseteq \X$ that ensured additional properties of the solutions in $\Sc_{\Phi} \cap \X_{\infty}$.
Next, we study the effect of prescribing a specific structure for the set $\X_{\infty}$.
Throughout this subsection, suppose that $\Phi$ admits a continuous left-inverse $\Theta \colon \X \rightarrow \X$ that is a bundle map over $\phi^{-1}$, that is, $\pi \circ \Theta = \phi^{-1} \circ \pi$, and suppose $\X_0 \subseteq \X$ is forward-invariant under $\Phi$ with closed and non-empty fibers.
Let $\X_{\infty} = \bigcap_{n \in \N_0} (\Theta^n)^{-1}(\X_0)$.
The motivation to introduce such a structure is that the set of causal measures that we encountered in \cref{subsec_causal_sol} in the context of stochastic state-space systems can be written in such a way.
We point out that $\X_{\infty}$ may not be closed since we do not assume $\X_0$ to be closed.

\begin{lemma}
\label{lem_X_inf_closed_fibres}
	The fibers of $\X_{\infty}$ are closed.
Furthermore, $\Phi^{-1}(\X_{\infty}) = \X_{\infty} = \Theta(\X_{\infty}) = \X_0 \cap \Theta^{-1}(\X_{\infty})$.
\end{lemma}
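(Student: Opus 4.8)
The plan is to prove the four-fold equality first and the fiber closedness last, since the two parts rely on different hypotheses: the set identities are driven by the left-inverse relation $\Theta \circ \Phi = \mathrm{id}_{\X}$ together with the forward-invariance $\Phi(\X_0) \subseteq \X_0$, whereas the closedness of fibers is driven by continuity of $\Theta$, the bundle relation $\pi \circ \Theta = \phi^{-1} \circ \pi$, and the assumption that $\X_0$ has closed fibers. Throughout I would freely use that $\pi \circ \Theta^n = \phi^{-n} \circ \pi$, which follows by induction from the bundle relation.

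The identity $\X_{\infty} = \X_0 \cap \Theta^{-1}(\X_{\infty})$ is purely formal. I would split off the $n = 0$ term to write $\X_{\infty} = \X_0 \cap \bigcap_{n \geq 1}(\Theta^n)^{-1}(\X_0)$, and then use that taking preimages commutes with arbitrary intersections: since $(\Theta^n)^{-1}(\X_0) = \Theta^{-1}((\Theta^{n-1})^{-1}(\X_0))$, one gets $\bigcap_{n \geq 1}(\Theta^n)^{-1}(\X_0) = \Theta^{-1}\bigl(\bigcap_{m \geq 0}(\Theta^m)^{-1}(\X_0)\bigr) = \Theta^{-1}(\X_{\infty})$.

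The identities $\X_{\infty} = \Theta(\X_{\infty})$ and $\Phi^{-1}(\X_{\infty}) = \X_{\infty}$ are where $\Phi$ enters. For a point $x \in \X_{\infty}$ one has $\Theta^n(x) \in \X_0$ for all $n \geq 0$, and reindexing gives $\Theta(x) \in \X_{\infty}$, so $\Theta(\X_{\infty}) \subseteq \X_{\infty}$. For the reverse inclusion the key move is to use $\Phi$ as a fiberwise right-inverse: given $x \in \X_{\infty}$, set $x' := \Phi(x)$, so that $\Theta(x') = x$; to see $x' \in \X_{\infty}$ note that $\Theta^n(\Phi(x)) = \Theta^{n-1}(x) \in \X_0$ for $n \geq 1$, while for $n = 0$ the membership $\Phi(x) \in \X_0$ is exactly where forward-invariance of $\X_0$ under $\Phi$ is used, together with $x \in \X_{\infty} \subseteq \X_0$. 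This simultaneously yields $\X_{\infty} \subseteq \Theta(\X_{\infty})$ (hence equality) and $\Phi(\X_{\infty}) \subseteq \X_{\infty}$, i.e.\ $\X_{\infty} \subseteq \Phi^{-1}(\X_{\infty})$. For the opposite inclusion $\Phi^{-1}(\X_{\infty}) \subseteq \X_{\infty}$, I would take $x$ with $\Phi(x) \in \X_{\infty}$ and use $x = \Theta(\Phi(x))$ to get $\Theta^n(x) = \Theta^{n+1}(\Phi(x)) \in \X_0$ for every $n \geq 0$.

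The fiber closedness is the part I expect to require the most care, because $\X_0$ is only assumed to have closed fibers and \emph{not} to be closed in $\X$, so I cannot simply claim that each $(\Theta^n)^{-1}(\X_0)$ is closed. The remedy is to localize to a single fiber. Fixing $z \in \Zc$ and $x \in \pi^{-1}(z)$, the relation $\pi \circ \Theta^n = \phi^{-n} \circ \pi$ forces $\Theta^n(x) \in \pi^{-1}(\phi^{-n}(z))$, so the condition $\Theta^n(x) \in \X_0$ is equivalent to $\Theta^n(x)$ lying in the fiber $\X_0 \cap \pi^{-1}(\phi^{-n}(z))$, which is closed in $\X$ (a closed fiber sitting inside the closed set $\pi^{-1}(\phi^{-n}(z))$, using that $\Zc$ is Hausdorff). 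Hence $\X_{\infty} \cap \pi^{-1}(z) = \pi^{-1}(z) \cap \bigcap_{n \geq 0}(\Theta^n)^{-1}\bigl(\X_0 \cap \pi^{-1}(\phi^{-n}(z))\bigr)$ is an intersection of preimages of closed sets under the continuous maps $\Theta^n$, hence closed. The obstacle is thus conceptual rather than computational: one must avoid attempting to show $\X_{\infty}$ closed outright and instead exploit the bundle structure to argue fiberwise, where $\X_0$ genuinely restricts to a closed set.
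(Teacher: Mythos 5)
Your proposal is correct and follows essentially the same route as the paper: the set identities are derived from $\Theta \circ \Phi = \mathrm{id}_{\X}$ together with forward-invariance of $\X_0$ (the paper phrases this set-theoretically via $\Phi^{-1}(\X_{\infty}) = \X_{\infty} \cap \Phi^{-1}(\X_0)$, you phrase it elementwise, but the content is identical), and the fiber closedness is obtained exactly as in the paper by using $\pi \circ \Theta^n = \phi^{-n} \circ \pi$ to rewrite $(\Theta^n)^{-1}(\X_0) \cap \pi^{-1}(z)$ as a preimage of the closed fiber $\X_0 \cap \pi^{-1}(\phi^{-n}(z))$ under the continuous map $\Theta^n$. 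Your explicit remark that one cannot treat $\X_0$ as closed in $\X$ and must argue fiberwise is precisely the point the paper's displayed identity is designed to handle.
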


\begin{proof}
	The assumption $\pi \circ \Theta = \phi^{-1} \circ \pi$ implies that $(\Theta^n)^{-1}(\X_0) \cap \pi^{-1}(z) = (\Theta^n)^{-1}(\X_0 \cap \pi^{-1}(\phi^{-n}(z)))$.
Since $\X_0$ has closed fibers, it follows readily that the same is true of $\X_{\infty}$.
Since $\Phi$ is a right-inverse of $\Theta$, we have $\Phi^{-1}(\X_{\infty}) = \X_{\infty} \cap \Phi^{-1}(\X_0)$.
Forward-invariance of $\X_0$ yields $\X_{\infty} \subseteq \X_0 \subseteq \Phi^{-1}(\X_0)$.
Hence, $\Phi^{-1}(\X_{\infty}) = \X_{\infty}$.
This also yields $\X_{\infty} = \Theta(\Phi(\X_{\infty})) \subseteq \Theta(\X_{\infty})$.
It is clear that $\Theta(\X_{\infty}) \subseteq \X_{\infty} = \X_0 \cap \Theta^{-1}(\X_{\infty})$.
\end{proof}

Since $\X_{\infty}$ may not be closed, we cannot, in general, apply \cref{prop_closed_upper_hemi} to $\Sc_{\Phi} \cap \X_{\infty}$.\footnote{
It is tempting to consider the dynamical system $\Phi' = \Phi|_{\X_{\infty}}$ on the state space $\X' = \X_{\infty}$.
Then, $\Sc_{\Phi'} = \Sc_{\Phi} \cap \X_{\infty}$ is closed in the topology of $\X'$.
If $\pi$ is proper, then $\pi' = \pi|_{\X'} \colon \X' \rightarrow \Zc$ has compact fibers.
But, in general, $\pi'$ is not closed.
}
However, if $\Phi$ has the USP, then we can show that all solutions belong to $\X_{\infty}$ and, hence, $\Sc_{\Phi} \cap \X_{\infty} = \Sc_{\Phi}$ is continuous.

\begin{proposition}
\label{prop_USP_struct_sol}
	Suppose the restriction of $\pi$ to $\X_0$ has compact fibers.
Then, $\Sc_{\Phi} \cap \X_{\infty}$ has non-empty fibers.
In particular, if $\Phi$ has the USP, then $\Sc_{\Phi} \subseteq \X_{\infty}$.
\end{proposition}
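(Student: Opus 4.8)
The plan is to produce, for an arbitrary base point $z \in \Zc$, an explicit element of $\Sc_{\Phi} \cap \X_{\infty}$ in the fiber over $z$ by a nested compactness argument. Applying \cref{prop_closed_upper_hemi} to $\Sc_{\Phi} \cap \X_{\infty}$ directly is not an option, because $\X_{\infty}$ need not be closed; so instead I would work with the explicit family
\[
L_N := \Phi^N\!\big( \X_0 \cap \pi^{-1}(\phi^{-N}(z)) \big), \qquad N \in \N_0.
\]
First I would establish its three structural properties. Each fiber $\X_0 \cap \pi^{-1}(\phi^{-N}(z))$ is non-empty (as $\X_0$ has non-empty fibers) and compact (by the hypothesis that $\pi|_{\X_0}$ has compact fibers), so its continuous image $L_N$ is a non-empty compact -- hence closed, $\X$ being Hausdorff -- subset of $\X$. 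The family is nested decreasing: since $\Phi$ is a bundle map over $\phi$ and $\Phi(\X_0) \subseteq \X_0$, one has $\Phi\big( \X_0 \cap \pi^{-1}(\phi^{-(N+1)}(z)) \big) \subseteq \X_0 \cap \pi^{-1}(\phi^{-N}(z))$, and applying $\Phi^N$ gives $L_{N+1} \subseteq L_N$.

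With these in hand, the finite intersection property for a nested family of non-empty compacta inside the Hausdorff space $\X$ (all $L_N$ are closed subsets of the compact $L_0$) yields a point $x \in \bigcap_{N \in \N_0} L_N$. It then remains to verify $x \in \Sc_{\Phi} \cap \X_{\infty} \cap \pi^{-1}(z)$. The identity $\pi \circ \Phi^N = \phi^N \circ \pi$ gives $\pi(x) = z$. Writing $x = \Phi^N(y_N)$ with $y_N \in \X_0 \cap \pi^{-1}(\phi^{-N}(z))$ for each $N$ shows $x \in \bigcap_N \Phi^N(\X) = \Sc_{\Phi}$, recalling from \cref{subsec_abstract_aut} that $\Sc_{\Phi}$ is the global attractor since $\Phi$ is injective. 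The same representation controls the structural condition: because $\Theta$ is a left-inverse of $\Phi$, for $0 \le k \le N$ we have $\Theta^k(x) = \Phi^{N-k}(y_N) \in \Phi^{N-k}(\X_0) \subseteq \X_0$, and letting $N \to \infty$ gives $\Theta^k(x) \in \X_0$ for every $k \in \N_0$, i.e.\ $x \in \X_{\infty}$. This proves that $\Sc_{\Phi} \cap \X_{\infty}$ has non-empty fibers.

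For the final assertion, the USP means $\Sc_{\Phi} \cap \pi^{-1}(z)$ is a singleton for each $z$, and we have just placed a point of it inside $\X_{\infty}$; hence $\Sc_{\Phi} \cap \pi^{-1}(z) \subseteq \X_{\infty}$, and taking the union over $z \in \Zc$ yields $\Sc_{\Phi} \subseteq \X_{\infty}$. I expect the crux to be the simultaneous verification that the single limit point $x$ satisfies both the solution condition and the membership $x \in \X_{\infty}$: this is precisely where the forward-invariance of $\X_0$ does the work, forcing the backward $\Theta$-iterates of $x$ to remain in $\X_0$. A secondary point worth care is that the compactness argument must be run in a possibly non-metrizable Hausdorff space, which is legitimate here because all the $L_N$ sit inside the one compact set $L_0$.
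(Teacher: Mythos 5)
Your proof is correct and is essentially the paper's argument in a slightly different guise: the paper chooses a selection $\rho \colon \X \rightarrow \X_0$ of the fibers of $\X_0$ with $\pi \circ \rho = \pi$, forms the points $x_n = \Phi^n \circ \rho \circ \Theta^n(x)$ (which lie precisely in your sets $L_n$), and extracts a cluster point in the compact fiber $\X_0 \cap \pi^{-1}(z)$, so your finite-intersection argument on the nested compacta $L_N$ is the same compactness step, just organized without the choice function and subnet. A small bonus of your formulation is that the closedness of the $L_N$ makes the membership $x \in \bigcap_{N} \Phi^N(\X) = \Sc_{\Phi}$ fully explicit, a verification the paper's write-up leaves implicit (it only spells out $x' \in \X_{\infty} \cap \pi^{-1}(z)$, with $x' \in \Sc_{\Phi}$ recoverable from closedness of the maps $\Phi^n$).
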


\begin{proof}
	Since the fibers of $\X_0$ are non-empty, there exists a left-inverse $\rho \colon \X \rightarrow \X_0$ of the inclusion $\X_0 \hookrightarrow \X$ that satisfies $\pi \circ \rho = \pi$.
Let $z \in \Zc$ and $x \in \pi^{-1}(z)$.
Consider $x_n = \Phi^n \circ \rho \circ \Theta^n(x)$, $n \in \N_0$, which satisfy $\pi(x_n) = z$ since $\Phi$ and $\Theta$ are bundle maps over $\phi$ and $\phi^{-1}$, respectively.
Since $\X_0$ is forward-invariant, $(x_n)_{n \in \N_0} \subseteq \X_0$.
By compactness of $\X_0 \cap \pi^{-1}(z)$, the sequence $(x_n)_{n \in \N_0}$ has a convergent subnet with some limit $x' \in \pi^{-1}(z)$.
Since $\X_0$ has closed fibers, $(\Theta^k)^{-1}(\X_0) \cap \pi^{-1}(z) = (\Theta^k)^{-1}(\X_0 \cap \pi^{-1}(\phi^{-k}(z)))$ is closed.
Since $\X_0$ is forward-invariant and $\Theta$ is a left-inverse of $\Phi$, we have $x_n \in (\Theta^k)^{-1}(\X_0)$ for all $k,n \in \N_0$ with $k \leq n$.
Hence, we must have $x' \in (\Theta^k)^{-1}(\X_0) \cap \pi^{-1}(z)$ for all $k \in \N_0$, that is, $x' \in \X_{\infty} \cap \pi^{-1}(z)$.
\end{proof}

Recall that the {\bfi omega limit set} of a point $x \in \X$ under $\Phi$ is $\omega_{\Phi}(x) = \bigcap_{n \in \N} \overline{ \bigcup_{m \geq n} \{\Phi^m(x)\} }$.
It is easy to see that $\omega_{\Phi}(x) \subseteq \Sc_{\Phi} \cap \pi^{-1}( \omega_{\phi}( \pi(x) ) )$.
We recall that a topological space is called first-countable if every point admits a countable neighborhood basis.
If $\X$ is first-countable, then
\begin{equation*}
	\omega_{\Phi}(x)
	= \{ x' \in \X \colon \exists\, (n_k)_{k \in \N} \subseteq \N \text{ such that } n_k \rightarrow \infty \text{ and } \Phi^{n_k}(x) \rightarrow x' \}.
\end{equation*}

\begin{proposition}
\label{prop_omega_limit_struc_sol}
	Suppose $\X$ is first-countable.
Let $x \in \X_{\infty}$ such that $\pi(x)$ is eventually periodic under $\phi$.
Then, $\omega_{\Phi}(x) \subseteq \Sc_{\Phi} \cap \X_{\infty}$.
\end{proposition}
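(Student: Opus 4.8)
The plan is to reduce the claim to showing $\omega_{\Phi}(x) \subseteq \X_\infty$, since the inclusion $\omega_{\Phi}(x) \subseteq \Sc_{\Phi}$ is already recorded above (indeed $\omega_{\Phi}(x) \subseteq \Sc_{\Phi} \cap \pi^{-1}(\omega_{\phi}(\pi(x)))$). By first-countability, any $x' \in \omega_{\Phi}(x)$ is the limit of some sequence $\Phi^{n_k}(x)$ with $n_k \rightarrow \infty$, and I must verify $\Theta^m(x') \in \X_0$ for every $m \in \N_0$, which is precisely membership in $\X_\infty = \bigcap_m (\Theta^m)^{-1}(\X_0)$.

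First I would exploit that $\Theta$ is a left-inverse of $\Phi$: for $n_k \geq m$ one has $\Theta^m(\Phi^{n_k}(x)) = \Phi^{n_k-m}(x)$. Since $x \in \X_\infty \subseteq \X_0$ (the $m=0$ term in the defining intersection) and $\X_0$ is forward-invariant under $\Phi$, each $\Phi^{n_k-m}(x)$ lies in $\X_0$. By continuity of $\Theta^m$, these points converge to $\Theta^m(x')$. If $\X_0$ were closed we would be done immediately; the difficulty is that $\X_0$ is only assumed to have closed fibers, so the limit of a sequence in $\X_0$ need not lie in $\X_0$.

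The key step, and the place where eventual periodicity enters, is to confine all these points to a single fiber. Applying $\pi$ and using the bundle relations $\pi \circ \Phi = \phi \circ \pi$ and $\pi \circ \Theta = \phi^{-1} \circ \pi$, the point $\Theta^m(\Phi^{n_k}(x))$ sits over $\phi^{n_k-m}(\pi(x))$, while $\Theta^m(x')$ sits over $\phi^{-m}(\pi(x'))$. Because $\pi(x)$ is eventually periodic, $\omega_{\phi}(\pi(x))$ is a finite, hence discrete, subset of $\Zc$ containing $\pi(x')$; continuity of $\pi$ gives $\phi^{n_k}(\pi(x)) \rightarrow \pi(x')$, and a convergent sequence eventually landing in a discrete set must be eventually constant, so $\phi^{n_k}(\pi(x)) = \pi(x')$ for all large $k$. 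Consequently $\phi^{n_k-m}(\pi(x)) = \phi^{-m}(\pi(x'))$; that is, the tail of the sequence $\Theta^m(\Phi^{n_k}(x))$ and its limit $\Theta^m(x')$ all lie in the single fiber over $\phi^{-m}(\pi(x'))$.

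With the fibers aligned, I would finish by invoking that $\X_0 \cap \pi^{-1}(\phi^{-m}(\pi(x')))$ is closed, which is exactly the closed-fiber hypothesis on $\X_0$ (cf.\ \cref{lem_X_inf_closed_fibres}): the tail of $\Theta^m(\Phi^{n_k}(x))$ lies in this closed fiber and converges to $\Theta^m(x')$, whence $\Theta^m(x') \in \X_0$. As $m \in \N_0$ was arbitrary, $x' \in \X_\infty$, completing the inclusion $\omega_{\Phi}(x) \subseteq \Sc_{\Phi} \cap \X_\infty$. I expect the only genuine subtlety to be the eventual-constancy argument in the base, for which the finiteness (hence discreteness) of $\omega_{\phi}(\pi(x))$ together with the Hausdorff property of $\Zc$ is essential; everything else is bookkeeping with the left-inverse identity and the bundle relations.
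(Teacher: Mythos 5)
Your proof is correct and takes essentially the same route as the paper's: extract a sequence $\Phi^{n_k}(x) \rightarrow x'$ via first-countability, use eventual periodicity of $\pi(x)$ to confine the base points to a finite set and thereby align fibers, and conclude by closedness of fibers. The only cosmetic difference is that you unwind the definition of $\X_{\infty}$ and apply the closed-fiber hypothesis on $\X_0$ at each level $m$ (via $\Theta^m \circ \Phi^{n_k} = \Phi^{n_k-m}$ and eventual constancy of the base tail), whereas the paper directly invokes the closed fibers and invariance of $\X_{\infty}$ from \cref{lem_X_inf_closed_fibres} and extracts a constant-base subsequence instead.
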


\begin{proof}
	Let $x' \in \omega_{\Phi}(x)$, and take an unbounded sequence $(n_k)_{k \in \N} \subseteq \N$ with $x_k := \Phi^{n_k}(x) \rightarrow x'$.
Since $\pi(x)$ is eventually periodic under $\phi$, there exists a subsequence $(x_{k_l})_l$ of $(x_k)_k$ such that $\pi(x_{k_l}) = \phi^{n_{k_l}}(\pi(x))$ is constant in $l$.
We know from \cref{lem_X_inf_closed_fibres} that $\X_{\infty}$ has closed fibers.
Thus, $x' \in \X_{\infty}$.
\end{proof}

\subsection{Stochastic dynamical systems}
\label{sec_abstract_stochastic}

Now, suppose $\X$ and $\Zc$ are completely regular and $\Phi$ admits a continuous left-inverse $\Theta \colon \X \rightarrow \X$.
Let $\Pc_{\X} \subseteq P(\X)$ and $\Pc_{\Zc} \subseteq P(\Zc)$ satisfy $\pi_*(\Pc_{\X}) \subseteq \Pc_{\Zc}$ and $(\Theta_*)^{-1}(\Pc_{\X}) = \Pc_{\X}$.
Then, consider the push-forward $\Phi_* \colon \Pc_{\X} \rightarrow \Pc_{\X}$, which has the left-inverse $\Theta_* \colon \Pc_{\X} \rightarrow \Pc_{\X}$.
We remark that these maps are well-defined since the push-forward of a Radon measure under a continuous map is itself a Radon measure.
The dynamical system $\Phi_*$ is `stochastic' in the sense that it encodes the evolution of the law of a random variable under iterations of $\Phi$.
However, as an abstract dynamical system, $\Phi_*$ is a deterministic function on the space of Radon probability measures.
In particular, we can study its solution set, which we know equals its global attractor due to the existence of a left-inverse;
\begin{equation*}
	\Sc_{\Phi_*}
	= \{ \mu \in \Pc_{\X} \colon \Phi_*^n\Theta_*^n\mu = \mu \text{ for all } n \in \N \}
	= \bigcap_{n \in \N} \Phi_*^n(\Pc_{\X}).
\end{equation*}
We refer to elements in $\Sc_{\Phi_*}$ as {\bfi stochastic solutions} of the dynamical system $\Phi$.
The following proposition justifies this denomination.
Indeed, the statement of \cref{prop_abstr_support_sol} is equivalent to saying that an element $\mu \in \Pc_{\X}$ is a solution of the dynamical system $\Phi_*$ if and only if some (in fact, every) $\X$-valued random variable $X$ with law $\mu$ satisfies $\Phi^n(\Theta^n(X)) = X$ almost surely for all $n \in \N$.
Recall that the support of a measure is the set of all points of which every open neighborhood has strictly positive measure.

\begin{proposition}
\label{prop_abstr_support_sol}
	An element $\mu \in \Pc_{\X}$ belongs to $\Sc_{\Phi_*}$ if and only if the support of $\mu$ is contained in $\Sc_{\Phi}$.
\end{proposition}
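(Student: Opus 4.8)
The plan is to rewrite both sides of the claimed equivalence in terms of the continuous idempotent maps $g_n := \Phi^n \circ \Theta^n \colon \X \to \X$ for $n \in \N$, and to exploit that these are retractions. First I would record their structural features. Since $\Theta$ is a left-inverse of $\Phi$, iterating gives $\Theta^n \circ \Phi^n = \mathrm{id}_{\X}$, whence $g_n \circ g_n = \Phi^n \circ (\Theta^n \circ \Phi^n) \circ \Theta^n = \Phi^n \circ \Theta^n = g_n$; thus each $g_n$ is a retraction of $\X$ onto its image, and consequently $\mathrm{Fix}(g_n) = g_n(\X)$. As the equalizer of the continuous maps $g_n$ and $\mathrm{id}_{\X}$ on the Hausdorff space $\X$, the set $\mathrm{Fix}(g_n)$ is closed, hence Borel. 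By the characterization of $\Sc_{\Phi}$ for a system with a left-inverse recalled in \cref{subsec_abstract_aut}, we have $\Sc_{\Phi} = \bigcap_{n \in \N} \mathrm{Fix}(g_n)$, so $\Sc_{\Phi}$ is closed. Finally, functoriality of push-forwards gives $\Phi_*^n \Theta_*^n = (g_n)_*$, so the defining condition of $\Sc_{\Phi_*}$ reads $(g_n)_* \mu = \mu$ for all $n \in \N$.

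For the direction from supports to solutions, suppose $\mathrm{supp}\,\mu \subseteq \Sc_{\Phi}$. Then $\mathrm{supp}\,\mu \subseteq \mathrm{Fix}(g_n)$ for every $n$, so $g_n$ agrees with the identity on $\mathrm{supp}\,\mu$. Because $\mu$ is Radon, it assigns no mass to the open complement of its support, so $g_n = \mathrm{id}_{\X}$ holds $\mu$-almost everywhere; a short symmetric-difference computation (for Borel $B$, the set $g_n^{-1}(B) \,\triangle\, B$ is contained in $\{x : g_n(x) \neq x\}$) then yields $(g_n)_*\mu = \mu$, that is, $\mu \in \Sc_{\Phi_*}$. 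For the converse, suppose $(g_n)_*\mu = \mu$ for all $n$. Here idempotency is the crux: since $g_n(\X) = \mathrm{Fix}(g_n)$, the preimage $g_n^{-1}(\mathrm{Fix}(g_n))$ is all of $\X$, and hence
\[
\mu(\mathrm{Fix}(g_n)) = (g_n)_*\mu\,(\mathrm{Fix}(g_n)) = \mu(g_n^{-1}(\mathrm{Fix}(g_n))) = \mu(\X) = 1 .
\]
Taking the countable intersection over $n$ gives $\mu(\Sc_{\Phi}) = 1$, and since $\Sc_{\Phi}$ is closed this forces $\mathrm{supp}\,\mu \subseteq \Sc_{\Phi}$: the open set $\X \setminus \Sc_{\Phi}$ is then null, so each of its points has a null neighborhood and lies outside the support.

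I expect the only genuinely delicate point to be the converse implication, and specifically the observation that it relies on $g_n$ being a retraction rather than merely measure-preserving. Without idempotency the statement is false --- an irrational rotation preserving the uniform measure on a circle is measure-preserving yet has empty fixed-point set --- so the argument must exploit the left-inverse structure to turn $\mathrm{Fix}(g_n)$ into the full image of $g_n$. The remaining work is routine measure-theoretic bookkeeping: measurability of the fixed-point sets (from the Hausdorff hypothesis), the identity $\mu(\X \setminus \mathrm{supp}\,\mu) = 0$ for Radon measures, and the characterization of the support as the complement of the largest open null set. Complete regularity of $\X$ plays no role here beyond ensuring that $\Pc_{\X}$ consists of genuine Radon measures to which these support facts apply.
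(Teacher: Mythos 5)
Your proof is correct, and while the forward implication coincides with the paper's argument (both boil down to: $\Phi^n \circ \Theta^n$ is the identity on $\Sc_{\Phi} \supseteq \mathrm{supp}\,\mu$, and a Radon measure gives no mass to the open complement of its support), your converse takes a genuinely different and somewhat cleaner route. The paper argues pointwise: for $x \notin \Sc_{\Phi}$ it picks $n$ with $x \notin \Phi^n(\X)$, invokes \cref{lemma_proper} to conclude that $\Phi$ (hence $\Phi^n$) is a closed map because it has a continuous left-inverse, so $\Phi^n(\X)$ is closed, and then the open set $V = \X \setminus \Phi^n(\X)$ satisfies $(\Phi^n)^{-1}(V) = \emptyset$ and thus $\mu(V) = \Theta^n_*\mu(\emptyset) = 0$, exhibiting an explicit null neighborhood of $x$. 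You instead argue globally through the retraction structure: idempotency of $g_n = \Phi^n \circ \Theta^n$ gives $\mathrm{Fix}(g_n) = g_n(\X)$ and hence $g_n^{-1}(\mathrm{Fix}(g_n)) = \X$, so $(g_n)_*\mu = \mu$ forces $\mu(\mathrm{Fix}(g_n)) = 1$ at once, and the conclusion follows from closedness of $\Sc_{\Phi} = \bigcap_{n \in \N} \mathrm{Fix}(g_n)$ (the characterization recalled in \cref{subsec_abstract_aut}). What your version buys is economy of hypotheses: it needs neither \cref{lemma_proper} nor complete regularity, only Hausdorffness (for closedness of the fixed-point sets) and the Radon property of the measures; your closing remark that complete regularity is dispensable here is accurate -- indeed even in the paper's own argument the separation of $x$ from the closed set $\Phi^n(\X)$ needs nothing beyond closedness, though complete regularity is a standing assumption of that section for other reasons (continuity of push-forwards in the weak topology). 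What the paper's version buys is slightly finer information -- an explicit open null neighborhood around each point off the attractor -- and consistency with the image-based picture $\Sc_{\Phi} = \bigcap_{n \in \N} \Phi^n(\X)$ used throughout the rest of the paper.
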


\begin{proof}
	Suppose the support of $\mu$ is contained in $\Sc_{\Phi}$.
Note that $(\Phi^n \circ \Theta^n)^{-1}(A) \cap \Sc_{\Phi} = A \cap \Sc_{\Phi}$ for any $n \in \N$ and any measurable $A \subseteq \X$.
Thus,
\begin{equation*}
	\Phi^n_*\Theta^n_*\mu(A)
	= \mu( (\Phi^n \circ \Theta^n)^{-1}(A) )
	= \mu ((\Phi^n \circ \Theta^n)^{-1}(A) \cap \Sc_{\Phi} )
	= \mu( A \cap \Sc_{\Phi})
	= \mu(A),
\end{equation*}
which shows that $\mu \in \Sc_{\Phi_*}$.
Conversely, suppose $\mu \in \Sc_{\Phi_*}$, and let $x \in \X \backslash \Sc_{\Phi}$.
Then, there exists some $n \in \N$ with $x \notin \Phi^n(\X)$.
Since $\Theta$ is a continuous left-inverse of $\Phi$, the latter is closed; see \cref{lem_proper}.
Thus, $\Phi^n(\X)$ is a closed set.
Since $\X$ is completely regular, there exists an open neighborhood $V \subseteq \X$ of $x$ with $V \cap \Phi^n(\X) = \emptyset$.
Equivalently, $(\Phi^n)^{-1}(V) = \emptyset$.
Now, for any $\mu \in \Sc_{\Phi_*}$, we have $\mu(V) = \Phi^n_*\Theta^n_*\mu(V) = \Theta^n_*\mu(\emptyset) = 0$.
Thus, $x$ does not belong to the support of $\mu$.
\end{proof}

\begin{remark}
	Suppose $\chi \colon \Z \rightarrow \X$ is a solution of $\Phi$.
Let $(w_t)_{t \in \Z} \subseteq (0,1)$ satisfy $\sum_{t \in \Z} w_t = 1$.
Then, the measure $\mu = \sum_{t \in \Z} w_t \delta_{\chi(t)}$ belongs to $\Sc_{\Phi_*}$.
In particular, there are elements in $\Sc_{\Phi_*}$ whose support belongs to the complement of the chain-recurrent set of $\Phi$ (if that complement is non-empty).
\end{remark}

It is straight-forward to see that stochastic solutions can be obtained from pushing forward measures in $\Pc_{\Zc}$ under a continuous map into $\Sc_{\Phi}$.
More precisely, if $S \colon \Zc \rightarrow \Sc_{\Phi}$ is continuous, then $S_*(\Pc_{\Zc}) \cap \Pc_{\X} \subseteq \Sc_{\Phi_*}$.
In \cref{prop_abstr_stoch_USP}, recall that the set-valued fiber-map $\Sc_{\Phi}$ becomes a map $\Zc \rightarrow \X$ under the USP, and that continuity of $\Sc_{\Phi}$ is guaranteed by \cref{prop_closed_upper_hemi} if $\pi$ is proper.

\begin{proposition}
\label{prop_abstr_stoch_USP}
	Suppose $\Phi$ has the USP and $\Sc_{\Phi}$ is continuous.
Then, $(\Sc_{\Phi})_*(\Pc_{\Zc}) \cap \Pc_{\X} = \Sc_{\Phi_*}$.
In particular, if $(\Sc_{\Phi})_*(\Pc_{\Zc}) \subseteq \Pc_{\X}$, then $\Phi_*$ has the USP.
\end{proposition}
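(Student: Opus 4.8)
The plan is to exploit that, under the USP, the set-valued map $\Sc_{\Phi}$ collapses to an honest continuous map $\Sc_{\Phi} \colon \Zc \to \X$ (the generalized synchronization from \cref{subsec_bundlesystems}), which is a right-inverse of $\pi$, so $\pi \circ \Sc_{\Phi} = \mathrm{id}_{\Zc}$, and whose range is the closed set $\Sc_{\Phi}$. The one extra structural fact I would record at the outset is that $\Sc_{\Phi} \circ \pi$ restricts to the identity on the set $\Sc_{\Phi}$: for $x \in \Sc_{\Phi}$ the fiber $\Sc_{\Phi} \cap \pi^{-1}(\pi(x))$ is the singleton $\{\Sc_{\Phi}(\pi(x))\}$ and contains $x$, forcing $\Sc_{\Phi}(\pi(x)) = x$. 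With this in hand, the proposition splits into the two inclusions comprising $(\Sc_{\Phi})_*(\Pc_{\Zc}) \cap \Pc_{\X} = \Sc_{\Phi_*}$.

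For $\subseteq$, I would simply invoke the observation recorded just before the statement: since $\Sc_{\Phi} \colon \Zc \to \Sc_{\Phi}$ is continuous, $(\Sc_{\Phi})_*(\Pc_{\Zc}) \cap \Pc_{\X} \subseteq \Sc_{\Phi_*}$ (concretely, the support of a push-forward $(\Sc_{\Phi})_*\nu$ lies in the closed range $\Sc_{\Phi}$, and \cref{prop_abstr_support_sol} then gives membership in $\Sc_{\Phi_*}$). The reverse inclusion $\supseteq$ is where the work lies. Given $\mu \in \Sc_{\Phi_*}$, I would set $\nu := \pi_*\mu$, which belongs to $\Pc_{\Zc}$ because $\pi_*(\Pc_{\X}) \subseteq \Pc_{\Zc}$, and then aim to prove $(\Sc_{\Phi})_*\nu = \mu$, that is, $(\Sc_{\Phi} \circ \pi)_*\mu = \mu$. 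By \cref{prop_abstr_support_sol}, the support of $\mu$ is contained in $\Sc_{\Phi}$, on which $\Sc_{\Phi} \circ \pi$ agrees with the identity; hence the two Borel maps $\Sc_{\Phi} \circ \pi$ and $\mathrm{id}_{\X}$ coincide on $\mathrm{supp}\,\mu$, so they push $\mu$ to the same measure, yielding $(\Sc_{\Phi})_*\nu = \mu$ and thus $\mu \in (\Sc_{\Phi})_*(\Pc_{\Zc}) \cap \Pc_{\X}$.

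The hard part is making rigorous the step that two maps agreeing on $\mathrm{supp}\,\mu$ induce the same push-forward; this hinges on $\mu(\X \setminus \mathrm{supp}\,\mu) = 0$, which I would justify from $\mu$ being Radon: the open set $\X \setminus \mathrm{supp}\,\mu$ meets every compact subset in a set covered by finitely many null neighborhoods, so inner regularity forces it to be null. Then the measurable disagreement set $\{\Sc_{\Phi} \circ \pi \neq \mathrm{id}_{\X}\}$ is contained in $\X \setminus \mathrm{supp}\,\mu$ and hence $\mu$-null, and $(\Sc_{\Phi} \circ \pi)_*\mu = \mu$ follows since $(\Sc_{\Phi} \circ \pi)^{-1}(A) \,\triangle\, A$ lies in that disagreement set for every Borel $A$. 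For the final claim, if $(\Sc_{\Phi})_*(\Pc_{\Zc}) \subseteq \Pc_{\X}$ then the equality just proved reads $\Sc_{\Phi_*} = (\Sc_{\Phi})_*(\Pc_{\Zc})$, and since $\Phi_*$ is a bundle map over $\phi_*$ with respect to $\pi_*$, the USP of $\Phi_*$ amounts to each fiber $\Sc_{\Phi_*} \cap (\pi_*)^{-1}(\nu)$ being a singleton. Non-emptiness holds because $(\Sc_{\Phi})_*\nu$ lies in the fiber (its $\pi_*$-image is $(\pi \circ \Sc_{\Phi})_*\nu = \nu$), and the $\supseteq$-argument shows any $\mu$ in the fiber satisfies $\mu = (\Sc_{\Phi})_*(\pi_*\mu) = (\Sc_{\Phi})_*\nu$, giving uniqueness.
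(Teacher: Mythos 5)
Your proof is correct and follows essentially the same route as the paper's: the forward inclusion by citing the observation preceding the statement, and the reverse inclusion by using that the USP makes $\Sc_{\Phi} \circ \pi$ the identity on the set $\Sc_{\Phi}$, combined with \cref{prop_abstr_support_sol} to see that any $\mu \in \Sc_{\Phi_*}$ concentrates on $\Sc_{\Phi}$, whence $(\Sc_{\Phi})_*\pi_*\mu = \mu$. Your write-up merely makes explicit two points the paper leaves implicit -- the inner-regularity argument giving $\mu(\X \setminus \mathrm{supp}\,\mu) = 0$ and the derivation of the USP for $\Phi_*$ from the proved equality -- which is fine.
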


\begin{proof}
	We observed above that $(\Sc_{\Phi})_*(\Pc_{\Zc}) \cap \Pc_{\X} \subseteq \Sc_{\Phi_*}$.
For the reverse inclusion, let $A \subseteq \X$ be measurable.
By the USP, $A \cap \Sc_{\Phi} = (\Sc_{\Phi} \circ \pi)^{-1}(A) \cap \Sc_{\Phi}$ (here, $\Sc_{\Phi}$ denotes both the subset of $\X$ and its fiber-map).
This and \cref{prop_abstr_support_sol} yield for any $\mu \in \Sc_{\Phi_*}$
\begin{equation*}
	\mu(A)
	= \mu(A \cap \Sc_{\Phi})
	= \mu((\Sc_{\Phi} \circ \pi)^{-1}(A) \cap \Sc_{\Phi})
	= \mu((\Sc_{\Phi} \circ \pi)^{-1}(A))
	= (\Sc_{\Phi})_*\pi_*\mu(A),
\end{equation*}
which shows that $\Sc_{\Phi_*} \subseteq (\Sc_{\Phi})_*(\Pc_{\Zc})$.
\end{proof}

As noted in \cref{ex_weak_top}, complete regularity of the underlying spaces ensures that push-forward maps induced by continuous maps become continuous with respect to the weak topologies and that the weak topology on the space of Radon probability measures is Hausdorff (as is any finer topology).
Thus, if $P(\X)$ and $P(\Zc)$ are endowed with the weak topologies, then $\Phi_*$ is a continuous bundle map over $\phi_*$ with continuous left-inverse $\Theta_*$.
As such, the results that have been developed in \cref{subsec_dep_sol_base} can also be applied to the dynamical system $\Phi_*$.
One may also work with topologies finer than the weak topologies such as Wasserstein topologies if continuity of the various push-forward maps continues to hold.

\begin{remark}
	Stochastic solutions as defined in this paper are the laws of random variables that are deterministic solutions almost surely.
An extensive branch of literature has studied random dynamical systems without passing to the level of the laws.
For such systems, the notion of an attractor involves a random set, that is, a family of sets parametrized by the underlying probability space of the source of randomness \cite{Arnold1998, CrauelFlandoli1994}.
The approach with such a random system can reveal more detailed behavior of the dynamics since it can study typical random trajectories.
However, it generally requires stronger assumptions (such as compactness of the state space) and quickly runs into technical obstacles, which we avoid with the push-forward dynamical system.
For our purposes, the information that we can infer on the level of the laws is sufficient.
\end{remark}

\section{Proofs of results on state-space systems}
\label{sec_proofs}

We return to the notation from \cref{sec_det_SSS,sec_stoch_SSS} and prove all results stated in those sections, using the tools developed in \cref{sec_abstract}.

\subsection{Proofs for deterministic solutions}
\label{sec_det_SSS_proofs}

\begin{proof}[\Pf{prop_det_sol_attractor}]
	First, we show by induction that $\varphi^n \circ T^n$ is the identity on $\tau^{-1}(\DetSol)$ for all $n \in \N$.
For any $(\Seq{x},\Seq{u}) \in \XS^- \times \US$, we have $\varphi \circ T(\Seq{x},\Seq{u}) = (\Seq{x},\Seq{u})$ if and only if $f(\seq{x}{-1},\seq{u}{0}) = \seq{x}{0}$, which is indeed the case for any $(\Seq{x},\Seq{u}) \in \tau^{-1}(\DetSol)$.
If $n \geq 2$ and $(\Seq{x},\Seq{u}) \in \tau^{-1}(\DetSol)$, then
\begin{equation*}
	\varphi^n \circ T^n(\Seq{x},\Seq{u})
	= \varphi \circ \varphi^{n-1} \circ T^{n-1}(T(\Seq{x},\Seq{u}))
	= \varphi(T(\Seq{x},\Seq{u}))
	= (\Seq{x},\Seq{u}),
\end{equation*}
where we applied the induction hypothesis to $T(\Seq{x},\Seq{u})$, which is possible since $T(\DetSol) \subseteq \DetSol$, and then the induction start to $(\Seq{x},\Seq{u})$.
Thus, $\tau^{-1}(\DetSol) \subseteq \Sc_{\varphi}$.
Since $\tau(\US) = \US^-$, we also have $\DetSol = \tau(\tau^{-1}(\DetSol)) \subseteq \tau(\Sc_{\varphi})$.
It remains to be shown that $\tau(\Sc_{\varphi}) \subseteq \DetSol$, which would also imply $\Sc_{\varphi} \subseteq \tau^{-1}(\tau(\Sc_{\varphi})) \subseteq \tau^{-1}(\DetSol)$.
Denote $p_0 \colon \XS^- \times \US \rightarrow \Xc$, $(\Seq{x},\Seq{u}) \mapsto \seq{x}{0}$, and let $(\Seq{x},\Seq{u}) \in \Sc_{\varphi}$.
We need to show that $f(\seq{x}{t-1},\seq{u}{t}) = \seq{x}{t}$ for all $t \in \Z_-$.
For $t=0$, this is immediate from applying $p_0$ on both sides of the equality $\varphi \circ T(\Seq{x},\Seq{u}) = (\Seq{x},\Seq{u})$.
For $t \leq -1$, note that $\Sc_{\varphi}$ is forward-invariant under $T$ and, hence, $T^{-t-1}(\Seq{x},\Seq{u}) \in \Sc_{\varphi}$.
Then,
\begin{equation*}
	f(\seq{x}{t-1},\seq{u}{t})
	= p_0 \circ \varphi \circ T(T^{-t-1}(\Seq{x},\Seq{u}))
	= p_0(T^{-t-1}(\Seq{x},\Seq{u}))
	= \seq{x}{t},
\end{equation*}
which concludes the proof.
\end{proof}

\begin{proof}[\Pf{thrm_det_ESP_FMP}]
	Since the topology on $\XS^- \times \US^-$ is at least as fine as the product topology, it is clear that $\DetSol$ is closed.
Since $\XS^-$ is compact, the projection $\pi_{\US^-} \colon \XS^- \times \US^- \rightarrow \US^-$ is proper.
By \cref{prop_closed_upper_hemi}, $\DetSol$ is upper hemi-continuous.
As the composition of an upper hemi-continuous map and a continuous map, $\DetOut = H(\DetSol)$ is also upper hemi-continuous.
If the ESP holds (in the outputs), then $\DetSol$ ($\DetOut$) is singleton-valued and upper hemi-continuity agrees with usual continuity.
\end{proof}

\begin{proof}[\Pf{prop_det_FMP_mESP_out}]
	Let us verify that \cref{prop_const_card} is applicable with $\X = \X_{\infty} = \XS^- \times \US$, $\Zc = \US$, $\Zc_- = \US^-$, $\Y = \YS^- \times \US^-$, $\pi = \pi_{\US}$, $\phi = \varphi$, $\tau = \tau$, and $\eta = H \circ \tau$.
With these choices,
\begin{equation*}
	\eta(\Sc_{\phi} \cap \X_{\infty} \cap (\tau \circ \pi)^{-1}(\Seq{u}))
	= H(\tau(\Sc_{\varphi})) \cap \pi_{\US^-}^{-1}(\Seq{u})
	= \DetOut(\Seq{u})
\end{equation*}
for any $\Seq{u} \in \US^-$.
Hence, if we find that the three hypotheses of \cref{prop_const_card} are satisfied, then the conclusion of \cref{prop_const_card} is exactly the desired statement.
Hypothesis (i):
Suppose $(\Seq{x},\Seq{u})$ and $(\Seq{x}',\Seq{u}')$ are elements of $\Sc_{\varphi}$ such that $\eta(T(\Seq{x},\Seq{u})) = \eta(T(\Seq{x}',\Seq{u}'))$ and $\pi(\Seq{x},\Seq{u}) = \pi(\Seq{x}',\Seq{u}')$.
Then, $\Seq{u} = \Seq{u}'$ and $h(\seq{x}{-1}) = h(\seq{x}{-1}')$.
By \cref{rem_weakly_disting}, we have $h(\seq{x}{0}) = h(\seq{x}{0}')$ and, hence, $\eta(\Seq{x},\Seq{u}) = \eta(\Seq{x}',\Seq{u}')$, which verifies hypothesis (i).
Hypothesis (ii):
It is evident from \cref{prop_det_sol_attractor} that $\tau(\Sc_{\varphi} \cap \pi^{-1}(\Seq{u}))$ depends on $\Seq{u}$ only through $\tau(\Seq{u})$.
Hypothesis (iii):
This is precisely stipulated in \cref{assmpt_det_SSS}.(iv) (and recall that $\tau$ maps $\US$ surjectively onto $\US^-$).
\end{proof}

\cref{prop_det_FMP_mESP_sol} corresponds to the special case of \cref{prop_det_FMP_mESP_out} in which the readout is the identity.

\begin{proof}[\Pf{thrm_det_generic_FMP}]
	As in the proof of \cref{thrm_det_ESP_FMP}, we can apply \cref{prop_closed_upper_hemi} to the set-valued maps $\DetSol$ and $\DetOut =  H(\DetSol)$.
Thus, we obtain that $D(\DetSol)$ and $D(\DetOut)$ contain a residual set.
The final statement follows from \cref{prop_det_FMP_mESP_sol,prop_det_FMP_mESP_out}.
\end{proof}

\subsection{Proofs for general stochastic solutions}
\label{sec_stoch_SSS_proofs}

From now on, let $j^+ \colon \XS^- \times \US^- \rightarrow \XS^- \times \US$ denote a continuous right-inverse of the truncation, whose existence we stipulated in \cref{assmpt_det_SSS}.(ii).

\begin{proof}[\Pf{prop_stoch_sol}]
	It is clear that (iii) $\Rightarrow$ (ii) $\Rightarrow$ (i).
If (i) holds, then $\mu(\DetSol) = 1$.
This is sufficient to deduce (iii) because $\DetSol$ is closed and $\XS^- \times \US^-$ is completely regular.

Next, we know from \cref{prop_abstr_support_sol} that an element $\mu \in \Pstate$ belongs to $\Sc_{\varphi_*}$ if and only if the support of $\mu$ is contained in $\Sc_{\varphi}$.
The support of $\tau_*\mu$ is contained in the closure of $\tau(\mathrm{support}(\mu))$.
In particular, the support of an element in $\tau_*(\Sc_{\varphi_*})$ is contained in $\tau(\Sc_{\varphi}) = \DetSol$.
This and the equivalence between (i) and (iii) show $\tau_*(\Sc_{\varphi_*}) \subseteq \StochSol$.
Conversely, suppose $\mu \in \StochSol$.
Then, the support of $\mu$ is contained in $\DetSol$, and the support of $j^+\mu$ is contained in the closure of $j^+(\DetSol)$.
Since $j^+(\DetSol)$ is a subset of $\tau^{-1}(\DetSol) = \Sc_{\varphi}$, we conclude that $\mu = \tau_*j^+_*\mu \in \tau_*(\Sc_{\varphi_*})$.
\end{proof}

\begin{proof}[\Pf{prop_ESP_stoch_ESP}]
	We know from \cref{prop_abstr_stoch_USP} that $(\Sc_{\varphi})_*(\Pin) \cap \Pstate = \Sc_{\varphi_*}$.
Observe that $\tau_*((\Sc_{\varphi})_*(\Pin) \cap \Pstate) = (\tau \circ \Sc_{\varphi})_*(\Pin) \cap \Pstate^-$.
Furthermore, the equality $\tau \circ \Sc_{\varphi} = \DetSol \circ \tau$ implies $(\tau \circ \Sc_{\varphi})_*(\Pin) = \DetSol_*(\Pin^-)$.
Thus,
\begin{equation*}
	\StochSol
	= \tau_*(\Sc_{\varphi_*})
	= \tau_*((\Sc_{\varphi})_*(\Pin) \cap \Pstate)
	= (\tau \circ \Sc_{\varphi})_*(\Pin) \cap \Pstate^-
	= \DetSol_*(\Pin^-) \cap \Pstate^-.
\end{equation*}
\end{proof}

\begin{lemma}
\label{lem_stoch_sol_closed}
	The set $\StochSol$ is closed (in the subspace topology of $\Pstate^-$).
\end{lemma}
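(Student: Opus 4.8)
The plan is to realize $\StochSol$ as a level set of the evaluation functional $\mu \mapsto \mu(\DetSol)$ and then invoke upper semicontinuity of that functional. First I would record that $\DetSol$ is closed in $\XS^- \times \US^-$: the topology there is at least as fine as the product topology and $f$ is continuous, so each constraint $\seq{x}{t} = f(\seq{x}{t-1},\seq{u}{t})$ carves out a closed set and $\DetSol$ is their intersection. Next, from the proof of \cref{prop_stoch_sol} I would extract the convenient reformulation that, for $\mu \in \Pstate^-$, one has $\mu \in \StochSol$ if and only if $\mu(\DetSol) = 1$. Indeed, that proof establishes the equivalence of conditions (i)--(iii) and, along the way, that (i) $\Rightarrow \mu(\DetSol)=1 \Rightarrow$ (iii); hence $\mu(\DetSol)=1$ is equivalent to $\mathrm{support}(\mu) \subseteq \DetSol$, which is the defining property of $\StochSol$.

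The core step is to show that $A := \{ \mu \in P(\XS^- \times \US^-) : \mu(\DetSol) = 1 \}$ is closed in the weak topology. Since we work with probability measures, $A = \{\mu : \mu(\DetSol) \ge 1\}$, so it suffices to prove that $\mu \mapsto \mu(\DetSol)$ is upper semicontinuous along nets, i.e.\ $\limsup_i \mu_i(\DetSol) \le \mu(\DetSol)$ whenever $\mu_i \to \mu$ weakly. Passing to complements via $\mu(\DetSol) = 1 - \mu(\DetSol^c)$, this reduces to lower semicontinuity on the open set $O := \DetSol^c$, namely $\liminf_i \mu_i(O) \ge \mu(O)$. Here I would use that the measures are Radon on completely regular spaces, as assumed throughout this section: by inner regularity $\mu(O) = \sup\{\mu(K) : K \subseteq O \text{ compact}\}$, and for each compact $K \subseteq O$ complete regularity separates $K$ from the disjoint closed set $O^c$, yielding $g \in C_b(\XS^- \times \US^-)$ with $\mathbf{1}_K \le g \le \mathbf{1}_O$. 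Then $\mu(K) \le \int g \, d\mu = \lim_i \int g \, d\mu_i \le \liminf_i \mu_i(O)$, and taking the supremum over $K$ gives the claim. This is precisely the portmanteau theorem for Radon measures on completely regular spaces and could alternatively be cited from \cite{Bogachev2007}.

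Finally, I would assemble the two ingredients: by the first paragraph $\StochSol = A \cap \Pstate^-$, and by the second paragraph $A$ is weakly closed. Because the topology on $\Pstate^-$ is at least as fine as the weak topology, every set that is closed in the subspace weak topology on $\Pstate^-$ is a fortiori closed in the actual topology of $\Pstate^-$; in particular $A \cap \Pstate^- = \StochSol$ is closed in $\Pstate^-$, as claimed.

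I expect the only genuine difficulty to lie in the second paragraph: establishing the portmanteau-type semicontinuity in a possibly non-metrizable setting, where one must argue with nets rather than sequences and rely on Radon inner regularity together with the separation of compact sets from the closed set $\DetSol$ by bounded continuous functions (complete regularity), instead of on Urysohn's lemma. The first and third paragraphs are routine once this is secured.
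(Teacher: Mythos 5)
Your proof is correct, but it takes a genuinely different route from the paper's. You work entirely on the one-sided space: using \cref{prop_stoch_sol} you identify $\StochSol$ as the trace on $\Pstate^-$ of the set $A = \{ \mu \in P(\XS^- \times \US^-) \colon \mu(\DetSol) = 1 \}$, and you prove $A$ is weakly closed via portmanteau-type semicontinuity, which you establish by hand from inner regularity of Radon measures and the separation of compact sets from the closed set $\DetSol$ afforded by complete regularity (your separating-function construction and the net-based argument are both sound). The paper instead stays in the dynamical picture: it lifts along the continuous right-inverse $j^+$ of the truncation, observes that $j^+_*(\StochSol) \subseteq \Sc_{\varphi_*}$, uses that $\Sc_{\varphi_*}$ is weakly closed -- being cut out by the equations $\varphi_*^n T_*^n \mu = \mu$, i.e.\ equalizers of continuous maps in a Hausdorff space -- and projects back with $\tau_*$ using $\tau_*(\Sc_{\varphi_*}) = \StochSol$. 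Each approach has its merits: yours is measure-theoretically self-contained, needs no mention of $j^+$ or the attractor, and yields slightly more, namely that $\StochSol$ is the intersection of $\Pstate^-$ with a set closed in the full weak topology of $P(\XS^- \times \US^-)$; the paper's argument is shorter given the machinery already in place, avoids any regularity/portmanteau considerations, and produces as a by-product the inclusion $j^+_*(\StochSol) \subseteq \Sc_{\varphi_*}$, which is reused later (e.g.\ in the proof of \cref{lem_causal_trunc}).
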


\begin{proof}
	Note that $\varphi^n \circ T^n \circ j^+ \circ \tau = j^+ \circ \tau \circ \varphi^n \circ T^n$ for all $n \in \N$.
This observation implies $j^+_*(\StochSol) \subseteq \Sc_{\varphi_*}$.
Now, suppose $\mu \in \Pstate^-$ is a cluster point of a net $(\mu_i)_{i \in I} \subseteq \StochSol$.
Then, $j^+_*\mu \in \Pstate$ is a cluster point of $(j^+_*\mu_i)_{i \in I} \subseteq \Sc_{\varphi_*}$ in the weak topology.
Since $\Sc_{\varphi_*}$ is weakly closed (as a subset of $\Pstate$), it contains $j^+_*\mu$.
In particular, $\mu = \tau_*j^+_*\mu \in \tau_*(\Sc_{\varphi_*}) = \StochSol$.
\end{proof}

\begin{proof}[\Pf{thrm_stoch_ESP_FMP}]
	Since $\StochSol$ is a closed subset of $\Pstate^-$ by \cref{lem_stoch_sol_closed}, its fiber map is upper hemi-continuous by \cref{prop_closed_upper_hemi}.
The same goes for $\StochOut = H_*(\StochSol)$, being the composition of an upper hemi-continuous map and a continuous map.
If the stochastic ESP holds, then these are singleton-valued maps and upper hemi-continuity agrees with usual continuity.
Without the stochastic ESP but if $\Pin^-$ and $\Pstate^-$ are Polish, the second part of \cref{prop_closed_upper_hemi} applies.
\end{proof}

To prove the remaining results, we need to have a closer look at causal solutions.
Before we continue with stochastic state-space systems, we briefly recap the definition and most important properties of conditional independence of sigma-algebras, which we used to define causal measures.

\subsection{Interlude: Conditional independence}
\label{sec_cond_ind}

Consider a probability space $(\Omega,\Ac,\Pb)$ and sub-sigma-algebras $\Sigma_1,\Sigma_2,\Sigma_3 \subseteq \Ac$.
We remind the reader that $\Sigma_1$ and $\Sigma_3$ are said to be conditionally independent given $\Sigma_2$ (written $\Sigma_1 \CondInd{\Sigma_2} \Sigma_3$) with respect to $\Pb$ if
\begin{equation}
\label{eq_CondInd_def}
	\ExpC{\Pb}{\beta_1 \beta_3}{\Sigma_2}
	= \ExpC{\Pb}{\beta_1}{\Sigma_2} \ExpC{\Pb}{\beta_3}{\Sigma_2} \quad \Pb\text{-a.s.}
\end{equation}
for any two bounded functions $\beta_1,\beta_3 \colon \Omega \rightarrow \R$ that are $\Sigma_1$-measurable and $\Sigma_3$-measurable, respectively.
An equivalent characterization is
\begin{equation}
\label{eq_CondInd_alt_def}
	\ExpC{\Pb}{\beta_3}{\Sigma_1 \vee \Sigma_2}
	= \ExpC{\Pb}{\beta_3}{\Sigma_2} \quad \Pb\text{-a.s.}
\end{equation}
for any bounded $\Sigma_3$-measurable function $\beta_3 \colon \Omega \rightarrow \R$.
Here, $\Sigma_1 \vee \Sigma_2$ denotes the sigma-algebra generated by $\Sigma_1 \cup \Sigma_2$.
Various properties of conditional independence can be found, for example, in \cite[Chapter 8]{Kallenberg2021} and \cite[Chapter 7.3]{ChowTeicher1997}.
The following facts will be useful.

\begin{remark}
\label{rem_CondInd_transform}
	Consider sub-sigma-algebras $\Sigma_1,\Sigma_1',\Sigma_2,\Sigma_2',\Sigma_3 \subseteq \Ac$.
\begin{enumerate}[\upshape (i)]\itemsep=0em
\item
If $\Sigma_1' \subseteq \Sigma_1 \vee \Sigma_2$ and if $\Sigma_1 \CondInd{\Sigma_2} \Sigma_3$ wrt $\Pb$, then $\Sigma_1' \CondInd{\Sigma_2} \Sigma_3$ wrt $\Pb$.
This is immediate from the equivalent characterization \eqref{eq_CondInd_alt_def} and the tower property of conditional expectations.

\item
If $\Sigma_3' \subseteq \Sigma_3$ and if $\Sigma_1 \CondInd{\Sigma_2} \Sigma_3$ wrt $\Pb$, then $\Sigma_1 \CondInd{\Sigma_2 \vee \Sigma_3'} \Sigma_3$ wrt $\Pb$.
Indeed, it is immediate from the definition \eqref{eq_CondInd_def} that $\Sigma_3 \CondInd{\Sigma_2} \Sigma_1$ and $\Sigma_3' \CondInd{\Sigma_2} \Sigma_1$.
Then, applying the equivalent characterization \eqref{eq_CondInd_alt_def} to each immediately yields $\Sigma_1 \CondInd{\Sigma_2 \vee \Sigma_3'} \Sigma_3$.
\end{enumerate}
Furthermore, let $(\Omega',\Ac',\Pb')$ be another probability space and $\Gc \colon (\Omega',\Ac') \rightarrow (\Omega,\Ac)$ be measurable.
\begin{enumerate}[\upshape (i)]\itemsep=0em
\setcounter{enumi}{2}
\item
It holds $\Sigma_1 \CondInd{\Sigma_2} \Sigma_3$ wrt $\Gc_*\Pb'$ if and only if $\Gc^{-1}(\Sigma_1) \CondInd{\Gc^{-1}(\Sigma_2)} \Gc^{-1}(\Sigma_3)$ wrt $\Pb'$.
This follows from how conditional expectations behave under push-forward transformations, namely
\begin{equation}
\label{eq_CondInd_transform}
	\ExpC{\Gc_*\Pb'}{\beta}{\Sigma} \circ \Gc
	= \ExpC{\Pb'}{\beta \circ \Gc}{\Gc^{-1}(\Sigma)} \quad \Pb'\text{-a.s.}
\end{equation}
for any bounded $\Ac$-measurable function $\beta \colon \Omega \rightarrow \R$ and any sub-sigma-algebra $\Sigma \subseteq \Ac$.
\end{enumerate}
\end{remark}

The following lemma is a version of \cite[Proposition 8.20]{Kallenberg2021}.
For the sake of completeness, we provide a proof in \cref{app_sec_technical}.

\begin{lemma}[Adaptation of Proposition 8.20 in \cite{Kallenberg2021}]
\label{lem_CondInd_repr}
	Let $\Lambda$ denote the Lebesgue measure on $[0,1]$.
For each $i = 1,2,3$, suppose $\Sigma_i$ is the sigma-algebra generated by a measurable map $V_i \colon \Omega \rightarrow \Vc_i$ into a measurable space $\Vc_i$.
Suppose $g \colon \Vc_2 \times [0,1] \rightarrow \Vc_3$ is a measurable map that satisfies $(\pi_{\Vc_2} \times g)_*( (V_2)_*\Pb \otimes \Lambda ) = (V_2 \times V_3)_*\Pb$.
Let $G \colon \Vc_1 \times \Vc_2 \times [0,1] \rightarrow \Vc_1 \times \Vc_2 \times \Vc_3$ denote the map $G(v_1,v_2,\lambda) = (v_1,v_2,g(v_2,\lambda))$.
Then, the following are equivalent.
\begin{enumerate}[\upshape (i)]\itemsep=0em
\item
$\Sigma_1$ and $\Sigma_3$ are conditionally independent given $\Sigma_2$ wrt $\Pb$.

\item
The map $G$ satisfies $G_*((V_1 \times V_2)_*\Pb \otimes \Lambda) = (V_1 \times V_2 \times V_3)_*\Pb$.
\end{enumerate}
\end{lemma}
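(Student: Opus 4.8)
The plan is to funnel both conditions through the single auxiliary identity
\begin{equation*}
	\ExpS{\Pb}{\phi_1(V_1)\phi_2(V_2)\phi_3(V_3)} = \ExpS{\Pb}{\phi_1(V_1)\phi_2(V_2)\,\ExpC{\Pb}{\phi_3(V_3)}{\Sigma_2}}, \tag{$\star$}
\end{equation*}
demanded for all bounded measurable $\phi_i \colon \Vc_i \rightarrow \R$, $i = 1,2,3$. Before attacking either equivalence, I would extract the sole consequence of the hypothesis on $g$ that is needed. Setting $h_{\phi_3}(v_2) := \int_0^1 \phi_3(g(v_2,\lambda))\,d\Lambda(\lambda)$ and testing the identity $(\pi_{\Vc_2} \times g)_*((V_2)_*\Pb \otimes \Lambda) = (V_2 \times V_3)_*\Pb$ against the product function $(v_2,v_3) \mapsto \phi_2(v_2)\phi_3(v_3)$, Fubini gives $\ExpS{\Pb}{\phi_2(V_2)\,h_{\phi_3}(V_2)} = \ExpS{\Pb}{\phi_2(V_2)\phi_3(V_3)}$ for every bounded measurable $\phi_2$. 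As $h_{\phi_3}(V_2)$ is bounded and $\Sigma_2$-measurable and $\Sigma_2 = \sigma(V_2)$, this identifies $h_{\phi_3}(V_2)$ as a version of $\ExpC{\Pb}{\phi_3(V_3)}{\Sigma_2}$.

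Next I would prove (ii) $\Leftrightarrow$ $(\star)$. Since measurable rectangles form a $\pi$-system generating the product $\sigma$-algebra on $\Vc_1 \times \Vc_2 \times \Vc_3$, the equality of measures in (ii) is equivalent to the equality of their integrals against all product functions $\phi_1 \otimes \phi_2 \otimes \phi_3$. The integral of such a function against $(V_1 \times V_2 \times V_3)_*\Pb$ is precisely the left-hand side of $(\star)$. For the other measure, I would unfold $G_*((V_1 \times V_2)_*\Pb \otimes \Lambda)$: pushing $\phi_1 \otimes \phi_2 \otimes \phi_3$ through $G(v_1,v_2,\lambda) = (v_1,v_2,g(v_2,\lambda))$ and applying Fubini in $\lambda$ leaves $\ExpS{\Pb}{\phi_1(V_1)\phi_2(V_2)\,h_{\phi_3}(V_2)}$, which by the previous step equals the right-hand side of $(\star)$. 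Matching the two therefore yields the equivalence.

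Then I would prove $(\star) \Leftrightarrow$ (i). By the characterization \eqref{eq_CondInd_alt_def} together with the tower property, conditional independence $\Sigma_1 \CondInd{\Sigma_2} \Sigma_3$ is equivalent to $\ExpS{\Pb}{W\phi_3(V_3)} = \ExpS{\Pb}{W\,\ExpC{\Pb}{\phi_3(V_3)}{\Sigma_2}}$ for all bounded $\Sigma_1 \vee \Sigma_2$-measurable $W$ and all bounded measurable $\phi_3$. Choosing $W = \phi_1(V_1)\phi_2(V_2)$ immediately gives (i) $\Rightarrow$ $(\star)$. For the converse I would fix $\phi_3$ and compare the two finite signed measures $C \mapsto \ExpS{\Pb}{\mathbf{1}_C\,\phi_3(V_3)}$ and $C \mapsto \ExpS{\Pb}{\mathbf{1}_C\,\ExpC{\Pb}{\phi_3(V_3)}{\Sigma_2}}$ on $\Sigma_1 \vee \Sigma_2$: the instance of $(\star)$ with $\phi_1 = \mathbf{1}_{A_1}$ and $\phi_2 = \mathbf{1}_{A_2}$ shows they agree on the $\pi$-system $\{V_1^{-1}(A_1) \cap V_2^{-1}(A_2)\}$, which generates $\Sigma_1 \vee \Sigma_2$ and contains $\Omega$, so by Dynkin's theorem they agree throughout. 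This says $\ExpC{\Pb}{\phi_3(V_3)}{\Sigma_2}$ serves as $\ExpC{\Pb}{\phi_3(V_3)}{\Sigma_1 \vee \Sigma_2}$, i.e.\ \eqref{eq_CondInd_alt_def} holds; since every bounded $\Sigma_3$-measurable function is of the form $\phi_3(V_3)$, this is exactly (i).

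The Fubini manipulations and the two $\pi$-$\lambda$ arguments are routine; the step demanding genuine care is the middle one. The map $g$ supplies only the \emph{conditional} law of $V_3$ given $V_2$, so the crux is that the fiberwise identity $h_{\phi_3}(V_2) = \ExpC{\Pb}{\phi_3(V_3)}{\Sigma_2}$ survives insertion of the bystander $\phi_1(V_1)$. The product form of $G$ -- feeding $V_1$ through untouched while the fresh, independent $\lambda$ generates the third coordinate -- is precisely what transports the fiberwise statement into the joint identity $(\star)$, and keeping that bookkeeping honest is where I expect the main effort to go.
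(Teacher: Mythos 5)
Your proof is correct, but it takes a genuinely different route from the paper's. The paper follows Kallenberg's transfer-style argument: it augments the probability space to $(\Omega \times [0,1], \Ac \vee \Bc([0,1]), \Pb \otimes \Lambda)$ with an independent uniform coordinate $\vartheta$, derives the key almost-sure identity $\ExpC{\Pb'}{\beta(g(V_2',\vartheta))}{\Sigma_1' \vee \Sigma_2'} = \ExpC{\Pb'}{\beta(g(V_2',\vartheta))}{\Sigma_2'}$ from the trivial conditional independence of $\Bc([0,1])$ from $(\Sigma_1',\Sigma_2')$ together with \cref{rem_CondInd_transform}, and then shuttles both implications through the push-forward formula \eqref{eq_CondInd_transform} for conditional expectations. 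You instead stay on the original space $\Omega$, funnel both (i) and (ii) through the scalar test-function identity $(\star)$, and capture the entire content of the hypothesis on $g$ in the Fubini identification $h_{\phi_3}(V_2) = \ExpC{\Pb}{\phi_3(V_3)}{\Sigma_2}$ a.s.; after that, only Doob--Dynkin and two $\pi$-$\lambda$ arguments are needed (for the signed-measure comparison in $(\star) \Rightarrow$ (i), note either that the agreement sets of two finite signed measures coinciding on $\Omega$ form a $\lambda$-system, or simply shift $\phi_3$ by a constant to reduce to positive measures). What the paper's route buys is reuse of the conditional-independence calculus it has already built -- the same machinery is deployed again in \cref{lem_push_forw_cont,lem_CondInd_shift} -- and a transparent interpretation of $g$ as a randomization of the conditional law; what your route buys is a shorter, self-contained argument that avoids extending $\sigma$-algebras and verifying that conditional independence survives the passage to the product space. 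Both establish the full equivalence.
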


\begin{remark}
\label{rem_CondInd_repr_add}
	In \cref{lem_CondInd_repr}, such a measurable map $g$ exists as soon as $\Vc_3$ is a standard Borel space; see, for example, \cite[Theorem 8.17]{Kallenberg2021} and its proof.
\end{remark}

The following is an immediate consequence of \cref{lem_CondInd_repr,rem_CondInd_repr_add}.

\begin{lemma}
\label{lem_CondInd_injective}
	For each $i = 1,2,3$, suppose $\Sigma_i$ is the sigma-algebra generated by a measurable map $V_i \colon \Omega \rightarrow \Vc_i$ into a measurable space $\Vc_i$, with $\Vc_3$ a standard Borel space.
Furthermore, suppose the map $V_1 \times V_2 \times V_3 \colon \Omega \rightarrow \Vc_1 \times \Vc_2 \times \Vc_3$ admits a measurable left-inverse.
Let $\Pc$ denote the set of all probability measures $\Pb'$ on $(\Omega,\Ac)$ with respect to which $\Sigma_1 \CondInd{\Sigma_2} \Sigma_3$ and which satisfy $(V_2 \times V_3)_*\Pb' = (V_2 \times V_3)_*\Pb$.
Then, the map $\Pb' \mapsto (V_1 \times V_2)_*\Pb'$ is injective on $\Pc$.
\end{lemma}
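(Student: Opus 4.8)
The plan is to use the explicit reconstruction of the full joint law afforded by \cref{lem_CondInd_repr}, combined with the fact that all measures in $\Pc$ share the same $(V_2 \times V_3)$-marginal. Write $\rho := (V_2 \times V_3)_*\Pb$, so that every $\Pb' \in \Pc$ satisfies $(V_2 \times V_3)_*\Pb' = \rho$ and, consequently, $(V_2)_*\Pb' = (\pi_{\Vc_2})_*\rho = (V_2)_*\Pb$. Since $\Vc_3$ is standard Borel, \cref{rem_CondInd_repr_add} provides a single measurable map $g \colon \Vc_2 \times [0,1] \rightarrow \Vc_3$ with $(\pi_{\Vc_2} \times g)_*((V_2)_*\Pb \otimes \Lambda) = \rho$. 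Because the left-hand side depends on $\Pb$ only through $(V_2)_*\Pb$ and the right-hand side only through $\rho$, this same $g$ satisfies $(\pi_{\Vc_2} \times g)_*((V_2)_*\Pb' \otimes \Lambda) = (V_2 \times V_3)_*\Pb'$ for every $\Pb' \in \Pc$. In particular, the associated map $G(v_1,v_2,\lambda) = (v_1,v_2,g(v_2,\lambda))$ does not depend on the choice of $\Pb' \in \Pc$.

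Next, fix any $\Pb' \in \Pc$. By definition of $\Pc$, we have $\Sigma_1 \CondInd{\Sigma_2} \Sigma_3$ with respect to $\Pb'$, so condition (i) of \cref{lem_CondInd_repr} holds with $\Pb$ replaced by $\Pb'$; the hypothesis on $g$ required by the lemma is met for $\Pb'$ by the previous paragraph. The implication (i) $\Rightarrow$ (ii) therefore yields
\begin{equation*}
	G_*\big( (V_1 \times V_2)_*\Pb' \otimes \Lambda \big)
	= (V_1 \times V_2 \times V_3)_*\Pb'.
\end{equation*}
Since $G$ is independent of $\Pb'$, this identity shows that the full law $(V_1 \times V_2 \times V_3)_*\Pb'$ is completely determined by the marginal $(V_1 \times V_2)_*\Pb'$. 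Hence, if $\Pb',\Pb'' \in \Pc$ satisfy $(V_1 \times V_2)_*\Pb' = (V_1 \times V_2)_*\Pb''$, then applying the displayed formula to each gives $(V_1 \times V_2 \times V_3)_*\Pb' = (V_1 \times V_2 \times V_3)_*\Pb''$.

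Finally, let $L$ be a measurable left-inverse of $V_1 \times V_2 \times V_3$, so that $L \circ (V_1 \times V_2 \times V_3) = \mathrm{id}_{\Omega}$. Pushing the previous equality of joint laws forward under $L$ gives
\begin{equation*}
	\Pb'
	= L_*\big( (V_1 \times V_2 \times V_3)_*\Pb' \big)
	= L_*\big( (V_1 \times V_2 \times V_3)_*\Pb'' \big)
	= \Pb'',
\end{equation*}
which is exactly the asserted injectivity of $\Pb' \mapsto (V_1 \times V_2)_*\Pb'$ on $\Pc$.

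The only delicate point is the uniformity established in the first paragraph: one must ensure that a single conditional-kernel realization $g$, and hence a single map $G$, serves simultaneously for all of $\Pc$. This is precisely where the shared $(V_2 \times V_3)$-marginal is used, and it is what turns the reconstruction identity into a statement about the \emph{same} map $G$ across the whole family, so that agreement of the $(V_1 \times V_2)$-marginals forces agreement of the full joint laws.
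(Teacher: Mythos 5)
Your proof is correct and follows exactly the route the paper intends: the paper states this lemma as an immediate consequence of \cref{lem_CondInd_repr} and \cref{rem_CondInd_repr_add}, and your argument is precisely the fleshed-out version of that implication, with the key uniformity observation (one $g$, hence one $G$, serving all of $\Pc$ thanks to the shared $(V_2 \times V_3)$-marginal) and the final recovery of $\Pb'$ via the measurable left-inverse both handled correctly.
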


In general, for fixed sigma-algebras, the set of measures with respect to which conditional independence holds is closed neither in the weak topology nor in a Wasserstein topology \cite{Lauritzen1996}.
Only in the topology of total variation does this set become closed \cite{CheEckstein2025, Lauritzen2024}.
However, this topology is too strong for our purposes.
The issue in the weak and Wasserstein topologies is that information in $\Sigma_2$ could degenerate in a limit, and $\Sigma_1$ and $\Sigma_3$ become dependent once the information from $\Sigma_2$ is lost.
Conversely, if we stipulate that only information in $\Sigma_1$ varies, then conditional independence remains preserved under weak limits.
This is the message of the next lemma.

\begin{lemma}
\label{lem_push_forw_cont}
	For each $i = 1,2,3$, suppose $\Sigma_i$ is the sigma-algebra generated by a continuous map $V_i \colon \Omega \rightarrow \Vc_i$ into a Polish space $\Vc_i$.
Suppose there exists a sequence $(\Pb_n)_{n \in \N}$ of probability measures on $\Omega$ with the following properties.
\begin{enumerate}[\upshape (i)]\itemsep=0em
\item
$\Sigma_1 \CondInd{\Sigma_2} \Sigma_3$ wrt $\Pb_n$ for all $n \in \N$.

\item
$(V_2 \times V_3)_*\Pb_n = (V_2 \times V_3)_*\Pb$ for all $n \in \N$.

\item
$(V_1 \times V_2 \times V_3)_*\Pb$ is an accumulation point of $((V_1 \times V_2 \times V_3)_*\Pb_n)_{n \in \N}$ in the weak topology.
\end{enumerate}
Then, $\Sigma_1 \CondInd{\Sigma_2} \Sigma_3$ wrt $\Pb$.
\end{lemma}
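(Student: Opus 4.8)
The plan is to verify conditional independence with respect to $\Pb$ through a product test-function characterization and to pass to a weak limit along a subsequence, the whole argument hinging on the fact that hypothesis (ii) freezes the conditional law of $V_3$ given $V_2$. Concretely, I would write $\mu_n := (V_1 \times V_2 \times V_3)_*\Pb_n$ and $\mu := (V_1 \times V_2 \times V_3)_*\Pb$ for the laws on the Polish space $\Vc_1 \times \Vc_2 \times \Vc_3$, and set $\nu := (V_2)_*\Pb$. Since $\Vc_3$ is Polish, a regular conditional distribution $\kappa(v_2,\cdot)$ of $V_3$ given $V_2$ exists, and by (ii) the $(V_2,V_3)$-marginals of all $\Pb_n$ and of $\Pb$ coincide, so a single version of $\kappa$ serves all of them. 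For bounded continuous $\phi_3 \colon \Vc_3 \rightarrow \R$ I set $\tilde\phi_3(v_2) := \int \phi_3 \, d\kappa(v_2,\cdot)$; this single bounded \emph{measurable} function represents both $\ExpC{\Pb_n}{\phi_3(V_3)}{\Sigma_2}$ and $\ExpC{\Pb}{\phi_3(V_3)}{\Sigma_2}$. Using \eqref{eq_CondInd_alt_def}, hypothesis (i) then yields, for all bounded continuous $\phi_1,\phi_2,\phi_3$,
\begin{equation*}
	\int \phi_1(v_1)\phi_2(v_2)\phi_3(v_3) \, d\mu_n = \int \phi_1(v_1)\,\phi_2(v_2)\tilde\phi_3(v_2) \, d\mu_n,
\end{equation*}
with the \emph{same} $\tilde\phi_3$ for every $n$. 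Since $\Vc_1\times\Vc_2\times\Vc_3$ is Polish, the weak topology on probability measures is metrizable, so by (iii) there is a subsequence with $\mu_{n_k} \rightarrow \mu$ weakly.

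The left-hand integrands are bounded and continuous, so the left-hand sides converge to $\int \phi_1\phi_2\phi_3 \, d\mu$ along the subsequence. The right-hand side is where the main obstacle lies: its integrand $\phi_1(v_1)\cdot[\phi_2\tilde\phi_3](v_2)$ is continuous in $v_1$ but only \emph{measurable} in $v_2$, so weak convergence of the $(V_1,V_2)$-marginals $\theta_{n_k} := (V_1 \times V_2)_*\Pb_{n_k} \rightarrow \theta := (V_1 \times V_2)_*\Pb$ does not transfer directly. The resolution exploits that every $\theta_n$ and $\theta$ has the \emph{same} $\Vc_2$-marginal $\nu$. Fixing $\epsilon > 0$, Lusin's theorem provides a compact $K \subseteq \Vc_2$ with $\nu(\Vc_2 \setminus K) < \epsilon$ on which $h := \phi_2\tilde\phi_3$ is continuous; Tietze's theorem extends $h|_K$ to a bounded continuous $\tilde h$ on $\Vc_2$ with $\{h \neq \tilde h\} \subseteq \Vc_2 \setminus K$. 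Then $\phi_1\tilde h$ is bounded continuous, so $\int \phi_1\tilde h \, d\theta_{n_k} \rightarrow \int \phi_1\tilde h \, d\theta$, while the two error terms $\int \phi_1(h - \tilde h)\, d\theta_{n_k}$ and $\int \phi_1(h-\tilde h)\, d\theta$ are each bounded by $2\|\phi_1\|_\infty\|h\|_\infty\, \nu(h \neq \tilde h) < 2\|\phi_1\|_\infty\|h\|_\infty\,\epsilon$ precisely because the $\Vc_2$-marginal equals $\nu$ for all measures involved. Letting $k \rightarrow \infty$ and then $\epsilon \rightarrow 0$ gives $\int \phi_1 h \, d\theta_{n_k} \rightarrow \int \phi_1 h \, d\theta$.

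Combining the two limits would yield, for all bounded continuous $\phi_1,\phi_2,\phi_3$,
\begin{equation*}
	\int \phi_1(v_1)\phi_2(v_2)\phi_3(v_3)\, d\mu = \int \phi_1(v_1)\,\phi_2(v_2)\tilde\phi_3(v_2)\, d\mu.
\end{equation*}
Rearranged, this reads $\ExpS{\Pb}{\psi\,[\phi_3(V_3) - \tilde\phi_3(V_2)]} = 0$ for $\psi = \phi_1(V_1)\phi_2(V_2)$. A standard monotone-class/density argument extends this from such products to all bounded $\Sigma_1 \vee \Sigma_2$-measurable $\psi$, whence $\ExpC{\Pb}{\phi_3(V_3)}{\Sigma_1 \vee \Sigma_2} = \tilde\phi_3(V_2) = \ExpC{\Pb}{\phi_3(V_3)}{\Sigma_2}$. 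Ranging $\phi_3$ over a countable determining family of bounded continuous functions (available since $\Vc_3$ is Polish) upgrades this to \eqref{eq_CondInd_alt_def} for all bounded $\Sigma_3$-measurable functions, which is exactly $\Sigma_1 \CondInd{\Sigma_2} \Sigma_3$ with respect to $\Pb$. The only genuinely non-routine step is the Lusin--Tietze passage in the second paragraph, and it is there that hypothesis (ii) enters decisively through the common fixed marginal $\nu$; everything else is bookkeeping with conditional expectations and weak convergence.
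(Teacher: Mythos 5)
Your proof is correct, but it takes a different route from the paper's. The paper's proof is a two-step reduction: by \cref{lem_CondInd_repr} (the noise-outsourcing representation adapted from Kallenberg), hypotheses (i) and (ii) give a \emph{single} map $G(v_1,v_2,\lambda) = (v_1,v_2,g(v_2,\lambda))$ with $G_*((V_1 \times V_2)_*\Pb_n \otimes \Lambda) = (V_1 \times V_2 \times V_3)_*\Pb_n$ for all $n$, and the limit passage -- the only delicate point, since $g$ is merely measurable -- is then outsourced to a known stability result for push-forwards under maps that are continuous in one coordinate when the law on the remaining coordinates is fixed \cite[Corollary 2.9]{JacodMemin1981}, \cite[Example 2]{PratelliRigo2023}; conditional independence under $\Pb$ then follows by applying \cref{lem_CondInd_repr} in the reverse direction. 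You instead work with the test-function characterization \eqref{eq_CondInd_alt_def}, replace the transfer map $g$ by a disintegration kernel $\kappa$ (again fixed across all measures by (ii)), and prove the limit passage by hand via Lusin--Tietze, exploiting that all $(V_1\times V_2)$-marginals share the $\Vc_2$-marginal $\nu$ so that the approximation error is uniform in $n$; a monotone-class argument then closes the loop. Your Lusin argument is essentially a self-contained proof, in this special case, of the very stability result the paper cites, so the two proofs share the same skeleton and the same crucial insight -- hypothesis (ii) freezes everything involving $(V_2,V_3)$, leaving only the $V_1$-dependence to be controlled by weak convergence. What your version buys is independence from the external references and from the representation lemma; what the paper's version buys is brevity, since \cref{lem_CondInd_repr} is needed elsewhere in the paper anyway and the citation absorbs the entire analytic work.
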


\begin{proof}
	By hypotheses (i) and (ii), we know from \cref{lem_CondInd_repr,rem_CondInd_repr_add} that $G_*((V_1 \times V_2)_*\Pb_n \otimes \Lambda) = (V_1 \times V_2 \times V_3)_*\Pb_n$ for all $n \in \N$.
We need to argue that this equality continues to hold with $\Pb$ in place of $\Pb_n$ despite the possibility that $g$ may not be continuous.
But this is indeed not a problem since $\Vc_1$ and $\Vc_2$ are Polish, $G$ is continuous in its $\Vc_1$-coordinate, and $(\pi_{\Vc_2} \times \pi_{[0,1]})_*((V_1 \times V_2)_*\Pb_n \otimes \Lambda) = (V_2)_*\Pb \otimes \Lambda$ does not depend on $n$; see \cite[Corollary 2.9]{JacodMemin1981} and also \cite[Example 2]{PratelliRigo2023}.
\end{proof}

\subsection{Proofs for causal stochastic solutions}

To prove results about causal solutions, we first translate the notion of causality to measures on $P(\XS^- \times \US)$.
Let $\Sigma_{\Xc}^-$ and $\Sigma_{\Uc}$ be the sigma-algebras on $\XS^- \times \US$ generated by the projections onto $\XS^-$ and $\US$, respectively, and let $\Sigma_{\Uc}^-$ be the sigma-algebra on $\XS^- \times \US$ generated by the composition $\XS^- \times \US \rightarrow \US^-$ of the projection and the truncation.

\begin{definition}
	We call a measure $\mu \in P(\XS^- \times \US)$ {\bfi causal at time} $t \in \Z_-$ if $\Sigma_{\Xc}^-$ and $\Sigma_{\Uc}$ are conditionally independent given $\Sigma_{\Uc}^-$ with respect to $T^{-t}_*\mu$.
We call $\mu$ {\bfi causal} if it is causal at all times $t \in \Z_-$.
\end{definition}

It is immediate from the definition that if $\mu \in P(\XS^- \times \US)$ is causal, then so is $T_*\mu$.
In particular, if $\varphi_*\mu$ is causal, then so is $T_*\varphi_*\mu = \mu$.
We will see in \cref{lem_causal_at_zero}.(i) below that the converse also holds.

\begin{remark}
\label{rem_causal_comp}
	Note that $(T^{-t})^{-1}(\Sigma_{\Xc}^-) = \tau^{-1}(\Sigma_{\Xc,t})$ and $(T^{-t})^{-1}(\Sigma_{\Uc}^-) = \tau^{-1}(\Sigma_{\Uc,t})$.
It follows from \cref{rem_CondInd_transform}.(iii) that a measure $\mu \in P(\XS^- \times \US)$ is causal at time $t \in \Z_-$ if and only if  $\tau^{-1}(\Sigma_{\Xc,t}) \CondInd{\tau^{-1}(\Sigma_{\Uc,t})} \Sigma_{\Uc}$ wrt $\mu$.
Using that $\tau^{-1}(\Sigma_{\Uc,0}) \subseteq \Sigma_{\Uc}$ and applying \cref{rem_CondInd_transform}.(iii) once more, we deduce that if $\mu$ is causal, then so is $\tau_*\mu$ in the sense of \cref{def_causal}.
\end{remark}

Let $\X_0 \subseteq \Pstate$ denote the set of all measures that are causal at time zero.
The next lemma asserts that this set satisfies the assumptions that we posed on the set $\X_0$ at the beginning of \cref{sec_struct_sol}.

\begin{lemma}
\label{lem_causal_at_zero}
	The set $\X_0$ has the following properties.
\begin{enumerate}[\upshape (i)]\itemsep=0em
\item
It is forward-invariant under $\varphi_*$.

\item
Its fiber $\X_0 \cap (\pi_{\US})_*^{-1}(\Xi)$ is weakly closed (in the subspace topology of $\Pstate$) for any $\Xi \in P(\US)$.

\item
For any $\Xi \in \Pin$ and $\mu^- \in \Pstate^- \cap (\pi_{\US^-})_*^{-1}(\tau_*\Xi)$, there exists a unique $\mu \in \X_0 \cap (\pi_{\US})_*^{-1}(\Xi)$ with $\tau_*\mu = \mu^-$.
\end{enumerate}
\end{lemma}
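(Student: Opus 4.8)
The plan is to treat the three properties separately, in each case unwinding causality at time zero into the conditional independence $\Sigma_{\Xc}^- \CondInd{\Sigma_{\Uc}^-} \Sigma_{\Uc}$ and then invoking the appropriate tool from \cref{sec_cond_ind}. For (i), I would use the transformation rule \cref{rem_CondInd_transform}.(iii): the measure $\varphi_*\mu$ is causal at time zero precisely when $\varphi^{-1}(\Sigma_{\Xc}^-) \CondInd{\varphi^{-1}(\Sigma_{\Uc}^-)} \varphi^{-1}(\Sigma_{\Uc})$ holds with respect to $\mu$, so the task reduces to identifying these pulled-back sigma-algebras. Since $\sigma$ is a homeomorphism, $\varphi^{-1}(\Sigma_{\Uc}) = \Sigma_{\Uc}$; since $\tau \circ \sigma \circ \pi_{\US}$ reads off the coordinates $(\seq{u}{t})_{t \leq 1}$, the algebra $\varphi^{-1}(\Sigma_{\Uc}^-)$ is generated by those coordinates and contains both $\Sigma_{\Uc}^-$ and the sigma-algebra of $\seq{u}{1}$; and since the new state sequence depends only on $\Seq{x}$ and on $\seq{u}{1}$, one gets $\varphi^{-1}(\Sigma_{\Xc}^-) \subseteq \Sigma_{\Xc}^- \vee \varphi^{-1}(\Sigma_{\Uc}^-)$. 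Beginning from $\Sigma_{\Xc}^- \CondInd{\Sigma_{\Uc}^-} \Sigma_{\Uc}$, I would first apply \cref{rem_CondInd_transform}.(ii) with the sub-algebra $\varphi^{-1}(\Sigma_{\Uc}^-) \subseteq \Sigma_{\Uc}$ to enlarge the conditioning algebra to $\varphi^{-1}(\Sigma_{\Uc}^-)$, and then \cref{rem_CondInd_transform}.(i) with the inclusion above to replace $\Sigma_{\Xc}^-$ by $\varphi^{-1}(\Sigma_{\Xc}^-)$; this is exactly the pulled-back conditional independence, so $\varphi_*\mu$ is causal at time zero.

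For (ii), I would show that the set $C$ of all $\mu \in P(\XS^- \times \US)$ that are causal at time zero and satisfy $(\pi_{\US})_*\mu = \Xi$ is weakly closed; intersecting with $\Pstate$ then yields a set closed in the finer subspace topology of $\Pstate$, which is exactly $\X_0 \cap (\pi_{\US})_*^{-1}(\Xi)$. Under \cref{assmpt_prod_top} the space $\XS^- \times \US$ is Polish, so its weak topology is metrizable and it suffices to consider a weakly convergent sequence $\mu_n \to \mu$ in $C$. Continuity of $(\pi_{\US})_*$ passes the marginal condition to the limit. For causality, I would apply \cref{lem_push_forw_cont} with $V_1 = \pi_{\XS^-}$, $V_2 = \tau \circ \pi_{\US}$, and $V_3 = \pi_{\US}$: its hypothesis (i) is causality of $\mu_n$; hypothesis (ii) holds because $V_2$ and $V_3$ are functions of $\pi_{\US}$ and $\mu_n$ and $\mu$ all have $\pi_{\US}$-marginal $\Xi$; and hypothesis (iii) is the weak convergence of the images under the continuous map $V_1 \times V_2 \times V_3$. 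The lemma then gives $\Sigma_{\Xc}^- \CondInd{\Sigma_{\Uc}^-} \Sigma_{\Uc}$ with respect to $\mu$, so $\mu \in C$.

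For (iii), uniqueness is immediate from \cref{lem_CondInd_injective} applied to the same $V_1, V_2, V_3$: the map $V_1 \times V_2 \times V_3 \colon (\Seq{x},\Seq{u}) \mapsto (\Seq{x}, \tau(\Seq{u}), \Seq{u})$ has the continuous left-inverse $(\Seq{x},\Seq{w},\Seq{u}) \mapsto (\Seq{x},\Seq{u})$, the codomain $\US$ is standard Borel, and prescribing $(\pi_{\US})_*\mu = \Xi$ fixes the $(V_2 \times V_3)$-marginal because $V_2 = \tau \circ V_3$; hence two causal measures with the same $(V_1 \times V_2)$-marginal $\tau_*\mu = \mu^-$ must coincide. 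For existence, I would disintegrate both $\mu^-$ and $\Xi$ over the common past-input law $\tau_*\Xi = (\pi_{\US^-})_*\mu^-$ --- available because $\XS^-$ and $\US$ are Polish --- obtaining conditional laws $\nu_{\Seq{w}} \in P(\XS^-)$ of the state and $\Xi_{\Seq{w}} \in P(\US)$ of the full input given the past input $\Seq{w}$, with each $\Xi_{\Seq{w}}$ supported on $\tau^{-1}(\Seq{w})$, and glue them conditionally independently by
\begin{equation*}
	\mu = \int_{\US^-} \big( \nu_{\Seq{w}} \otimes \Xi_{\Seq{w}} \big) \, d(\tau_*\Xi)(\Seq{w}).
\end{equation*}
A direct computation gives $\tau_*\mu = \mu^-$ (using that $\Xi_{\Seq{w}}$ lives on $\tau^{-1}(\Seq{w})$) and $(\pi_{\US})_*\mu = \Xi$, and conditioning on the past input makes the state and the full input independent under the product $\nu_{\Seq{w}} \otimes \Xi_{\Seq{w}}$, which is precisely causality at time zero; since $\tau_*\mu = \mu^- \in \Pstate^-$, also $\mu \in \Pstate$. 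I expect this existence construction to be the main obstacle: it requires setting up the two disintegrations over the same base and checking that the conditional product realizes the required conditional independence, and it is exactly here that the Polish hypotheses of \cref{assmpt_prod_top} are needed.
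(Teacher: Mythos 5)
Your proposal is correct. Parts (i) and (ii) follow the paper's own proof essentially verbatim: for (i) you pull back the conditional independence through $\varphi$ using \cref{rem_CondInd_transform}.(iii), enlarge the conditioning algebra to $\varphi^{-1}(\Sigma_{\Uc}^-) = \Sigma_{\Uc}^- \vee \Sigma_{\Uc}^1$ via \cref{rem_CondInd_transform}.(ii), and then shrink the first algebra via \cref{rem_CondInd_transform}.(i) using $\varphi^{-1}(\Sigma_{\Xc}^-) \subseteq \Sigma_{\Xc}^- \vee \varphi^{-1}(\Sigma_{\Uc}^-)$, exactly as the paper does; for (ii) the paper's proof is a one-line citation of \cref{lem_push_forw_cont}, and your verification of its hypotheses with $V_1 = \pi_{\XS^-}$, $V_2 = \tau \circ \pi_{\US}$, $V_3 = \pi_{\US}$ -- in particular that fixing the $\pi_{\US}$-marginal fixes the $(V_2 \times V_3)$-marginal because $V_2 = \tau \circ V_3$ -- is precisely the intended argument, as is the reduction to sequences by metrizability. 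In (iii), your uniqueness argument via \cref{lem_CondInd_injective} again coincides with the paper. Where you genuinely diverge is the existence construction: the paper takes a measurable randomization $g \colon \US^- \times [0,1] \rightarrow \US$ from \cref{rem_CondInd_repr_add} satisfying $(\pi_{\US^-} \times g)_*(\tau_*\Xi \otimes \Lambda) = (\tau \times \mathrm{id}_{\US})_*\Xi$, defines $\mu$ as the push-forward of $\mu^- \otimes \Lambda$ under $(\Seq{x},\Seq{w},\lambda) \mapsto (\Seq{x},g(\Seq{w},\lambda))$, and verifies causality at time zero by one application of \cref{lem_CondInd_repr}; you instead disintegrate $\mu^-$ and $\Xi$ over the common base law $\tau_*\Xi$ and glue the conditional kernels into a conditional product. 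The two constructions are equivalent -- the map $g$ is exactly a functional representation of your kernel $\Xi_{\Seq{w}}$ -- and both need the Polish hypotheses of \cref{assmpt_prod_top}. The paper's route has the advantage of staying entirely within the push-forward calculus set up in \cref{sec_cond_ind}, so that causality drops out of \cref{lem_CondInd_repr} without further work; your route is the more standard probabilistic one and makes the conditional-independence structure of $\mu$ transparent, at the cost of invoking disintegration theory (existence of regular conditional probabilities supported on the fibers of $\tau$) as an additional external tool, together with the routine but unstated measurability of the map $\Seq{w} \mapsto \nu_{\Seq{w}} \otimes \Xi_{\Seq{w}}$ needed for your integral to define a measure.
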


\begin{proof}
	(i)
Let $\mu \in \X_0$.
We need to show that $\Sigma_{\Xc}^- \CondInd{\Sigma_{\Uc}^-} \Sigma_{\Uc}$ wrt $\varphi_*\mu$.
By \cref{rem_CondInd_transform}.(iii) and since $\varphi^{-1}(\Sigma_{\Uc}) = \Sigma_{\Uc}$, it suffices to convince ourselves that $\varphi^{-1}(\Sigma_{\Xc}^-) \CondInd{\varphi^{-1}(\Sigma_{\Uc}^-)} \Sigma_{\Uc}$ wrt $\mu$.
Since we work with product topologies (by \cref{assmpt_prod_top}), the sigma-algebra $\varphi^{-1}(\Sigma_{\Uc}^-)$ can be written as $\Sigma_{\Uc}^- \vee \Sigma_{\Uc}^1$, where $\Sigma_{\Uc}^1$ is the sigma-algebra generated by the projection $\XS^- \times \US \rightarrow \Uc$, $(\Seq{x},\Seq{u}) \mapsto \seq{u}{1}$.
Thus, we can apply \cref{rem_CondInd_transform}.(ii) and the premise that $\mu \in \X_0$ to find that $\Sigma_{\Xc}^- \CondInd{\varphi^{-1}(\Sigma_{\Uc}^-)} \Sigma_{\Uc}$ wrt $\mu$.
Then, we apply \cref{rem_CondInd_transform}.(i) with $\varphi^{-1}(\Sigma_{\Xc}^-) \subseteq \Sigma_{\Xc}^- \vee \Sigma_{\Uc}^- \vee \Sigma_{\Uc}^1$ to conclude that $\varphi^{-1}(\Sigma_{\Xc}^-) \CondInd{\varphi^{-1}(\Sigma_{\Uc}^-)} \Sigma_{\Uc}$ wrt $\mu$ as desired.

(ii)
This follows from \cref{lem_push_forw_cont} (since $\XS^-$ and $\US$ are Polish by \cref{assmpt_prod_top}, the weak topology on $\Pstate$ is metrizable and, hence, it suffices to consider sequences instead of nets).

(iii)
Take a measurable map $g \colon \US^- \times [0,1] \rightarrow \US$ that satisfies $(\pi_{\US^-} \times g)_*( \tau_*\Xi \otimes \Lambda ) = (\tau \times \mathrm{id}_{\US})_*\Xi$, whose existence we recalled in \cref{rem_CondInd_repr_add}.
In particular, $\tau \circ g = \pi_{\US^-}$ holds $\tau_*\Xi \otimes \Lambda$-a.s., since $\US^-$ is Polish.
Let $\mu := (\pi_{\XS^-} \times (g \circ \pi_{\US^- \times [0,1]}))_*(\mu^- \otimes \Lambda)$.
Then, $(\pi_{\XS^-} \times (\pi_{\US^-} \circ \tau))_*\mu = \mu^-$ and $((\pi_{\US^-} \circ \tau) \times \pi_{\US})_*\mu = (\tau \times \mathrm{id}_{\US})_*\Xi$.
An application of \cref{lem_CondInd_repr} yields $\Sigma_{\Xc}^- \CondInd{\Sigma_{\Uc}^-} \Sigma_{\Uc}$ wrt $\mu$.
Thus, $\mu \in \X_0$.
Uniqueness follows from \cref{lem_CondInd_injective}.
\end{proof}

As in \cref{sec_struct_sol}, let $\X_{\infty} := \bigcap_{n \in \N_0} (T_*^n)^{-1}(\X_0)$.
It is straight-forward to see that $\X_{\infty}$ is exactly the set of causal measures in $\Pstate$.
Recall that $\tau_*(\Sc_{\varphi_*}) \cap \Pcausal = \StochCausalSol$, by definition.

\begin{lemma}
\label{lem_causal_trunc}
	We have $\tau_*(\Sc_{\varphi_*} \cap \X_{\infty}) = \StochCausalSol$.
\end{lemma}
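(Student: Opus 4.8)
The plan is to prove the two set equalities as a pair of inclusions, the forward one being immediate and the reverse one requiring a \emph{causal lift} of $\nu$ to a bi-infinite solution. Throughout I use that $\X_{\infty}$ is precisely the set of causal measures in $\Pstate$ and that $\StochSol = \tau_*(\Sc_{\varphi_*})$ by \cref{prop_stoch_sol}.

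For the inclusion $\tau_*(\Sc_{\varphi_*} \cap \X_{\infty}) \subseteq \StochCausalSol$, I would take $\mu \in \Sc_{\varphi_*} \cap \X_{\infty}$. Being in $\X_{\infty}$ means $\mu$ is causal, so by \cref{rem_causal_comp} its truncation $\tau_*\mu$ is causal in the sense of \cref{def_causal}, i.e.\ $\tau_*\mu \in \Pcausal$. Simultaneously $\tau_*\mu \in \tau_*(\Sc_{\varphi_*}) = \StochSol$. Hence $\tau_*\mu \in \StochSol \cap \Pcausal = \StochCausalSol$, which is the asserted inclusion.

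For the reverse inclusion, fix $\nu \in \StochCausalSol$. I first manufacture an admissible bi-infinite input law: since $\nu \in \StochSol = \tau_*(\Sc_{\varphi_*})$ there is some $\mu_0 \in \Sc_{\varphi_*}$ with $\tau_*\mu_0 = \nu$, and I set $\Xi := (\pi_{\US})_*\mu_0 \in \Pin$, which satisfies $\tau_*\Xi = (\pi_{\US^-})_*\nu$. Then \cref{lem_causal_at_zero}.(iii) provides a unique $\mu \in \X_0 \cap (\pi_{\US})_*^{-1}(\Xi)$ with $\tau_*\mu = \nu$. Because $\tau_*\mu = \nu \in \StochSol$ is supported on $\DetSol$, \cref{prop_det_sol_attractor} gives $\mu(\Sc_{\varphi}) = \mu(\tau^{-1}(\DetSol)) = \nu(\DetSol) = 1$; since $\Sc_{\varphi}$ is closed, \cref{prop_abstr_support_sol} yields $\mu \in \Sc_{\varphi_*}$. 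What remains, and what is the delicate part, is to upgrade the causal-at-time-zero property $\mu \in \X_0$ to full causality $\mu \in \X_{\infty}$.

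To show that $\mu$ is causal at every time $t \in \Z_-$, I would combine two conditional independences with respect to $\mu$. Write $X_t := \tau^{-1}(\Sigma_{\Xc,t})$, $U_t := \tau^{-1}(\Sigma_{\Uc,t})$, $U_0 := \Sigma_{\Uc}^-$, and $U := \Sigma_{\Uc}$, so that $U_t \subseteq U_0 \subseteq U$. The membership $\mu \in \X_0$ reads $\Sigma_{\Xc}^- \CondInd{U_0} U$, whence $X_t \CondInd{U_0} U$ by \cref{rem_CondInd_transform}.(i) (using $X_t \subseteq \Sigma_{\Xc}^-$); since $U_t \subseteq U_0$ this is $X_t \CondInd{U_t \vee U_0} U$. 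On the other hand, causality of $\nu = \tau_*\mu$, read through \cref{rem_causal_comp} and \cref{rem_CondInd_transform}.(iii), gives $X_t \CondInd{U_t} U_0$. The contraction property of conditional independence (see \cite[Chapter 8]{Kallenberg2021}) then merges these into $X_t \CondInd{U_t} (U_0 \vee U) = X_t \CondInd{U_t} U$, i.e.\ $\mu$ is causal at time $t$. Together with $\mu \in \X_0$ this gives $\mu \in \X_{\infty}$, and therefore $\nu = \tau_*\mu \in \tau_*(\Sc_{\varphi_*} \cap \X_{\infty})$, completing the reverse inclusion.

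The main obstacle is exactly this causality-propagation step. The two facts at hand are asymmetric: causality of $\nu$ controls only the \emph{past} inputs $U_0$ in the conditioning, while the causal-at-zero lift controls \emph{all} inputs $U$ but only for the full past-state algebra $\Sigma_{\Xc}^-$. The contraction argument is what reconciles them, and the crux is to get the sigma-algebra inclusions $U_t \subseteq U_0 \subseteq U$ and the identifications $\tau^{-1}(\Sigma_{\Uc,0}) = \Sigma_{\Uc}^-$ and $\tau^{-1}(\Sigma_{\Xc,t}) \subseteq \Sigma_{\Xc}^-$ exactly right, and to confirm that the contraction axiom applies verbatim in this continuous-state, Radon-measure setting (which is underwritten by the Polish assumptions of \cref{assmpt_prod_top}).
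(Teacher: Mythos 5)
Your proof is correct, but your argument for the nontrivial inclusion $\StochCausalSol \subseteq \tau_*(\Sc_{\varphi_*} \cap \X_{\infty})$ takes a genuinely different route from the paper's. The paper lifts $\mu \in \StochCausalSol$ by the deterministic right-inverse $j^+$ of the truncation: $j^+_*\mu \in \Sc_{\varphi_*}$ follows from the intertwining relation $\varphi^n \circ T^n \circ j^+ \circ \tau = j^+ \circ \tau \circ \varphi^n \circ T^n$ already used for \cref{lem_stoch_sol_closed}, and causality of $j^+_*\mu$ is essentially free because $(j^+)^{-1}(\Sigma_{\Uc}) \subseteq \Sigma_{\Uc,0}$: under $j^+$ the future inputs are a measurable function of the past inputs, so causality of $\mu$ transfers directly through \cref{rem_CondInd_transform}.(iii), with no further conditional-independence calculus. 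You instead construct a lift whose bi-infinite input marginal $\Xi$ is extracted from an arbitrary element of $\Sc_{\varphi_*}$ above $\nu$, invoke \cref{lem_causal_at_zero}.(iii) for a causal-at-zero coupling, obtain membership in $\Sc_{\varphi_*}$ from the support characterization (\cref{prop_abstr_support_sol} together with $\Sc_{\varphi} = \tau^{-1}(\DetSol)$), and then upgrade causality at time zero to causality at all times by the contraction (chain-rule) property of conditional independence. Your route is heavier, since it imports the representation machinery behind \cref{lem_causal_at_zero}.(iii), which the paper's proof of this particular lemma avoids entirely; but it buys something real: your lift retains a non-degenerate input law (in effect you re-prove the first half of \cref{lem_CondInd_shift}), and your contraction step is a substantial shortcut for precisely the statement that the paper's \cref{lem_CondInd_shift} establishes by a lengthy explicit construction with the maps $g^-$, $g^+$, and $\gamma$: from $\tau^{-1}(\Sigma_{\Xc,t}) \CondInd{\tau^{-1}(\Sigma_{\Uc,t})} \tau^{-1}(\Sigma_{\Uc,0})$ and $\tau^{-1}(\Sigma_{\Xc,t}) \CondInd{\tau^{-1}(\Sigma_{\Uc,0})} \Sigma_{\Uc}$, the chain rule yields $\tau^{-1}(\Sigma_{\Xc,t}) \CondInd{\tau^{-1}(\Sigma_{\Uc,t})} \Sigma_{\Uc}$ in one line. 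Finally, the worry in your closing sentence is unfounded: the contraction/chain rule (see \cite[Chapter 8]{Kallenberg2021}) is a purely measure-theoretic fact valid for arbitrary sub-sigma-algebras of any probability space; the Polish assumptions of \cref{assmpt_prod_top} are needed for \cref{lem_causal_at_zero}.(iii) and for the full-measure-of-the-support argument, not for the contraction step itself.
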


\begin{proof}
	The inclusion $\tau_*(\Sc_{\varphi_*} \cap \X_{\infty}) \subseteq \StochCausalSol$ is immediate from \cref{rem_causal_comp}.
Conversely, suppose we are given an element $\mu \in \StochCausalSol$.
We show that $j^+_*\mu$ belongs to $\Sc_{\varphi_*} \cap \X_{\infty}$.
We had already seen in the proof of \cref{lem_stoch_sol_closed} that $j^+_*(\StochSol) \subseteq \Sc_{\varphi_*}$.
It remains to be shown that $j^+_*\mu$ is causal.
Since $\mu$ is causal and $(j^+)^{-1}(\Sigma_{\Uc}) \subseteq \Sigma_{\Uc,0}$, we know that $\Sigma_{\Xc,t} \CondInd{\Sigma_{\Uc,t}} (j^+)^{-1}(\Sigma_{\Uc})$ wrt $\mu$ for all $t \in \Z_-$.
By \cref{rem_CondInd_transform}.(iii) and the fact that $(j^+)^{-1}(\tau^{-1}(\Sigma_{\Uc,t})) = \Sigma_{\Uc,t}$ and $(j^+)^{-1}(\tau^{-1}(\Sigma_{\Xc,t})) = \Sigma_{\Xc,t}$, we find that $\tau^{-1}(\Sigma_{\Xc,t}) \CondInd{\tau^{-1}(\Sigma_{\Uc,t})} \Sigma_{\Uc}$ wrt $j^+_*\mu$.
This is equivalent to causality of $j^+_*\mu$ as noted in \cref{rem_causal_comp}.
\end{proof}

As our last technical ingredient, we show that measures that are causal at time zero and whose truncation agrees with a causal measure are themselves causal.

\begin{lemma}
\label{lem_CondInd_shift}
	Let $\mu \in \X_{\infty}$ and $\mu' \in \X_0$ satisfy $\tau_*\mu = \tau_*\mu'$.
Then, $\mu' \in \X_{\infty}$.
Furthermore, if $\mu \in \Sc_{\varphi_*}$, then $\mu' \in \Sc_{\varphi_*}$.
\end{lemma}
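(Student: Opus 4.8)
The plan is to prove the two assertions separately, handling first the causality claim $\mu' \in \X_{\infty}$ and then the attractor claim. Throughout, write $\mathcal{G} := \Sigma_{\Xc}^- \vee \Sigma_{\Uc}^-$ for the sigma-algebra on $\XS^- \times \US$ generated by the full state sequence together with the truncated input; since $\XS^-$ and $\US^-$ are Polish, this equals $\tau^{-1}$ of the Borel sigma-algebra of $\XS^- \times \US^-$, so the hypothesis $\tau_*\mu = \tau_*\mu'$ is exactly the statement that $\mu$ and $\mu'$ agree on $\mathcal{G}$. Recall from \cref{rem_causal_comp} that, for $t \in \Z_-$, the measure $\mu'$ is causal at time $t$ if and only if $\tau^{-1}(\Sigma_{\Xc,t}) \CondInd{\tau^{-1}(\Sigma_{\Uc,t})} \Sigma_{\Uc}$ wrt $\mu'$, where $\tau^{-1}(\Sigma_{\Xc,t})$ records the states at times $\le t$ and $\tau^{-1}(\Sigma_{\Uc,t})$ the inputs at times $\le t$, and that $\tau^{-1}(\Sigma_{\Xc,0}) = \Sigma_{\Xc}^-$ and $\tau^{-1}(\Sigma_{\Uc,0}) = \Sigma_{\Uc}^-$. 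Since $\mu' \in \X_0$ already supplies causality at $t = 0$, it remains to establish causality of $\mu'$ at each fixed $t \le -1$.

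For such a $t$, the plan is to combine two conditional independences. First, from $\mu' \in \X_0$ we have $\Sigma_{\Xc}^- \CondInd{\Sigma_{\Uc}^-} \Sigma_{\Uc}$ wrt $\mu'$, and since $\tau^{-1}(\Sigma_{\Xc,t}) \subseteq \Sigma_{\Xc}^-$ this restricts (via \cref{eq_CondInd_def}) to $\tau^{-1}(\Sigma_{\Xc,t}) \CondInd{\Sigma_{\Uc}^-} \Sigma_{\Uc}$ wrt $\mu'$. Second, from causality of $\mu$ at time $t$ we have $\tau^{-1}(\Sigma_{\Xc,t}) \CondInd{\tau^{-1}(\Sigma_{\Uc,t})} \Sigma_{\Uc}$ wrt $\mu$, which, upon discarding the future inputs by restricting the right factor to its sub-sigma-algebra $\Sigma_{\Uc}^- \subseteq \Sigma_{\Uc}$, yields $\tau^{-1}(\Sigma_{\Xc,t}) \CondInd{\tau^{-1}(\Sigma_{\Uc,t})} \Sigma_{\Uc}^-$ wrt $\mu$. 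The three sigma-algebras appearing here all lie inside $\mathcal{G}$, so this last relation is determined by the restriction of the measure to $\mathcal{G}$; as $\mu$ and $\mu'$ agree on $\mathcal{G}$, it therefore also holds wrt $\mu'$. This transfer is the crux of the argument and the only place the hypothesis $\tau_*\mu = \tau_*\mu'$ is used; it rests on the standard fact that a conditional-independence relation among sub-sigma-algebras of $\mathcal{G}$ depends on the measure only through its restriction to $\mathcal{G}$ (conditional expectations of $\mathcal{G}$-measurable functions given sub-sigma-algebras of $\mathcal{G}$ are insensitive to changes of the measure off $\mathcal{G}$).

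It then remains to chain the two relations $\tau^{-1}(\Sigma_{\Xc,t}) \CondInd{\Sigma_{\Uc}^-} \Sigma_{\Uc}$ and $\tau^{-1}(\Sigma_{\Xc,t}) \CondInd{\tau^{-1}(\Sigma_{\Uc,t})} \Sigma_{\Uc}^-$, both now wrt $\mu'$, into $\tau^{-1}(\Sigma_{\Xc,t}) \CondInd{\tau^{-1}(\Sigma_{\Uc,t})} \Sigma_{\Uc}$. I would carry this out through the characterization \cref{eq_CondInd_alt_def}: for every bounded $\tau^{-1}(\Sigma_{\Xc,t})$-measurable $\beta$, the first relation gives $\ExpC{\mu'}{\beta}{\Sigma_{\Uc}} = \ExpC{\mu'}{\beta}{\Sigma_{\Uc}^-}$ (using $\Sigma_{\Uc}^- \subseteq \Sigma_{\Uc}$) and the second gives $\ExpC{\mu'}{\beta}{\Sigma_{\Uc}^-} = \ExpC{\mu'}{\beta}{\tau^{-1}(\Sigma_{\Uc,t})}$ (using $\tau^{-1}(\Sigma_{\Uc,t}) \subseteq \Sigma_{\Uc}^-$); composing yields $\ExpC{\mu'}{\beta}{\Sigma_{\Uc}} = \ExpC{\mu'}{\beta}{\tau^{-1}(\Sigma_{\Uc,t})}$, which is precisely \cref{eq_CondInd_alt_def} for the desired relation. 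Letting $t \le -1$ range over $\Z_-$ then gives causality of $\mu'$ at every time, i.e. $\mu' \in \X_{\infty}$.

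For the final assertion, suppose $\mu \in \Sc_{\varphi_*}$. By \cref{prop_abstr_support_sol} this is equivalent to $\mathrm{support}(\mu) \subseteq \Sc_{\varphi}$, and by \cref{prop_det_sol_attractor} we have $\Sc_{\varphi} = \tau^{-1}(\DetSol)$, which is closed because $\DetSol$ is closed and $\tau$ is continuous. Hence $\mu \in \Sc_{\varphi_*}$ is equivalent to $\mu(\tau^{-1}(\DetSol)) = 1$. Now $\mu(\tau^{-1}(\DetSol)) = \tau_*\mu(\DetSol) = \tau_*\mu'(\DetSol) = \mu'(\tau^{-1}(\DetSol))$, so $\mu'(\tau^{-1}(\DetSol)) = 1$, whence $\mathrm{support}(\mu') \subseteq \tau^{-1}(\DetSol) = \Sc_{\varphi}$ (again using that $\tau^{-1}(\DetSol)$ is closed and $\XS^- \times \US$ is completely regular, exactly as in the proof of \cref{prop_stoch_sol}). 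A second application of \cref{prop_abstr_support_sol} gives $\mu' \in \Sc_{\varphi_*}$, completing the argument.
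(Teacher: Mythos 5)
Your proof is correct, and it differs from the paper's in instructive ways. For the causality claim you isolate exactly the same two ingredients as the paper does: the relation $\tau^{-1}(\Sigma_{\Xc,t}) \CondInd{\tau^{-1}(\Sigma_{\Uc,t})} \tau^{-1}(\Sigma_{\Uc,0})$ wrt $\mu'$, transferred from $\mu$ because all three sigma-algebras are pullbacks under $\tau$ and $\tau_*\mu = \tau_*\mu'$ (the paper does this transfer via \cref{rem_CondInd_transform}.(iii), you via the equivalent observation that such a relation depends only on the restriction of the measure to $\Sigma_{\Xc}^- \vee \Sigma_{\Uc}^-$), and the relation $\tau^{-1}(\Sigma_{\Xc,t}) \CondInd{\tau^{-1}(\Sigma_{\Uc,0})} \Sigma_{\Uc}$ wrt $\mu'$ coming from $\mu' \in \X_0$. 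The difference is how the two are chained: you run the standard contraction property of conditional independence through \cref{eq_CondInd_alt_def} and the tower property, whereas the paper performs the same contraction with the randomization machinery of \cref{lem_CondInd_repr}, constructing kernels $g^-$, $g^+$ for the conditional laws, composing them, and invoking the representation lemma again. Your route is shorter and does not need the standard-Borel structure required for the existence of those representations (though that structure is available under \cref{assmpt_prod_top} anyway). For the attractor claim your argument is genuinely different: the paper uses the already-established first part together with the uniqueness statement of \cref{lem_causal_at_zero}.(iii), comparing $\mu'$ with $\varphi_*^n T_*^n \mu'$ for each $n$; you instead observe that, by \cref{prop_abstr_support_sol} combined with $\Sc_{\varphi} = \tau^{-1}(\DetSol)$ from \cref{prop_det_sol_attractor} and closedness of $\DetSol$, membership in $\Sc_{\varphi_*}$ is equivalent to $\tau_*\mu(\DetSol) = 1$ and hence depends on $\mu$ only through $\tau_*\mu$. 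This is cleaner, is consistent with how the paper itself argues in the proof of \cref{prop_stoch_sol}, and actually proves a slightly stronger statement: the second assertion of the lemma holds for any $\mu' \in \Pstate$ with $\tau_*\mu' = \tau_*\mu$, with no causality hypothesis on $\mu$ or $\mu'$ and no reliance on the first part.
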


\begin{proof}
	Fix $t \in \Z_-$.
We need to show that $(T^{-t})_*\mu' \in \X_0$.
Abbreviate $V_1 := T^{-t} \circ \pi_{\XS^-} \circ \tau$, $V_2 := T^{-t} \circ \pi_{\US^-} \circ \tau$, $V := \pi_{\US^-} \circ \tau$, and $V_3 := \pi_{\US}$.
Let $g^- \colon \US^- \times [0,1] \rightarrow \US^-$ and $g^+ \colon \US^- \times [0,1] \rightarrow \US$ be measurable maps such that $(\pi_{\US^-} \times g^-)_*( (V_2)_*\mu' \otimes \Lambda ) = (V_2 \times V)_*\mu'$ and $(\pi_{\US^-} \times g^+)_*( V_*\mu' \otimes \Lambda ) = (V \times V_3)_*\mu'$, respectively, whose existence we recalled in \cref{rem_CondInd_repr_add} ($\US^-$ and $\US$ are Polish by \cref{assmpt_prod_top}).
In particular, $T^{-t} \circ g^- = \pi_{\US^-}$ holds $(V_2)_*\mu' \otimes \Lambda$-a.s., since $\US^-$ is Polish.
Let $\gamma = (\gamma_1,\gamma_2) \colon [0,1] \rightarrow [0,1] \times [0,1]$ be a measurable map such that $\gamma_*\Lambda = \Lambda \otimes \Lambda$.
Then, let $g \colon \US^- \times [0,1] \rightarrow \US$ be given by $g(\Seq{u},\lambda) = g^+(g^-(\Seq{u},\gamma_1(\lambda)),\gamma_2(\lambda))$.
This map satisfies $(\pi_{\US^-} \times g)_*( (V_2)_*\mu' \otimes \Lambda ) = (V_2 \times V_3)_*\mu'$.
By \cref{rem_causal_comp}, we need to show that $\tau^{-1}(\Sigma_{\Xc,t}) \CondInd{\tau^{-1}(\Sigma_{\Uc,t})} \Sigma_{\Uc}$ wrt $\mu'$.
By the same remark, the hypotheses, and the fact that $\tau^{-1}(\Sigma_{\Xc,t}) \subseteq \tau^{-1}(\Sigma_{\Xc,0})$, we know that $\tau^{-1}(\Sigma_{\Xc,t}) \CondInd{\tau^{-1}(\Sigma_{\Uc,t})} \tau^{-1}(\Sigma_{\Uc,0})$ and $\tau^{-1}(\Sigma_{\Xc,t}) \CondInd{\tau^{-1}(\Sigma_{\Uc,0})} \Sigma_{\Uc}$ wrt $\mu'$.
Let $G^- \colon \XS^- \times \US^- \times [0,1] \rightarrow \XS^- \times \US^- \times [0,1]$ and $G^+ \colon \XS^- \times \US^- \times [0,1] \rightarrow \XS^- \times \US^- \times \US$ be the maps $G^-(\Seq{x},\Seq{u},\lambda) = (\Seq{x},g^-(\Seq{u},\gamma_1(\lambda)),\gamma_2(\lambda))$ and $G^+(\Seq{x},\Seq{u},\lambda) = (\Seq{x},\Seq{u},g^+(\Seq{u},\lambda))$, respectively.
By \cref{lem_CondInd_repr}, we have $G^-_*( (V_1 \times V_2)_*\mu' \otimes \Lambda ) = (V_1 \times V)_*\mu' \otimes \Lambda$ and $G^+_*( (V_1 \times V)_*\mu' \otimes \Lambda ) = (V_1 \times V \times V_3)_*\mu'$.
In particular, the map $G' := (\pi_{\XS^-} \times (T^{-t} \circ \pi_{\US^-}) \times \pi_{\US}) \circ G^+ \circ G^-$ satisfies $G'_*( (V_1 \times V_2)_*\mu' \otimes \Lambda ) = (V_1 \times V_2 \times V_3)_*\mu'$.
Let $G \colon \XS^- \times \US^- \times [0,1] \rightarrow \XS^- \times \US^- \times \US$ be the map $G(\Seq{x},\Seq{u},\lambda) = (\Seq{x},\Seq{u},g(\Seq{u},\lambda))$.
Then, $G = G'$ holds $(V_1 \times V_2)_*\mu' \otimes \Lambda$-a.s., since $T^{-t} \circ g^- = \pi_{\US^-}$ holds $(V_2)_*\mu' \otimes \Lambda$-a.s.
Applying \cref{lem_CondInd_repr} once more, we have found that indeed $\tau^{-1}(\Sigma_{\Xc,t}) \CondInd{\tau^{-1}(\Sigma_{\Uc,t})} \Sigma_{\Uc}$ wrt $\mu'$.

Now, suppose $\mu \in \Sc_{\varphi_*}$.
To see that $\mu' \in \Sc_{\varphi_*}$, fix $n \in \N$ and consider $\mu^n := \varphi^n_*T^n_*\mu'$.
By \cref{lem_X_inf_closed_fibres}, $\mu^n \in \X_0$.
Note that $\tau \circ \varphi^n \circ T^n = \tau \circ \varphi^n \circ T^n \circ j^+ \circ \tau$.
Thus,
\begin{equation*}
	\tau_*\mu^n
	= \tau_*\varphi^n_*T^n_*\mu'
	= \tau_*\varphi^n_*T^n_*j^+_*\tau_*\mu'
	= \tau_*\varphi^n_*T^n_*j^+_*\tau_*\mu
	= \tau_*\varphi^n_*T^n_*\mu
	= \tau_*\mu
	= \tau_*\mu'.
\end{equation*}
At the same time, $(\pi_{\US})_*\mu^n = (\pi_{\US})_*\mu'$ since $\pi_{\US} \circ \varphi^n \circ T^n = \pi_{\US}$.
By the uniqueness in \cref{lem_causal_at_zero}.(iii), we must have $\mu^n = \mu'$.
\end{proof}

\begin{proof}[\Pf{prop_stoch_per_sol}]
	By \cref{lem_X_inf_closed_fibres}, the restriction $\phi := \varphi_*|_{\X_{\infty}}$ is an autonomous dynamical system on $\X_{\infty}$ and a bundle map over $\sigma_*$.
Note that $\Sc_{\phi} = \Sc_{\varphi_*} \cap \X_{\infty}$.
We show that the causal stochastic ESP of the state-space system transfers to the USP of $\phi$.
Let $\Xi \in \Pin$.
Then, there is a unique $\mu^- \in \StochCausalSol$ with $(\pi_{\US^-})_*\mu^- = \tau_*\Xi$.
Take an element $\mu \in \Sc_{\varphi_*} \cap \X_{\infty}$ with $\tau_*\mu = \mu^-$.
By \cref{lem_causal_at_zero}.(iii), there is a unique $\mu' \in \X_0$ with $\tau_*\mu' = \mu^-$ and $(\pi_{\US})_*\mu' = \Xi$.
By \cref{lem_CondInd_shift}, $\mu' \in \Sc_{\phi}$.
This concludes the USP of $\phi$.
By \cref{cor_USP_driver_comp}, $\Sc_{\phi} = \Sc_{\phi}^{\mathrm{sync}}$.
Now, suppose $\Xi^- \in \Pin^-$ is periodic under $T_*$ with minimal period $n \in \N$.
Let $\mu^- \in \StochCausalSol$ be the unique causal stochastic solution for the input $\Xi^-$.
By an application of the Kolmogorov extension theorem, there exists some $\Xi \in P(\US)$ that is periodic under $\sigma_*$ with minimal period $n$ and that satisfies $\tau_*\Xi = \Xi^-$; this is where we use the hypothesis $\US = \tau^{-1}(\US^-)$ --- see \cref{lem_Kolmogorov} in the appendix for details.
Let $\mu \in \Sc_{\phi}$ be the unique element with $(\pi_{\US})_*\mu = \Xi$, which we know from above satisfies $\tau_*\mu = \mu^-$.
By \cref{prop_driver_comp_sol}, $\mu \in \SPer{\phi}{\sigma_*}$.
Thus, $\phi^n(\mu) = \mu$, and $n$ is minimal.
In particular, $T^n_*\mu^- = \tau_*T^n_*\mu = \tau_*T^n_*\phi^n(\mu) = \tau_*\mu = \mu^-$.
\end{proof}

\begin{proof}[\Pf{prop_stoch_causal_ESP}]
	We claim that the restriction of $(\pi_{\US})_*$ to $\X_0$ has weakly compact fibers.
Once this is shown, \cref{prop_USP_struct_sol} implies that $\Sc_{\varphi_*} \cap \X_{\infty}$ has non-empty fibers (note that the fibers in \cref{lem_causal_at_zero}.(ii) are weakly closed), and then \cref{lem_causal_trunc} concludes the proof.
Let $\Xi \in \Pin$.
By \cref{lem_causal_at_zero}.(iii), there exists a bijection $\alpha \colon \Pstate^- \cap (\pi_{\US^-})_*^{-1}(\tau_*\Xi) \rightarrow \X_0 \cap (\pi_{\US})_*^{-1}(\Xi)$.
The domain of $\alpha$ is compact by the hypothesis.
As in the proof of \cref{lem_push_forw_cont}, the map $\alpha$ is continuous with respect to the weak topologies.
Thus, the image of $\alpha$ is weakly compact.
\end{proof}

Next, we apply \cref{prop_const_card} to the dynamical system $\varphi_*$ to deduce \cref{prop_stoch_FMP_mESP_out}.
As in the deterministic case, \cref{prop_stoch_FMP_mESP_sol} is recovered by taking the readout to be the identity.

\begin{proof}[\Pf{prop_stoch_FMP_mESP_out}]
	Let us verify that \cref{prop_const_card} is applicable with $\X = \Pstate$, $\X_{\infty} = \X_{\infty}$, $\Zc = \Pin$, $\Zc_- = \Pin^-$, $\Y = \Pout^-$, $\pi = (\pi_{\US})_*$, $\phi = \varphi_*$, $\tau = \tau_*$, and $\eta = (H \circ \tau)_*$.
With these choices, by \cref{lem_causal_trunc},
\begin{equation*}
	\eta(\Sc_{\phi} \cap \X_{\infty} \cap (\tau \circ \pi)^{-1}(\Xi))
	= H_*(\tau_*(\Sc_{\varphi_*}) \cap \Pcausal) \cap \pi_{\US^-}^{-1}(\Xi)
	= \StochCausalOut(\Xi)
\end{equation*}
for any $\Xi \in \Pin^-$.
Hence, if we find that the three hypotheses of \cref{prop_const_card} are satisfied, then the conclusion of \cref{prop_const_card} is exactly the desired statement.
Hypothesis (i):
Let $\mu,\mu' \in \Sc_{\varphi_*} \cap \X_{\infty}$ satisfy $H_*\tau_*T_*\mu = H_*\tau_*T_*\mu'$ and $(\pi_{\US})_*\mu = (\pi_{\US})_*\mu'$.
We need to show that $H_*\tau_*\mu = H_*\tau_*\mu'$.
Let $H' \colon \XS^- \times \US \rightarrow \YS^- \times \US$ be the extension of the readout determined by $\tau \circ H' = H \circ \tau$ and $\pi_{\US} \circ H' = \pi_{\US}$.
It is evident from this and \cref{rem_CondInd_transform} that $H'_*(\X_0)$ is a subset of the analogously defined $\Y_0 \subseteq P(\YS^- \times \US)$.
By the relations determining $H'$ and the hypotheses on $\mu$ and $\mu'$, we have $\tau_*H'_*T_*\mu = \tau_*H'_*T_*\mu'$ and $(\pi_{\US})_*H'_*T_*\mu = (\pi_{\US})_*H'_*T_*\mu'$.
\cref{lem_CondInd_injective} yields $H'_*T_*\mu = H'_*T_*\mu'$.
Let $g \colon \Yc \times \US^- \rightarrow \Yc$ be a measurable extension of the measurable map promised by the assumption that the system measurably distinguishes reachable states.
Consider the map $\Gc \colon \YS^- \times \US \rightarrow \YS^- \times \US^-$ given by $\Gc(\Seq{y},\Seq{u}) = ((\Seq{y},g(\seq{y}{0},\sigma(\Seq{u}))),(\tau \circ \sigma)(\Seq{u}))$.
This map satisfies $\Gc \circ H' \circ T(\Seq{x},\Seq{u}) = H \circ \tau(\Seq{x},\Seq{u})$ for all $(\Seq{x},\Seq{u}) \in \Sc_{\varphi}$.
This and \cref{prop_abstr_support_sol} yield
\begin{equation*}
	H_*\tau_*\mu
	= \Gc_*H'_*T_*\mu
	= \Gc_*H'_*T_*\mu'
	= H_*\tau_*\mu'.
\end{equation*}
Hypothesis (ii):
Let $\Xi,\Xi' \in \Pin$ satisfy $\tau_*\Xi = \tau_*\Xi'$, and suppose
$\mu \in \Sc_{\varphi_*} \cap \X_{\infty} \cap (\pi_{\US})_*^{-1}(\Xi')$.
We need to show that $\tau_*\mu \in \tau_*(\Sc_{\varphi_*} \cap \X_{\infty} \cap (\pi_{\US})_*^{-1}(\Xi))$.
By \cref{lem_causal_at_zero}.(iii), there exists a unique $\mu' \in \X_0$ with $\tau_*\mu' = \tau_*\mu$ and $(\pi_{\US})_*\mu' = \Xi$.
By \cref{lem_CondInd_shift}, we have $\mu' \in \Sc_{\varphi_*} \cap \X_{\infty}$, as desired.
Hypothesis (iii):
This is stipulated in \cref{assmpt_concat_conv} (recall that $\tau_*(\Pin) = \Pin^-$).
\end{proof}

\section{Conclusion}
\label{sec_conclusion}

We have developed a comprehensive theoretical framework for understanding fading memory and solution stability in state-space systems with both deterministic and stochastic inputs.
Our results show that, generically, these systems exhibit fading memory even in the absence of classical ESP conditions, thus explaining the success of RC models that do not satisfy standard contractivity requirements.

In the stochastic setting, we proposed a new distributional notion of a solution based on attractor dynamics on spaces of probability measures.
This approach naturally recovers the notion of a solution defined through almost sure equality of the state equation, and it leads to a coherent theory that captures essential features one expects from stochastic solutions.
We extend several results on the ESP and fading memory from the deterministic to the stochastic setting, and rigorously discuss differences that arise such as the intricacies of probabilistic causality.

Our work builds on and extends the abstract dynamical systems perspective initiated in earlier foundational studies, enabling a unified treatment that is both rigorous and broadly applicable.
Key concepts such as fading memory and causal structures appear naturally and can be embedded into other context where they appear naturally such as stochastic filtering (see \cref{sec_app_inference}).

We considered stability of solutions as functions of the input.
Future work may discuss robustness of the model --- both deterministically and stochastically.
When an unknown system is learned with a reservoir computing model trained on data, the learned model is a perturbation of the unknown model in function space.
Understanding the robustness of such a perturbation is a crucial aspect of mathematical learning theory.

\acks{%
The authors acknowledge partial financial support from the School of Physical and Mathematical Sciences of the Nanyang Technological University.
The second author is funded by an Eric and Wendy Schmidt AI in Science Postdoctoral Fellowship at the Nanyang Technological University.
The authors thank Lyudmila Grigoryeva, G.\ Manjunath, and Michael Greinecker for helpful comments and discussions.}

\appendix

\section{Lemmas from topology and measure theory}
\label{app_sec_technical}

A map is proper if it is closed and has compact fibers, and it is quasi-proper if preimages of compact sets are compact.
The following result is mostly standard in topology \cite[p.\ 97ff]{Bourbaki1998}.
We review a proof for the part that is not standard.

\begin{lemma}
\label{lem_proper}
	Let $\Xc$ and $\Yc$ be Hausdorff spaces and $f \colon \Xc \rightarrow \Yc$ be continuous.
Then, the following are equivalent.
\begin{enumerate}[\upshape (i)]\itemsep=0em
\item
The map $f$ is proper.
\item
The map $f$ is closed and quasi-proper.
\item
For any net $(x_i)_{i \in I} \subseteq \Xc$ and any cluster point $y \in \Yc$ of the net $(f(x_i))_{i \in I}$ there exists a cluster point $x \in \Xc$ of the net $(x_i)_{i \in I}$ with $f(x) = y$.
\end{enumerate}
Using the equivalence to {\upshape (iii)}, it becomes clear that if $f$ admits a continuous left-inverse, then $f$ is proper.
\end{lemma}

\begin{proof}
	It is straight-forward to see that (iii) $\Rightarrow$ (ii) $\Rightarrow$ (i).
We show that (i) $\Rightarrow$ (iii).
For any $i \in I$, let $A_i$ be the closure of the set $\{ x_j \colon j \geq i \}$ so that $y \in \overline{f(A_i)}$.
Since $f$ is closed, $\overline{f(A_i)} = f(A_i)$.
Thus, $A_i \cap f^{-1}(y)$ is non-empty.
These sets have the finite intersection property.
Indeed, given finitely many $i_1,\dots,i_N \in I$, there exists some $i \in I$ with $i_n \leq i$ for all $1 \leq n \leq N$, and then $A_{i_1} \cap \dots \cap A_{i_N} \cap f^{-1}(y)$ contains the non-empty set $A_i \cap f^{-1}(y)$.
Compactness of $f^{-1}(y)$ implies that $\bigcap_{i \in I} A_i \cap f^{-1}(y)$ is non-empty.
Take an element $x$ from that intersection.
If $x$ was not a cluster point of $(x_i)_{i \in I}$, then there existed some $i \in I$ and an open set $U \subseteq \Xc$ with $x \in U$ such that $x_j \notin U$ for all $j \geq i$.
But then $A_i \cap U$ were empty, contradicting $x \in A_i \cap U$.
\end{proof}

In the following lemma, we flesh out an application of the Kolmogorov extension theorem that we needed in the proof of \cref{prop_stoch_per_sol}.
Below, $\US^-$ and $\tau^{-1}(\US^-)$ are endowed with the product topologies.

\begin{lemma}
\label{lem_Kolmogorov}
	Let $\Uc$ be a Suslin space and $\US^- \subseteq \Uc^{\Z_-}$ be Borel measurable.
Denote the truncation $\Uc^{\Z} \rightarrow \Uc^{\Z_-}$ by $\tau$.
Suppose $\Xi^- \in P(\US^-)$ is periodic under shifts.
Then, there exists some $\Xi \in P(\tau^{-1}(\US^-))$ that is periodic under shifts with the same minimal period as $\Xi^-$ and satisfies $\tau_*\Xi = \Xi^-$.
\end{lemma}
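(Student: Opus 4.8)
The plan is to build $\Xi$ from its finite-dimensional distributions (fdds) via the Kolmogorov extension theorem, using the period-$n$ structure of $\Xi^-$ to transport its one-sided fdds onto the two-sided index set $\Z$. Let $n \in \N$ be the minimal period of $\Xi^-$, that is, $T_*^n\Xi^- = \Xi^-$ with no smaller positive power of $T_*$ fixing $\Xi^-$. For each finite $S = \{t_1 < \dots < t_k\} \subseteq \Z_-$ write $\mu^-_S \in P(\Uc^S)$ for the law of $(\seq{U}{t_1},\dots,\seq{U}{t_k})$ under a realization $\Seq{U}$ of $\Xi^-$. Unravelling the identity $T_*^n\Xi^- = \Xi^-$, one sees that, under the canonical index-shift identification $\Uc^{S} \cong \Uc^{S-n}$, we have $\mu^-_S = \mu^-_{S-n}$ for every $S \subseteq \Z_-$ (note that $S-n \subseteq \Z_-$ automatically, as $n \geq 1$); iterating gives $\mu^-_S = \mu^-_{S-jn}$ for all $j \in \N_0$ with $S-jn \subseteq \Z_-$.

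For an arbitrary finite $S \subseteq \Z$, I would pick the least $j \in \N_0$ with $S - jn \subseteq \Z_-$ and set $\mu_S \in P(\Uc^S)$ to be $\mu^-_{S-jn}$ carried back to $\Uc^S$ by the index-shift. The family $(\mu_S)_S$ is well defined because any two admissible shifts differ by a multiple of $n$, so the resulting measures coincide by the iterated periodicity above; and it is Kolmogorov-consistent because each $\mu_S$ is, after a common shift, a coordinate marginal of the single measure $\Xi^-$, so marginalizing $\mu_S$ to a subset $S' \subseteq S$ returns $\mu_{S'}$. Applying the Kolmogorov extension theorem then yields $\tilde\Xi \in P(\Uc^{\Z})$ realizing these fdds. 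This is the step where the Suslin hypothesis is essential: on a Suslin space every finite Borel measure is Radon and the projective limit exists as a Radon probability measure, so the theorem applies in our non-Polish setting (cf.\ the measure-theoretic background in \cite{Bogachev2007}).

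It remains to check the three conclusions. For $S \subseteq \Z_-$ the admissible shift is $j = 0$, so the negative-time fdds of $\tilde\Xi$ equal those of $\Xi^-$; hence $\tau_*\tilde\Xi = \Xi^-$, and in particular $\tilde\Xi(\tau^{-1}(\US^-)) = \Xi^-(\US^-) = 1$, so $\tilde\Xi$ is concentrated on the Borel set $\tau^{-1}(\US^-)$ and restricts to a measure $\Xi \in P(\tau^{-1}(\US^-))$. For periodicity, observe that for any finite $S \subseteq \Z$ both $\mu_S$ and $\mu_{S+n}$ equal $\mu^-_{S-\ell n}$ for all sufficiently large $\ell$, hence $\mu_{S+n} = \mu_S$; since the fdd of $\sigma_*^n\tilde\Xi$ at $S$ is $\mu_{S+n}$, this gives $\sigma_*^n\tilde\Xi = \tilde\Xi$, which descends to $\sigma_*^n\Xi = \Xi$ because $\tau^{-1}(\US^-)$ and its $\sigma^n$-preimage both carry full $\tilde\Xi$-measure. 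Finally, the period is exactly $n$: were $\sigma_*^m\tilde\Xi = \tilde\Xi$ for a proper divisor $m$ of $n$, then comparing the fdds of $\tau_*\tilde\Xi = \Xi^-$ at index sets $S$ and $S-m \subseteq \Z_-$ would yield $T_*^m\Xi^- = \Xi^-$, contradicting minimality of $n$.

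The hard part will be the invocation of the Kolmogorov extension theorem beyond the usual Polish or standard-Borel regime: one must verify that the Suslin assumption on $\Uc$ really does furnish the projective limit as a genuine (Radon) Borel probability measure on $\Uc^{\Z}$, and then carry out the bookkeeping ensuring the resulting two-sided measure is supported on $\tau^{-1}(\US^-)$ and has minimal period precisely $n$ rather than a proper divisor.
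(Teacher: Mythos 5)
Your proposal is correct and follows essentially the same route as the paper's proof: both transport the finite-dimensional marginals of $\Xi^-$ to arbitrary finite index sets in $\Z$ by shifting with multiples of the period $n$, verify Kolmogorov consistency via the $T_*^n$-invariance of $\Xi^-$, invoke the Radon/Suslin version of the Kolmogorov extension theorem (Bogachev), and then obtain $\tau_*\Xi = \Xi^-$, concentration on $\tau^{-1}(\US^-)$, $n$-periodicity by matching finite-dimensional distributions, and minimality of the period by pushing any smaller period down through $\tau_*$. The only differences are cosmetic (the paper fixes the canonical shift $t_{\Lambda} = \sup_{t \in \Lambda}\abs{nt}$ and checks periodicity on the sets $\{-k,\dots,k\}$, while you take the least admissible shift and argue for all finite index sets).
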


\begin{proof}
	Let $T$ denote the right-shift operator on $\Uc^{\Z_-}$ and on $\Uc^{\Z}$.
Let $n  \in \N$ be the minimal period of $\Xi^-$ under $T_*$.
Given any finite subset $\Lambda \subseteq \Z$, let $p_{\Lambda} \colon \Uc^{\Z} \rightarrow \Uc^{|\Lambda|}$, $\Seq{u} \mapsto (\seq{u}{t})_{t \in \Lambda}$ and $p_{\Lambda}^- \colon \US^- \rightarrow \Uc^{|\Lambda|}$, $\Seq{u} \mapsto (\seq{u}{t-t_{\Lambda}})_{t \in \Lambda}$, where $t_{\Lambda} := \sup_{t \in \Lambda} \abs{nt}$.
Given any two nested finite subsets $\Lambda' \subseteq \Lambda \subseteq \Z$, let $k_{\Lambda',\Lambda} := (t_{\Lambda} - t_{\Lambda'})/n \in \N_0$ and $p_{\Lambda',\Lambda} \colon \Uc^{|\Lambda|} \rightarrow \Uc^{|\Lambda'|}$, $(\seq{u}{t})_{t \in \Lambda} \mapsto (\seq{u}{t})_{t \in \Lambda'}$.
Then, $p_{\Lambda',\Lambda} \circ p_{\Lambda}^- = p_{\Lambda'}^- \circ T^{n k_{\Lambda',\Lambda}}$.
This shows that the measures $\Xi_{\Lambda} := (p_{\Lambda}^-)_*\Xi^-$ satisfy the consistency condition of the Kolmogorov extension theorem \cite[Corollary 7.7.2]{Bogachev2007}.
By that theorem, there exists a measure $\Xi' \in P(\Uc^{\Z})$ that satisfies $(p_{\Lambda})_*\Xi' = \Xi_{\Lambda}$ for all finite subsets $\Lambda \subseteq \Z$.
Let us show that $T^n_*\Xi' = \Xi'$.
Since we work with the product topology of a Suslin space, it suffices to show that $(p_{\Lambda_k})_*T^n_*\Xi' = (p_{\Lambda_k})_*\Xi'$ for all $k \in \N$, where $\Lambda_k := \{-k,\dots,k\}$.
Fix $k \in \N$.
Let $\Lambda' := \{-k-n,\dots,k-n\}$ and $\Lambda := \Lambda_k \cup \Lambda'$.
Note that $p_{\Lambda_k} \circ T^n = p_{\Lambda'}$ and $p_{\Lambda'}^- = p_{\Lambda_k}^- \circ T^n$.
Therefore, $(p_{\Lambda_k})_*T^n_*\Xi' = \Xi_{\Lambda'} = \Xi_{\Lambda_k} = (p_{\Lambda_k})_*\Xi'$, as desired.
A similar argument shows that $\tau_*\Xi' = \Xi^-$.
In particular, the period of $\Xi'$ under $T_*$ cannot be smaller than $n$.
Finally, let $\Xi$ be the restriction of $\Xi'$ to $\tau^{-1}(\US^-)$, which is a probability measure since $\Xi'(\tau^{-1}(\US^-)) = \tau_*\Xi'(\US^-) = \Xi^-(\US^-) = 1$.
\end{proof}

For the sake of completeness, we provide a proof of \cref{lem_CondInd_repr}.

\begin{proof}[\Pf{lem_CondInd_repr}]
	This lemma is an adaptation of \cite[Proposition 8.20]{Kallenberg2021}.
Let us spell out the details.
To ensure the existence of an independent uniform random variable, consider the product probability space $(\Omega',\Ac',\Pb') := (\Omega \times [0,1],\Ac \vee \Bc([0,1]),\Pb \otimes \Lambda)$ and the projection $\vartheta \colon \Omega' \rightarrow [0,1]$, where $\Bc([0,1])$ is the Borel sigma-algebra on $[0,1]$.
Let $\Sigma_i' \subseteq \Ac'$ be the sigma-algebra generated by the extended maps $V_i' \colon \Omega' \rightarrow \Vc_i$, $(\omega,\lambda) \mapsto V_i(\omega)$, $i = 1,2,3$.
Note that $\Sigma_1 \CondInd{\Sigma_2} \Sigma_3$ wrt $\Pb$ if and only if $\Sigma_1' \CondInd{\Sigma_2'} \Sigma_3'$ wrt $\Pb'$.
Since $\vartheta$ is independent of $(V_1',V_2')$, we trivially have $\Sigma_1' \CondInd{\Sigma_2'} \Bc([0,1])$ wrt $\Pb'$.
By \cref{rem_CondInd_transform}.(i), it also holds that $\Sigma_1' \CondInd{\Sigma_2'} \Sigma_2' \vee \Bc([0,1])$ wrt $\Pb'$.
Thus, by \eqref{eq_CondInd_alt_def},
\begin{equation}
\label{lem_CondInd_repr_pf}
	\ExpC{\Pb'}{\beta(g(V_2',\vartheta))}{\Sigma_1' \vee \Sigma_2'}
	= \ExpC{\Pb'}{\beta(g(V_2',\vartheta))}{\Sigma_2'} \quad \Pb'\text{-a.s.}
\end{equation}
for any bounded measurable function $\beta \colon \Vc_3 \rightarrow \R$.
Now, suppose (ii) holds.
Given any bounded $\Sigma_3'$-measurable function $\beta_3' \colon \Omega' \rightarrow \R$, take a measurable function $\beta \colon \Vc_3 \rightarrow \R$ such that $\beta_3' = \beta \circ V_3'$.
Two applications of \eqref{eq_CondInd_transform} with the map $V_1' \times V_2' \times V_3'$, the hypothesis (ii), and \eqref{lem_CondInd_repr_pf} yield
\begin{equation*}
	\ExpC{\Pb'}{\beta_3'}{\Sigma_1' \vee \Sigma_2'}
	= \ExpC{\Pb'}{\beta(g(V_2',\vartheta))}{\Sigma_1' \vee \Sigma_2'}
	= \ExpC{\Pb'}{\beta(g(V_2',\vartheta))}{\Sigma_2'}
	= \ExpC{\Pb'}{\beta_3'}{\Sigma_2'} \quad \Pb'\text{-a.s.}
\end{equation*}
This shows that $\Sigma_1' \CondInd{\Sigma_2'} \Sigma_3'$ wrt $\Pb'$.
Conversely, suppose (i) holds.
Let $\hat{\Sigma}_i$ be the sigma-algebra on $\Vc_1 \times \Vc_2 \times \Vc_3$ generated by the projection onto $\Vc_i$.
Let $\beta \colon \Vc_3 \rightarrow \R$ be bounded and measurable.
By \eqref{eq_CondInd_transform}, we have
\begin{equation*}
	\ExpC{(V_1 \times V_2 \times V_3)_*\Pb}{\beta \circ \pi_{\Vc_3}}{\hat{\Sigma}_1 \vee \hat{\Sigma}_2} \circ (V_1 \times V_2 \times V_3)
	= \ExpC{\Pb'}{\beta \circ V_3'}{\Sigma_1' \vee \Sigma_2'} \quad \Pb'\text{-a.s.}
\end{equation*}
This, in turn, equals $\ExpC{\Pb'}{\beta \circ V_3'}{\Sigma_2'}$ $\Pb'$-a.s.\ by (i).
The hypothesis on the function $g$ yields
\begin{equation*}
	\ExpC{\Pb'}{\beta \circ V_3'}{\Sigma_2'}
	= \ExpC{\Pb'}{\beta(g(V_2',\vartheta))}{\Sigma_2'} \quad \Pb'\text{-a.s.}
\end{equation*}
By another application of \eqref{eq_CondInd_transform} and \eqref{lem_CondInd_repr_pf}, we find that
\begin{equation*}
	\ExpC{\Pb'}{\beta(g(V_2',\vartheta))}{\Sigma_2'}
	= \ExpC{G_*((V_1 \times V_2)_*\Pb \otimes \Lambda)}{\beta \circ \pi_{\Vc_3}}{\hat{\Sigma}_1 \vee \hat{\Sigma}_2} \circ G \quad \Pb'\text{-a.s.}
\end{equation*}
We have shown that
\begin{equation*}
	\ExpC{(V_1 \times V_2 \times V_3)_*\Pb}{\beta \circ \pi_{\Vc_3}}{\hat{\Sigma}_1 \vee \hat{\Sigma}_2} \circ (V_1 \times V_2 \times V_3)
	= \ExpC{G_*((V_1 \times V_2)_*\Pb \otimes \Lambda)}{\beta \circ \pi_{\Vc_3}}{\hat{\Sigma}_1 \vee \hat{\Sigma}_2} \circ G \quad \Pb'\text{-a.s.}
\end{equation*}
for any bounded measurable $\beta \colon \Vc_3 \rightarrow \R$.
The tower property of conditional expectation yields
\begin{equation*}
	\ExpS{(V_1 \times V_2 \times V_3)_*\Pb}{\beta_1 \beta_2 \beta_3}
	= \ExpS{G_*((V_1 \times V_2)_*\Pb \otimes \Lambda)}{\beta_1 \beta_2 \beta_3}
\end{equation*}
for any bounded $\hat{\Sigma}_i$-measurable $\beta_i \colon \Vc_1 \times \Vc_2 \times \Vc_3 \rightarrow \R$, $i=1,2,3$.
We conclude the lemma.
\end{proof}

\section{Statistical inference}
\label{sec_app_inference}

Consider a triple of stochastic processes $(\Seq{Y},\Seq{X},\Seq{U}) = (\seq{Y}{t},\seq{X}{t},\seq{U}{t})_{t \in \Z_-}$ with values in $\YS^- \times \XS^- \times \US^-$ such that the joint law of $(\Seq{X},\Seq{U})$ is a stochastic solution of the state-space system and the joint law of $(\Seq{Y},\Seq{U})$ a stochastic output.
Throughout the paper, we worked with outputs of the form $\seq{Y}{t} = h(\seq{X}{t})$ because this was sufficient to capture the echo state and fading memory properties we were interested in.
In practice, one often models the output by $\seq{Y}{t} = h(\seq{X}{t},\seq{V}{t})$ with an additional measurement noise $\seq{V}{t}$.
In (Bayesian) filtering, one is concerned with sampling from the conditional law of the hidden state $\seq{X}{0}$ given the observations $\Seq{Y}$.
The linear Gaussian case is well understood, that is, when the inputs $\seq{U}{t}$ and the measurement noise $\seq{V}{t}$ are i.i.d.\ Gaussian random variables and $f$ and $h$ are linear transformations.
This classical filtering problem is solved by the Kalman filter, pioneered by the eponymous Kalman \cite{Kalman1960}.
More general cases are typically tackled with the likes of the extended or unscented Kalman filters, which first linearize the transformations and approximate the inputs and measurement noise by i.i.d.\ Gaussians and then apply the classical Kalman filter \cite{Saerkkae2013};
or with particle filters and their various extensions, which pass carefully sampled `particles' through the non-linear transformations and weigh them appropriately \cite{KantasEtal2015}.
Crucially, the theoretical analyses of all these filtering methods assume the hidden states to be Markovian.
This is guaranteed if the inputs are mutually independent but fails in general if the inputs exhibit statistical interdependence.

Consider the more general case in which the inputs are not mutually independent but $\Seq{U}$ itself is Markovian.
Since past states typically depend on past inputs and the latter may exhibit interdependence with the current input, we cannot hope that the states $\Seq{X}$ are Markovian.
However, the sequence of augmented states $\seq{X}{t}' := (\seq{X}{t},\seq{U}{t})$ is Markovian if there is no direct interdependence between $\seq{U}{t}$ and past states, that is, $\Seq{X}'$ is Markovian if the joint law of $(\Seq{X},\Seq{U})$ is causal.
This highlights the need to understand the appearance of causal structures in stochastic solutions of state-space systems.
Although we expect real-world systems to be causal, theoretical justifications are lacking.
The states of a (random or noisy) real-world system, whose evolution is governed by a state map and which has long been evolving in the past, represent a stochastic solution in the forward attractor of the system, more precisely, in one of its omega limit sets.
If the underlying inputs of the system (random factors, noise, controls) are periodic, then we know from \cref{prop_omega_limit_struc_sol} that the system is indeed causal.
It is an interesting open problem to extend this result to non-periodic inputs.

Augmenting the states by the inputs and, going further, augmenting them by (finitely many) past states and past inputs is a known trick in filtering.
In particular, this has been done in \cite{LindstenEtal2012} to obtain an approximately Markovian model out of a non-Markovian one.
That the augmented states are indeed approximately Markovian relies on a fading memory assumption posed in that work.
Fading memory also appears as ground for some convergence results for filtering algorithms \cite{KantasEtal2015}.
In light of such results being enabled by fading memory assumptions, it is a critical theoretical contribution that we established generic fading memory of general state-space systems in \cref{thrm_det_ESP_FMP,thrm_stoch_ESP_FMP}.

\bib{acm}{bibfile_FR}

\end{document}